\def\ps@pprintTitle{%
 \let\@oddhead\@empty
 \let\@evenhead\@empty
 \def\@oddfoot{}%
 \let\@evenfoot\@oddfoot}
\newcommand{\pedroskip}[1]{{\color{black}(skipped text)}}
\newcommand{\lucskip}[1]{{\color{black}(skipped text)}}
\newcommand{\angelikaskip}[1]{{\color{black}(skipped text)}}
\newcommand{\new}[1]{{\color{black}{#1}}}
\newcommand{\fixed}[1]{{\color{black}{#1}}}
\newtheorem{definition}[algocf]{Definition}
\newtheorem{example}[algocf]{Example}
\newtheorem{lemma}[algocf]{Lemma}
\newtheorem{theorem}[algocf]{Theorem}
\newtheorem{proposition}[algocf]{Proposition}
\newcounter{lstlabelcounter}
\newcommand{\probloginline}[1]{\mintinline{problog.py:ProbLogLexer -x}{#1}}
\newcommand{\mathprobloginline}[1]{\text{\mintinline{problog.py:ProbLogLexer -x}{#1}}}
\newcommand{\dcplpinline}[1]{\mintinline[breaklines, breakafter=,]{text}{#1}}
\newcommand{\functor}[2]{\probloginline{#1}$/{#2}$\xspace}
\newcommand{\mathfunctor}[2]{${#1}/{#2}$\xspace}
\newcommand{\predicate}[2]{\probloginline{#1}$/{#2}$\xspace}
\newcommand{\mathpredicate}[2]{${#1}/{#2}$\xspace}
\DeclareMathOperator*{\lpif}{\mathtt{{:}{\---}}}
\DeclareMathOperator*{\prob}{\mathtt{{:}{:}}}
\newcommand{\herbrandbase}{\ensuremath{\mathcal{A}}}
\newcommand{\evidenceset}{\ensuremath{\mathcal{E}}}
\newcommand{\queryset}{\ensuremath{\mathcal{Q}}}
\newcommand{\ive}[1]{\llbracket#1\rrbracket}
\newcommand{\varset}[1]{\ensuremath{\mathbf{#1}}}
\newcommand{\dcpprogram}{\ensuremath{\mathcal{P}}\xspace}
\newcommand{\boolval}{\ensuremath{b}\xspace}
\newcommand{\probabilitymeasure}{\ensuremath{P}\xspace}
\newcommand{\distributionfunctors}{\ensuremath{{\Delta}}\xspace}
\newcommand{\arithmeticfunctors}{\ensuremath{{\Phi}}\xspace}
\newcommand{\comparisonpredicates}{\ensuremath{{\Pi}}\xspace}
\newcommand{\samplespace}{\ensuremath{\Omega}\xspace}
\newcommand{\samplefunction}{\ensuremath{\omega}\xspace}
\newcommand{\distdb}{\ensuremath{\mathcal{D}}\xspace}
\newcommand{\comparisonfacts}{\ensuremath{\mathcal{F}}\xspace}
\newcommand{\measurecomparisonfacts}{\ensuremath{\probabilitymeasure_\comparisonfacts}\xspace}
\newcommand{\sigmaalgebra}{\ensuremath{ \Sigma}\xspace}
\newcommand{\probabilityspace}{\ensuremath{ \mathbb{P}}\xspace}
\newcommand{\program}{\ensuremath{{\mathcal{P}}}\xspace}
\newcommand{\adfreeprogram}{\ensuremath{{\program^{*}}}\xspace}
\newcommand{\dfprogram}{\ensuremath{{\program^{DF}}}\xspace}
\newcommand{\dfadfreeprogram}{\ensuremath{{\program^{DF,*}}}\xspace}
\newcommand{\logicprogram}{\ensuremath{{\mathcal{R}}}\xspace}
\newcommand{\dclauses}{\ensuremath{\mathcal{C}}\xspace}
\newcommand{\headsdc}{\ensuremath{\mathcal{H}}\xspace}
\newcommand{\randomtermset}{\ensuremath{\mathcal{T}}\xspace}
\newcommand{\randomvariableset}{\ensuremath{\mathcal{V}}\xspace}
\newcommand{\contextfunc}{\ensuremath{K}\xspace}
\newcommand{\ialw}{IALW\xspace}
\newcommand{\dcplpsty}{DC-PLP\xspace}
\newcommand{\problogsty}{ProbLog\xspace}
\newcommand{\dcsty}{Distributional Clauses\xspace}
\newcommand{\dfplpsty}{DF-PLP\xspace}
\newcommand{\dcproblogsty}{DC-ProbLog\xspace}
\newcommand{\blogsty}{BLOG\xspace}
\newcommand{\pyrosty}{Pyro\xspace}
\newcommand{\extendedprismsty}{Extended PRISM\xspace}
\newcommand{\problogsys}{ProbLog2\xspace}
\DeclareMathOperator*{\E}{\mathbb{E}}
\newcommand{\differential}{\mathop{}\!{d}}
\newcommand*\footcircled[1]{\tikz[baseline=(char.base)]{
                \node[shape=circle,draw,inner sep=1pt] (char) {\footnotesize {#1}};}}
\newcommand{\ie}{i.e.\xspace}
\newcommand{\eg}{e.g.\xspace}
\newcommand{\cf}{cf.\xspace}
\newcommand{\etc}{etc.\xspace}
\numberwithin{equation}{section}
\setlist[itemize]{itemsep=0pt}
\setlist[enumerate]{itemsep=0pt}
\begin{document}

\setcounter{page}{1}
\renewcommand{\thepage}{\roman{page}}

\setcounter{section}{0}
\renewcommand{\thesection}{R\arabic{section}}

\renewcommand{\theequation}{R\arabic{section}.\arabic{equation}}


\setcounter{page}{1}
\renewcommand{\thepage}{\arabic{page}}

\setcounter{section}{0}
\renewcommand{\thesection}{\arabic{section}}
\renewcommand{\theequation}{\arabic{section}.\arabic{equation}}

\begin{frontmatter}

    \title{Declarative Probabilistic Logic Programming\\in Discrete-Continuous Domains}

    \author[orebro]{Pedro Zuidberg Dos Martires}
    \author[orebro,kuleuven,leuvenai]{Luc De Raedt}
    \author[kuleuven,leuvenai]{Angelika Kimmig}

    \address[orebro]{Centre for Applied Autonomous Sensor Systems, Örebro University, Sweden}
    \address[kuleuven]{Department of Computer Science, KU Leuven, Belgium}
    \address[leuvenai]{Leuven.AI, Belgium}


    \begin{abstract}
        Over the past three decades, the logic programming paradigm has been successfully expanded
        to support probabilistic modeling, inference and learning. The resulting paradigm
        of probabilistic logic programming (PLP) and its programming languages owes much of its success to a declarative semantics,
        the so-called distribution semantics. However, the distribution semantics is limited to discrete random variables only.
        While PLP has been extended in various ways for supporting hybrid, that is, mixed discrete and continuous
        random variables, we are still lacking a declarative semantics for hybrid PLP that not only generalizes
        the distribution semantics and the modeling language but also the standard inference algorithm
        that is based on knowledge compilation.
        We contribute the {\em measure semantics} together with the hybrid PLP language
        \dcproblogsty (where DC stands for distributional clauses) and its inference engine {\em infinitesimal algebraic likelihood weighting} (IALW).
        These have the original distribution semantics, standard PLP languages
        such as \problogsty, and standard inference engines for PLP based on knowledge compilation as special cases.
        Thus, we generalize the state of the art of PLP towards hybrid PLP in three different aspects: semantics, language and inference.
        Furthermore, IALW is the first inference algorithm for hybrid probabilistic programming based on knowledge compilation.
    \end{abstract}

    \begin{keyword}
        Probabilistic Programming \sep Declarative Semantics \sep Discrete-Continuous Distributions \sep Likelihood Weighting \sep Logic Programming \sep Knowledge Compilation \sep Algebraic Model Counting
    \end{keyword}

\end{frontmatter}


\section{Introduction}
\label{sec:introduction}

Probabilistic logic programming (PLP) is at the crossroads of two parallel developments in artificial intelligence and machine learning.
On the one hand, there are the probabilistic programming languages with built-in support for machine learning. These languages can be used to represent very expressive -- Turing equivalent
-- probabilistic models, and they provide primitives for inference and learning.
On the other hand, there is the longstanding open question for integrating the two main frameworks
for reasoning, that is logic and probability, within a common framework \citep{russell2015unifying,deraedt:mc16}.
Probabilistic logic programming~\citep{de2015probabilistic,riguzzi2018foundations} fits both paradigms and goes back to at least the early 90s
with seminal works by \citet{sato1995statistical} and \citet{poole1993probabilistic}. 
\citeauthor{poole1993probabilistic} introduced ICL, the Independent Choice Logic, an elegant
extension of the Prolog programming language, and \citeauthor{sato1995statistical} introduced
the {\em distribution semantics} for probabilistic logic programs in conjunction with a learning algorithm based on expectation maximization (EM).
The PRISM language \citep{sato1995statistical}, which utilizes the distribution semantics and the EM learning algorithm constitutes, to the best of the authors' knowledge, the very first probabilistic programming language with support for machine learning.

Today, there is a plethora of probabilistic logic programming languages, most of which are based
on extensions of the ideas by \citeauthor{sato1995statistical} and~\citeauthor{poole1993probabilistic} ~\citep{sato1997prism,kersting2000bayesian,vennekens2004logic,de2007problog}. However, the vast majority of them is restricted to discrete, and more precisely finite categorical, random variables. 
When merging logic with probability, the restriction to discrete random variables
is natural and allowed Sato to elegantly extend the logic program semantics
into the celebrated distribution semantics.

\begin{example}[Probabilistic Logic Program]
  \label{example:intro}

  \new{
  Consider the probabilistic logic program below (written in \problogsty syntax~\citep{fierens2015inference}), where we model the behavior of two machines. We first state that there are two machines (Line~\ref{line:exintro:1}). Subsequently, we say that the temperature has a probability of $0.8$ to be low (Line \ref{line:exintro:temp}) and that the cooling of the machines works with probability $0.99$ and $0.95$ respectively (Lines \ref{line:exintro:cool1} and \ref{line:exintro:cool2}) These labeled facts are called \textit{probabilistic facts}. We also model that the machines themselves work: either if the cooling is working (Line~\ref{line:exintro:work1}) or if the temperature is low (Line~\ref{line:exintro:work2}).
  }
  \begin{problog*}{linenos}
machine(1). machine(2). @\label{line:exintro:1}@
0.8::temperature(low). @\label{line:exintro:temp}@
0.99::cooling(1).@\label{line:exintro:cool1}@
0.95::cooling(2).@\label{line:exintro:cool2}@

works(N):- machine(N), cooling(N). @\label{line:exintro:work1}@
works(N):- machine(N), temperature(low). @\label{line:exintro:work2}@

    \end{problog*}
    \new{
We can now, for instance, ask for the conditional probability of the first machine working given that the second one works:
    $$
    P(\mathprobloginline{works(1)}=\top \mid \mathprobloginline{works(2)}=\bot).
    $$
The (exact) inference algorithm currently implemented in \problogsys~\citep{fierens2015inference,dries2015problog2} then returns as answer the probability $\approx 0.998$.
}
\end{example}

While \citeauthor{sato1995statistical}'s extension of logic programming to the probabilistic domain is elegant, it also imposes an important restriction to random variables with countable sample spaces. This raises the question of how to extend the distribution semantics towards hybrid, \ie discrete-continuous, random variables.

Defining the semantics of probabilistic programming language with support for random variables with infinite and possibly uncountable sample spaces is a much harder task. This can be observed when looking at the development of important imperative and functional probabilistic programming languages~\citep{goodman2008church,mansinghka2014venture} that support  continuous random variables. 
These works initially focused on inference, typically using  a particular Monte Carlo approach, yielding  an operational or procedural semantics. It is only follow-up work that started to address a declarative semantics for such hybrid probabilistic programming languages.
~\citep{staton2016semantics,wu2018discrete}.

The PLP landscape has experienced similar struggles. First approaches for  hybrid PLP languages were achieved by restricting the language~\citep{gutmann2010extending,gutmann2011magic,islam2012inference} or via recourse to procedural semantics~\citep{nitti2016probabilistic}.  The key contributions of this paper are:

\begin{enumerate}[label={\bf C\arabic*}]
  \setcounter{enumi}{0}
\item We introduce the {\em measure semantics} for mixed discrete-continuous probabilistic logic programming.
Our {\em measure semantics} (based on measure theory) extends \citeauthor{sato1995statistical}'s distribution semantics and supports:
\begin{itemize}
    \item  \label{item:k1} a countably infinite number of random variables,
     \item a uniform treatment of discrete and continuous random variables,
      \item a clear separation between probabilistic dependencies and logical dependencies by extending the ideas of \citet{poole2010probabilistic} to the hybrid domain.
    \end{itemize}
\item \label{item:k3} We introduce \dcproblogsty, an expressive PLP language in the discrete-continuous domain,
which incorporates the{\em measure semantics}. 
 \dcproblogsty has standard discrete PLP, \eg ProbLog~\citep{fierens2015inference}, as a special case (unlike other hybrid PLP languages~\citep{gutmann2011magic,nitti2016probabilistic}).
    \item \label{item:k2}  We introduce a novel inference algorithm, {\em infinitesimal algebraic likelihood weighting} (IALW), for hybrid PLPs,
which extends the standard knowledge compilation approach used in PLP towards mixed discrete continuous distributions, and  which 
provides an operational semantics for hybrid PLP.
\end{enumerate}

In essence, our contributions ~\ref{item:k1} and ~\ref{item:k3} generalize both  Sato's distribution semantics and discrete PLP such that in the absence of random variables with infinite sample spaces we recover the \problogsty language and declarative semantics. It is noteworthy that our approach of disentangling probabilistic dependencies and logical ones, allows us to express more general distributions than state-of-the-art approaches such as~\citep{gutmann2011magic,nitti2016probabilistic,azzolini2021semantics}. 
Contribution \ref{item:k2} takes this generalization to the inference level: in the exclusive presence of finite random variables our IALW algorithm reduces to \problogsty's current inference algorithm~\citep{fierens2015inference}.

\section{A Panoramic Overview}
\label{sec:panorama}

Before diving into the technical details of the paper we first give a high-level overview of the \dcproblogsty language. This will also serve us as roadmap to the remainder of the paper.  
We will first introduce, by example, the \dcproblogsty language (Section~\ref{sec:panorama_semantics}). The formal syntax and semantics of which are discussed in Section~\ref{sec:semantics} and Section~\ref{sec:dcproblog}.
In Section~\ref{sec:panorama_inference} we demonstrate how to perform probabilistic inference in \dcproblogsty by translating a queried \dcproblogsty program to an algebraic circuit~\citep{zuidbergdosmartires2019transforming}. 
Before giving the details of this transformation in Section~\ref{sec:dc2smt} and Section~\ref{sec:alw}, we define conditional probability queries on \dcproblogsty programs (Section~\ref{sec:inference-tasks}).
The paper ends with a discussion on related work (Section~\ref{sec:related}) and concluding remarks in Section~\ref{sec:conclusions}.

Throughout the paper, we assume that the reader is familiar with basic concepts from logic programming and probability theory. We provide, however, a brief refresher of basic logic programming concepts in Appendix~\ref{app:lp_new}. In Appendix~\ref{app:table} we give a tabular overview of notations used, and in the remaining sections of the appendix we give proofs to propositions and theorems or discuss in more detail some of the more subtle technical issues.

\subsection{Panorama of the Syntax and Semantics}
\label{sec:panorama_semantics}

\begin{example}
    \label{example:sweets_dc}
    A shop owner creates random bags of  sweets with two independent random binary properties (\emph{large} and \emph{balanced}).  He first picks the number of red sweets from a Poisson distribution whose parameter is 20 if the bag is large and 10 otherwise, and then the number of yellow sweets from a Poisson whose parameter is the number of red sweets if the bag is balanced and twice that number otherwise. His favorite type of bag contains more than 15 red sweets and no less than 5 yellow ones. We model this in \dcproblogsty as follows:
\begin{problog*}{linenos}
0.5::large.
0.5::balanced.

red ~ poisson(20) :- large.
red ~ poisson(10) :- not large.

yellow ~ poisson(red) :- balanced.
yellow ~ poisson(2*red) :- not balanced.

favorite :- red > 15, not yellow < 5.
\end{problog*}

In the first two lines we encounter {\em probabilistic facts}, a well-known modelling construct in discrete PLP languages (\eg~\citep{de2007problog}). Probabilistic facts, written as logical facts labeled with a probability, express Boolean random variables that are true with the probability specified by the label. For instance, \probloginline{0.5::large} expresses that \probloginline{large} is true with probability \probloginline{0.5} and false with probability \probloginline{1-0.5}.

In Lines 4 to 8, we use {\em distributional clauses} (DCs); introduced by~\citet{gutmann2011magic} into the PLP literature. DCs are of the syntactical form \probloginline{v~d:-b} and define random variables \probloginline{v} that are distributed according to the distribution \probloginline{d}, given that \probloginline{b} is true.
For example, Line 4 specifies that when \probloginline{large} is true, \probloginline{red} is distributed according to a Poisson distribution. 
We call the left-hand argument of a \predicate{~}{2} predicate in infix notation a {\em random term}. The random terms in the program above are \probloginline{red} and \probloginline{yellow}.

Note how random terms reappear in three distinct places in the \dcproblogsty program. First, we can use them as parameters to other distributions, \eg \probloginline{yellow ~ poisson(red)}. 
Second, we can use them within arithmetic expression, such as \probloginline{2*red} in Line 8.
Third, we can use them in comparison atoms (\probloginline{red>15}) in Line 10. The comparison atoms appear in the bodies of logical rules that express logical consequences of probabilistic event, for example having more than 15 red sweets and less than 5 yellow ones.
\end{example}

Probabilistic facts and distributional clauses are the main modelling constructs to define random variables in probabilistic logic programs.
As they are considered to be fundamental building blocks of a PLP language, the semantics of a language are defined in function of these syntactical constructs (\cf.~\citep{fierens2015inference,gutmann2011magic}). 
We now make an important observation: probabilistic facts and distributional clauses can be deconstructed into a much more fundamental concept, which we call the {\em distributional fact}.  
Syntactically, a distributional fact is of the form \probloginline{v~d}. That is, a distributional clause with an empty body.
As a consequence, probabilistic facts and distributional clauses do not constitute fundamental concepts in PLP but are merely special cases, \ie while helpful for writing concise programs, they are only of secondary importance when it comes to semantics.

\begin{example}\label{ex:sweets_df}
We now rewrite the program in Example~\ref{example:sweets_dc} using distributional facts only. Note how probabilistic facts are actually syntactic \fixed{sugar for distributional facts}. The random variable is now distributed according to a Bernoulli distribution (\probloginline{flip}) and the atom of the probabilistic fact is the head of a rule with a probabilistic comparison in its body (\eg Lines 1 and 2 in the program below). Rewriting distributional facts is more involved. The main idea is to introduce a distinct random term for each distributional clause. Take for example, the random term \probloginline{red} in Example~\ref{example:sweets_dc}. This random term encodes, in fact, two distinct random variables, which we denote in the program below \probloginline{red_large} and \probloginline{red_small}. We now have to propagate this rewrite through the program and replace every occurrence of \probloginline{red} with \probloginline{red_large} and \probloginline{red_small}. This is why we get instead of two distributional clauses for \probloginline{yellow}, four distributional facts. It explains also why we get instead of one rule for \probloginline{favorite} in Example~\ref{example:sweets_dc} four rules now.

    \begin{problog*}{linenos}
rv_large ~ flip(0.5).
large :- rv_large=:=1.
rv_balanced ~ flip(0.5).
balanced :- rv_balanced=:=1.

red_large ~ poisson(20).
red_small ~ poisson(10).

yellow_large_balanced ~ poisson(red_large).
yellow_large_unbalanced ~ poisson(2*red_large).
yellow_small_balanced ~ poisson(red_small).
yellow_small_unbalanced ~ poisson(2*red_small).

favorite :- large, red_large > 15, 
              balanced, not yellow_large_balanced < 5.
favorite :- large, red_large > 15, 
              not balanced, not yellow_large_unbalanced < 5.
favorite :- not large, red_small > 15, 
              balanced, not yellow_small_balanced < 5.
favorite :- not large, red_small > 15, 
              not balanced, not yellow_small_unbalanced < 5.
\end{problog*}
\end{example}

The advantage of using probabilistic facts and distributional clauses is clear. They allow us to write much more compact and readable programs. However, as they do not really constitute fundamental building blocks of PLP, defining the semantics of a PLP language is much more intricate. For this reason we adapt a two-stage approach to define the semantics of \dcproblogsty. We first define the semantics of \dfplpsty (distributional fact PLP), a bare-bones language with no syntactic sugar only relying on distributional facts to define random variables. This happens in Section~\ref{sec:semantics}. During the second stage we define the program transformations to rewrite syntactic sugar (\eg distributional clauses) as distributional facts. The semantics of \dfplpsty and the program transformations then give us the \dcproblogsty language (\cf Section~\ref{sec:dcproblog}).

\subsection{Panorama of the Inference}
\label{sec:panorama_inference}

The part of the paper concerning inference consists of three sections. First, we start in Section~\ref{sec:inference-tasks} to define a query to a \dcproblogsty program.
For instance, we might be interested in the probability
$$
P(\mathprobloginline{favorite}=\top, \mathprobloginline{large}=\bot)
$$
In other words, the joint probability of \probloginline{favorite} being true and \probloginline{large} being false. While the example query above is simply a joint probability, we generalize this in Section~\ref{sec:inference-tasks} to conditional probabilities (possible with zero-probability events in the conditioning set).

Second, we map the queried ground program to a labeled Boolean formula. Contrary to the approach taken by~\citet{fierens2015inference} the labels are not probabilities (as usual in PLP) but indicator functions. This mapping to a labeled Boolean formula happens again in a series of program transformations, which we describe in Section~\ref{sec:dc2smt}. One of these steps is obtaining the relevant ground program to a query. For example, for the query above 
only the last two rules for  \probloginline{favorite} matter.
\begin{problog*}{}
favorite :- not large, rs > 15, balanced, not ysb < 5.
favorite :- not large, rs > 15, not balanced, not ysu < 5.
\end{problog*}
Here, we abbreviated \probloginline{red_small} as \probloginline{rs} and \probloginline{yellow_small_balanced} and \probloginline{yellow_small_unbalanced} as \probloginline{ysb} and \probloginline{ysu}, respectively. We can further rewrite these rules by replacing \probloginline{large} and \probloginline{balanced} with equivalent comparison atoms and pushing the negation into the comparisons:
\begin{problog*}{}
favorite :- rvl=:=0, rs > 15, rvb=:=1, ysb >= 5.
favorite :- rvl=:=0, rs > 15, rvb=:=0, ysu >= 5.
\end{problog*}
Again using abbreviations: \probloginline{rvl} for \probloginline{rv_large} and \probloginline{rvb} for \probloginline{rv_balanced}.

In Section~\ref{sec:alw} we then show how to compute the expected value of the labeled propositional Boolean formula corresponding to these rules by compiling it into an algebraic circuit, which is graphically depicted in Figure~\ref{fig:circuit:panorama}. In order to evaluate this circuit and obtain the queried probability (expected value), we introduce the IALW algorithm.

The idea of IALW is the following: sample the random variables dangling at the bottom of the circuit by sampling parents before children. For instance, we first sample from $\mathprobloginline{poisson}(10)$ (at the very bottom) before sampling from $\mathprobloginline{poisson}(rs)$ using the sampled value of the parent as the parameter of the child. Once we reach the comparison atoms (\eg $ysb\geq5$) we stick in the sampled values for the mentioned random variables. This evaluates the comparisons to either $1$ or $0$, for which we then perform the sums and products as prescribed by the circuit. We get a Monte Carlo estimate of the queried probability by averaging over multiple such evaluations of the circuit.

The method, as sketched here, is in essence the probabilistic inference algorithm Sampo presented in~\citep{zuidbergdosmartires2019exact}. The key contribution of IALW, which we discuss in Section~\ref{sec:alw}, is to extend Sampo such that conditional inference with zero probability events is performed correctly.  

\begin{figure}
	\resizebox{\linewidth}{!}{%

		\tikzstyle{distribution}=[rectangle, text centered, fill=white, draw, dashed,thick]
		\tikzstyle{leaf}=[rectangle, text centered, fill=gray!10, draw,thick]

		\tikzstyle{negate}=[
			rectangle split,
			rectangle split parts=3, 
			rectangle split horizontal,
			text centered,
			rectangle split part fill={gray!10,white,gray!10},
			draw,
			rectangle split draw splits=false,
			anchor=center,
			align=center,
			thick
		]
		\newcommand{\minus}{  ${\bm e^\otimes}$ \nodepart{second} ${{\bm \ominus}}$ \nodepart{third}  \phantom{${\bm e^\otimes}$}}

		\tikzstyle{sumproduct}=[
			rectangle split,
			rectangle split parts=3, 
			rectangle split horizontal,
			text centered,
			rectangle split part fill={gray!10,white,gray!10},
			draw,
			rectangle split draw splits=false,
			anchor=center,
			align=center,
			thick
		]
		\newcommand{\supr}[1]{  \phantom{${\bm e^\otimes}$} \nodepart{second} ${{\bm #1}}$ \nodepart{third} \phantom{${\bm e^\otimes}$}}
		
		\tikzstyle{circuitedge}=[ultra thick, thick,->]
		\tikzstyle{distributionedge}=[thick,->, dashed, in=-90]
		
		\tikzstyle{indexnode}=[draw,circle, inner sep=1pt]				
		
		\begin{tikzpicture}[remember picture]
			
			\node[sumproduct] (r) at (200.54bp,378.0bp) {\supr{\otimes}};
			\draw[ultra thick, thick,->] (r.90) to  ([shift={(0,1)}]r.90);
			
			\node[sumproduct] (l1) [below left = of r] {\supr{\oplus}};
			\node[sumproduct] (r1) [below right = of r,  yshift=-5.5cm]  {\supr{\otimes}};

			\node[leaf] (r21) [below right=of r1,  xshift=-1.3cm] {$\subnode{var_rvl}{rvl}=0$};
			\node[leaf] (r22) [below left=of r1, xshift=1.3cm] {$\subnode{var_rs}{rs}>15$};

			\node[sumproduct] (l21) [below left=of l1, xshift=0.5cm] {\supr{\otimes}};
			\node[sumproduct] (l22) [below right=of l1, xshift=-0.5cm] {\supr{\otimes}};

			\node[leaf] (l31) [below right=of l21,  xshift=-1.5cm] {$\subnode{var_rvb}{rvb}=1$};
			\node[leaf] (l32) [below left=of l21,  xshift=1.5cm] {$\subnode{var_ysb}{ysb}\geq5$};

			\node[distribution] (rvb)  [below=of l1,  yshift=-3.5cm,xshift=-0.5cm] {$\mathprobloginline{flip}(0.5)$};
			\node[distribution] (ysb)  [left=of rvb, xshift=0.4cm ]{$\mathprobloginline{poisson}(\subnode{poisson_rs_ysb}{rs})$};
			\node[distribution] (ysu)  [right=of rvb, xshift=-0.4cm] {$\mathprobloginline{poisson}(2\times \subnode{poisson_rs_ysu}{rs})$};

			\node[leaf] (l33) [below left=of l22,  xshift=1.5cm] {$\subnode{var_rvbbis}{rvb}=0$};
			\node[leaf] (l34) [below right=of l22,  xshift=-1.5cm] {$\subnode{var_ysu}{ysu}\geq5$};

			\node[distribution] (rvl)  [below=of r21] {$\mathprobloginline{flip}(0.5)$};
			\node[distribution] (rs)  [below=of rvb, yshift=-2cm] {$\mathprobloginline{poisson}(10)$};

			\draw[circuitedge] (l1) to  (r.mid);
			\draw[circuitedge] (r1) to  (r.three south |- r.mid);

			\draw[circuitedge] (l21) to  (l1.mid);
			\draw[circuitedge] (l22) to  (l1.three south |- l1.mid);

			\draw[circuitedge] (l21) to  (l1.mid);
			\draw[circuitedge] (l22) to  (l1.three south |- l1.mid);

			\draw[circuitedge] (l32) to  (l21.mid);
			\draw[circuitedge] (l31) to  (l21.three south |- l21.mid);

			\draw[circuitedge] (l33) to  (l22.mid);
			\draw[circuitedge] (l34) to  (l22.three south |- l22.mid);

			\draw[circuitedge] (r22) to  (r1.mid);
			\draw[circuitedge] (r21) to  (r1.three south |- r1.mid);

			\draw[distributionedge,out=90]  (ysb) to (var_ysb);
			\draw[distributionedge,out=90]  (ysu) to (var_ysu);
			\draw[distributionedge,out=90]  (rvb) to (var_rvb);
			\draw[distributionedge,out=90]  (rvb) to (var_rvbbis);

			\draw[distributionedge,out=90]  (rs) to (poisson_rs_ysb);
			\draw[distributionedge,out=90]  (rs) to (poisson_rs_ysu);
			\draw[distributionedge,out=90]  (rs) to (var_rs);

			\draw[distributionedge,out=90]  (rvl) to (var_rvl);

		\end{tikzpicture}
	}
    \captionof{figure}{Graphical representation of the computation graph (\ie algebraic circuit) used to compute the  probability $(\mathprobloginline{favorite}=\top, \mathprobloginline{large}=\bot)$ using the IALW algorithm introduced in Section~\ref{sec:alw}.}
    \label{fig:circuit:panorama}
\end{figure}

\section{\dfplpsty}\label{sec:semantics}

Sato's distribution semantics~\citep{sato1995statistical} start from a probability measure over a countable set of facts $\comparisonfacts$, the so-called \emph{basic distribution}, which he then extends this to a probability measure over the Herbrand interpretations of the full program.
It is worth noting that the basic distribution is defined independently of the logical rules and that the random variables are mutually independent.

In our case, the set $\comparisonfacts$ consists of ground Boolean comparison atoms over the random variables, for which we drop the mutual independence assumption. These comparison atoms form an interface between the random variables (represented as terms) and the logical layer (clauses) that reasons about truth values of atoms.
\fixed{This is inspired by the work of \citet{gutmann2011magic} and \citet{nitti2016probabilistic} on the \dcsty language. We discuss this relationship in more detail in the related work Section~\ref{sec:related} and in Appendix~\ref{sec:dcproblog-dc}.
}

In this section we introduce the syntax and declarative semantics of \dfplpsty -- a probabilistic programming language with a minimal set of built-in predicates and functors. 
We do this in three steps.
Firstly, we discuss distributional facts and the probability measure over random variables they define (Section~\ref{sec:dist_facts_rand_var}).
Secondly, we introduce the Boolean comparison atoms that form the interface layer between random variables and a logic program (Section~\ref{sec:boolean_comparison_atoms}).
Thirdly, we add the logic program itself (Section~\ref{sec:log_cons_bool_comp}).
Fourht, we discuss practical considerations for constructing sets of distributional facts (Section~\ref{sec:fintiedistdb}).
An overview table of the notation related to semantics is provided in Appendix~\ref{app:table}.

\begin{definition}[Reserved Vocabulary]
    \label{def:reserved_vocabulary}
We use the following {\em reserved} vocabulary (built-ins), whose intended meaning is fixed across programs:
\begin{itemize}
    \item a finite set $\distributionfunctors$ of \emph{distribution functors}.
    \item a finite set $\arithmeticfunctors$ of \emph{arithmetic functors}.
    \item A finite set $\comparisonpredicates$ of binary \emph{comparison predicates}, 
    \item the binary predicate \predicate{~}{2} (in infix notation).
\end{itemize}
\end{definition}

Examples of distribution functors that we have already seen in Section~\ref{sec:panorama} are \functor{poisson}{1} and \functor{flip}{1} but may also include further functors such as \functor{normal}{2} to denote normal distributions. Possible arithmetic functors are \functor{*}{2} (\cf Example~\ref{example:sweets_dc}) but also \functor{max}{2}, \functor{+}{2}, \functor{abs}{1}, \etc.
Binary comparison predicates (in Prolog syntax and infix notation) are \predicate{<}{2}, \predicate{>}{2}, \predicate{=<}{2}, \predicate{>=}{2}, \predicate{=:=}{2}, \predicate{=\=}{2}.
The precise definitions of $\distributionfunctors$, $\arithmeticfunctors$ and $\comparisonpredicates$ are left to system designers implementing the language.

\begin{definition}[Regular Vocabulary] \label{def:regular_vocabulary}
    We call an atom $\mu(\rho_1, \dots, \rho_k)$ whose predicate \mathfunctor{\mu}{k} is not part of the reserved vocabulary a \emph{regular} atom. The set of all regular atoms constitutes the regular vocabulary. 
\end{definition}

As a  brief comment on notation: in the remainder of the paper we will usually denote logic program expressions in teletype font (\eg \probloginline{4>x}) when giving examples. When defining new concepts or stating theorems and propositions, we will use the Greek alphabet.

\subsection{Distributional Facts and Random Variables}\label{sec:dist_facts_rand_var}

\begin{definition}[Distributional Fact]\label{def:distfact}
A distributional fact is of the form $\nu \sim \delta$, with $\nu$ a regular ground term, and $\delta$ a ground term whose functor is in $\distributionfunctors$. 
The distributional fact states that the ground term $\nu$ is interpreted as a random variable distributed according to $\delta$.
\end{definition}

\begin{definition}[Sample Space]\label{def:samplespace}
    Let $\nu$ be be a random variable distributed according to $\delta$. The set of possible samples (or values) for $\nu$ is the sample space denoted by $\samplespace_\nu$ and which is determined by $\delta$. We denote a sample from $\samplespace_\nu$ by $\samplefunction(\nu)$, where $\samplefunction$ is the {\em sampling} or {\em value} function. 
\end{definition}

\begin{definition}[Distributional Database]\label{def:distDB}
A \emph{distributional database} is a \new{(not necessarily countable)} set $\distdb=\{\nu_1 \sim \delta_1,\nu_2 \sim \delta_2, \ldots \}$ of distributional facts, with distinct $\nu_i$. We let $\randomvariableset= \{\nu_1, \nu_2, \dots \}$ denote the set of random variables.
\end{definition}

\begin{example}\label{ex:dist_db} 
The following distributional database encodes a Bayesian network with  normally distributed random variables, two of which serve as parameters in the distribution of another one. We thus have $\randomvariableset= \{ \text{\probloginline{x}}, \text{\probloginline{y}},\text{\probloginline{z}} \}$.
\begin{problog*}{linenos}
x ~ normal(5,2).
y ~ normal(x,7).
z ~ normal(y,1).
\end{problog*}
\end{example}

\new{
\begin{definition}[Well-Defined Distributional Database]
    \label{def:well-distdb}
    We call a distributional database $\distdb$ well-defined if and only if the product of probability spaces induced by the random variables in $\distdb$ is unique. We denote this product probability space by
    $\probabilityspace_\distdb=(\samplespace_\distdb, \sigmaalgebra_\distdb, \probabilitymeasure_\distdb)$, where $\samplespace_\distdb$ is the sample space, $\sigmaalgebra_\distdb$ the sigma-algebra, and  $\probabilitymeasure_\distdb$ the probability measure. 
\end{definition}

\new{
Note that in Definition~\ref{def:distDB} we do not impose the restriction of having a countable set of random variables. This allows for modelling a rich class of probabilistic models, including random processes that are described via uncountable sets of random variables. We discuss the construction of well-defined databases in Section~\ref{sec:fintiedistdb}.
}

}




\subsection{Boolean Comparison Atoms over Random Variables} \label{sec:boolean_comparison_atoms}

Starting from a well-defined distributional database, we now introduce the \new{probability space $\probabilityspace_\comparisonfacts$ induced by Boolean comparison atoms}, which corresponds to the basic (discrete) distribution in Sato's distribution semantics.

\begin{definition}[Boolean Comparison Atoms]\label{def:comparison-atoms-set}
Let $\distdb$ be a well-defined distributional database. 
A binary \emph{comparison atom} $\gamma_1 {\bowtie} \gamma_2$ over $\distdb$ is a ground atom with predicate $\bowtie \in \comparisonpredicates$. The ground terms $\gamma_1$  and $\gamma_2$ are either random variables in $\randomvariableset$ or terms whose functor is in $\arithmeticfunctors$.
We denote by $\comparisonfacts$ a countable set of \new{$\probabilitymeasure_\distdb$-measurable} 
Boolean comparison atoms over $\distdb$.
\end{definition}

\begin{example}
    Examples of Boolean comparison atoms over the distributional database of Example~\ref{ex:dist_db} include   \probloginline{z>10},  \probloginline{x<y}, \probloginline{abs(x-y)=<1},  and \probloginline{7*x=:=y+5}. 
\end{example}

\new{
\begin{restatable}{proposition}{propomegaf}
    \label{prop:omegaf}
The Boolean comparison atoms $\comparisonfacts$ induce a product sample space $\samplespace_\comparisonfacts$.
\end{restatable}

\begin{proof}
    See Appendix~\ref{app:proof:omegaf}.
\end{proof}

}

\new{

\begin{restatable}{proposition}{proppfsigma}
    \label{prop:pfsigma}
    The Boolean comparison atoms $\comparisonfacts$ induce a sigma-algebra $\sigmaalgebra_\comparisonfacts{\subseteq}\sigmaalgebra_\distdb$.
\end{restatable}

\begin{proof}
    See Appendix~\ref{app:proof:pfsigma}.
\end{proof}

}

\new{

\begin{restatable}{proposition}{proppf}
    \label{prop:pf}
Let $\distdb$ be a well-defined distributional database, the function $\probabilitymeasure_\comparisonfacts$ defined via
$
\probabilitymeasure_\comparisonfacts(A)
=
\frac
{
    \probabilitymeasure_\distdb(A)
}
{
    \probabilitymeasure_\distdb(\samplespace_\comparisonfacts)
}
$ defines a unique probability measure over the sample space $\samplespace_\comparisonfacts$ and the sigma algebra $\sigmaalgebra_\comparisonfacts$.
\end{restatable}

\begin{proof}
    See Appendix~\ref{app:proof:pf}.
\end{proof}

}

\new{

\begin{proposition}
    The triplet $\probabilityspace_\comparisonfacts= (\samplespace_\comparisonfacts, \sigmaalgebra_\comparisonfacts, \probabilitymeasure_\comparisonfacts)$ forms a probability space.
\end{proposition}

\begin{proof}
    This follows immediately from Propositions \ref{prop:omegaf}, \ref{prop:pfsigma}, and \ref{prop:pf}.
\end{proof}
Note that, while \citeauthor{sato1995statistical} refers to $\probabilitymeasure_{\comparisonfacts}$ as the \textit{basic distribution}, we use the term  \textit{basic measure}. This is due to the fact that we do not necessarily have a distribution.
}


\subsection{Logical Consequences of Boolean Comparisons}\label{sec:log_cons_bool_comp}
We now define the semantics of a \dfplpsty program, \ie, extend the basic measure $\measurecomparisonfacts$ over the comparison atoms to a measure over the Herbrand interpretations of a logic program.

\begin{definition}[\dfplpsty Program]\label{def:core-prog}
A \dfplpsty program $\dfprogram= \distdb \cup \logicprogram $ consists of a well-defined distributional database $\distdb$ (Definition~\ref{def:well-distdb}), comparison atoms $\comparisonfacts$ (Definition~\ref{def:comparison-atoms-set}), and a normal logic program $\logicprogram$ where clause heads belong to the regular vocabulary (cf. Definition~\ref{def:regular_vocabulary}), and which can use comparison atoms from $\comparisonfacts$ in their bodies.
\end{definition}

\begin{example}\label{ex:random_choice_dependency}
We further extend the running example.
\begin{problog*}{linenos}
x ~ normal(5,2).
y ~ normal(x,7).
z ~ normal(y,1).
a :- abs(x-y) =< 1. 
b :- not a, z>10.
\end{problog*}
The logic program defines two logical consequences of Boolean comparisons, where \probloginline{a} is true if the absolute difference between random variables \probloginline{x} and \probloginline{y} is at most one, and \probloginline{b} is true if \probloginline{a} is false, and the random variable \probloginline{z} is greater than $10$. 
\end{example}

In order to extend the basic measure to logical consequences, \ie logical rules, we require the notion of a {\em consistent comparisons database} (CCD).
The key idea is that samples of the random variables in $\distdb$ jointly determine a truth value assignment to the comparison atoms in $\comparisonfacts$.

\begin{definition}
    A {\em value assignment}
    $\samplefunction(\randomvariableset)$ is a combined value assignment to all random variables $\randomvariableset=\{\nu_1,\nu_2,... \}$, \ie,  $\samplefunction(\randomvariableset)=(\samplefunction(\nu_1),\samplefunction(\nu_2),\ldots)$.
\end{definition}

\begin{definition}[Consistent Comparisons Database]\label{def:consistent-fact-db}
    Let $\distdb$ be a well-defined distributional database,  $\comparisonfacts=\{\kappa_1, \kappa_2,\ldots\}$ a set of measurable Boolean comparison atoms, and  $\samplefunction(\randomvariableset)$ a  value assignment to all random variables $\randomvariableset=\{ \nu_1,\nu_2,... \}$. 
    We define $I_{\samplefunction(\randomvariableset)}(\kappa_i)=\top$ if $\kappa_i$ is true after setting all random variables to their values under $\samplefunction(\randomvariableset)$, and $I_{\samplefunction(\randomvariableset)}(\kappa_i) = \bot$ otherwise. $I_{\samplefunction(\randomvariableset)}$ induces the consistent comparisons database $\comparisonfacts_{\samplefunction(\randomvariableset)}=\{\kappa_i \mid I_{\samplefunction(\randomvariableset)}(\kappa_i) = \top\}$.
\end{definition}

To define the semantics of a \dfplpsty program $\dfprogram$, we now require that, given a CCD $\comparisonfacts_{\samplefunction(\randomvariableset)}$, the logical consequences in $\dfprogram$ are uniquely defined.
As common in the PLP literature, we achieve this by requiring the program to have a two-valued well-founded model~\citep{van1991well} for each possible value assignment $\samplefunction(\randomvariableset)$.

\begin{definition}[Valid \dfplpsty Program]\label{def:core-valid}
A \dfplpsty program $\dfprogram=\distdb \cup\logicprogram $ is called \emph{valid} if and only if for each CCD $\comparisonfacts_{\samplefunction(\randomvariableset)}$, the logic program $\comparisonfacts_{\samplefunction(\randomvariableset)} \cup  \logicprogram$ has a two-valued well-founded model. 
\end{definition}

We follow the common practice of defining the semantics with respect to ground programs; the semantics of a program with non-ground $\logicprogram$ is defined as the semantics of its grounding with respect to the  Herbrand universe.

\begin{restatable}{proposition}{proppp}
\label{prop:pp}
A valid \dfplpsty program \dfprogram  induces a unique probability measure $\probabilitymeasure_\dfprogram$ over Herbrand interpretations.
\end{restatable}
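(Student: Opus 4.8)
The plan is to construct $\probabilitymeasure_{\dfprogram}$ as the pushforward of the basic distribution $\measurecomparisonfacts$ from Proposition~\ref{prop:pf} along the deterministic map that sends a truth-value assignment on the comparison atoms to the Herbrand interpretation given by the well-founded model it induces together with $\logicprogram$. First I would fix the measurable structure on both spaces. On the space of truth assignments $I\colon \comparisonfacts \to \{\top,\bot\}$ I take the product $\sigma$-algebra generated by the cylinders $\{I \mid I(\kappa)=\top\}$ for $\kappa\in\comparisonfacts$; this is exactly the space on which $\measurecomparisonfacts$ lives. On the space of Herbrand interpretations I take the product $\sigma$-algebra generated by the cylinders $\{J \mid a\in J\}$, one for each ground regular atom $a$. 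In both cases the generating cylinders form a $\pi$-system, so it will suffice to control the construction on them.

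Next I would make the map precise. By Definition~\ref{def:consistent-fact-db} a value assignment $\samplefunction(\randomvariableset)$ influences the logical layer only through the induced consistent comparisons database $\comparisonfacts_{\samplefunction(\randomvariableset)}$, equivalently through the truth assignment $I_{\samplefunction(\randomvariableset)}$ on $\comparisonfacts$. Validity (Definition~\ref{def:core-valid}) guarantees that $\comparisonfacts_{\samplefunction(\randomvariableset)}\cup\logicprogram$ has a unique two-valued well-founded model. Hence there is a single-valued map $f$, defined on the set of truth assignments arising from value assignments (which carries full $\measurecomparisonfacts$-mass), that sends such an $I$ to the Herbrand interpretation given by the well-founded model of $\{\kappa\mid I(\kappa)=\top\}\cup\logicprogram$. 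I then set $\probabilitymeasure_{\dfprogram} := f_\ast\measurecomparisonfacts$, that is, $\probabilitymeasure_{\dfprogram}(B)=\measurecomparisonfacts\!\left(f^{-1}(B)\right)$.

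The crux, and the step I expect to be the main obstacle, is showing that $f$ is measurable, so that this pushforward is well defined. By the choice of generating cylinders it suffices to show that for every ground regular atom $a$ the set $f^{-1}(\{J\mid a\in J\})$ — the set of truth assignments whose well-founded model makes $a$ true — lies in the product $\sigma$-algebra. Here I would invoke the fixpoint characterization of the well-founded semantics: for a countable ground program the two-valued well-founded model is the limit of a monotone fixpoint iteration (for instance the alternating fixpoint) that stabilizes at a countable ordinal, and at every stage the truth value of $a$ is a countable Boolean combination of the generating cylinders for the comparison atoms — a countable disjunction, over the countably many ground clauses, of finite conjunctions over the comparison atoms in their bodies, unfolded through the program. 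Since the product $\sigma$-algebra is closed under countable unions, intersections and complements, every stage, and therefore the limit, yields a measurable set. This establishes measurability of $f$.

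Finally I would record uniqueness. The pushforward of a fixed measure along a fixed measurable map is itself determined, and $\measurecomparisonfacts$ is unique by Proposition~\ref{prop:pf}, which in turn rests on the uniqueness of $\measurerandomvariableset$ from Proposition~\ref{prop:pv}. Moreover, any probability measure on Herbrand interpretations obtained by this construction must agree with $f_\ast\measurecomparisonfacts$ on the generating cylinders, and since these form a $\pi$-system generating the $\sigma$-algebra, the $\pi$-$\lambda$ theorem forces agreement everywhere. This yields the claimed unique measure $\probabilitymeasure_{\dfprogram}$, and the construction is precisely the hybrid analogue of the classical pushforward underlying Sato's distribution semantics. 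The only genuinely delicate point is the measurability of $f$; everything else is standard measure-theoretic bookkeeping.
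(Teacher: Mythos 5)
Your construction is sound in outline and is, at bottom, the same construction the paper uses, packaged differently. The paper follows Sato's recipe directly: it fixes an enumeration $\mu_1,\mu_2,\dots$ of the Herbrand base, defines the finite-dimensional distributions $\probabilitymeasure_\dfprogram^{(n)}(\mu_1=b_1,\dots,\mu_n=b_n)$ as the $\measurecomparisonfacts$-measure of the set of consistent comparison databases whose (two-valued, by validity) well-founded model satisfies the assignment, and invokes the extension to a completely additive measure; the degenerate case $\distdb=\emptyset$ is dispatched separately as ordinary well-founded semantics. Your pushforward $f_\ast\measurecomparisonfacts$ has exactly those $\probabilitymeasure_\dfprogram^{(n)}$ as its finite-dimensional marginals, so the two routes produce the same measure. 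What your route buys: compatibility of the finite-dimensional family is automatic, and uniqueness comes for free from the $\pi$-$\lambda$ theorem. What the paper's route buys: it never has to treat the global well-founded-model map $f$ as an object of study, only one cylinder set per $n$ --- although it silently relies on the very measurability fact that you isolate as the crux, and which it does not prove.

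On that crux your added rigor slightly overreaches in two places. First, the claim that the set of truth assignments arising from value assignments ``carries full $\measurecomparisonfacts$-mass'' presupposes that this set is measurable, which is not obvious; the standard patch is to make $f$ total (e.g., complete the possibly three-valued well-founded model outside the realizable assignments by a fixed convention), observing that comparison atoms never occur as heads in $\logicprogram$, so the marginals on cylinders are unaffected by the choice. Second, and more seriously, in the limit step of your fixpoint argument, per-assignment stabilization at a countable ordinal does not by itself yield measurability of $f^{-1}(\{J \mid a\in J\})$: the stabilization ordinal varies with the input assignment and need not be uniformly bounded below $\omega_1$, so the truth set is an increasing union over uncountably many stages. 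In general such well-founded-model truth sets are only coanalytic ($\Pi^1_1$), hence universally measurable rather than Borel; one must either pass to the completion of $\measurecomparisonfacts$ or restrict to program classes where the iteration provably closes at a fixed countable ordinal. Since the paper leaves exactly this point implicit, your attempt proves more than the paper's own proof does --- but as written, the inference ``every stage is measurable, therefore the limit is measurable'' is the one step that does not go through.
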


\begin{proof}
    See Appendix~\ref{app:proof:pp}.
\end{proof}

\begin{definition}
    We define the declarative semantics of a \dfplpsty program \dfprogram to be the probability measure $\probabilitymeasure_\dfprogram$, and we call this the {\em measure semantics}.  
\end{definition}

In contrast to the imperative semantics of~\citet{nitti2016probabilistic}, in \dfplpsty the connection between comparison atoms and the logic program is purely declarative. That is, logic program negation on comparison atoms negates the (interpreted) comparison. For example, if we have a random variable \probloginline{n}, then \probloginline{n>=2} is equivalent to \probloginline{not n<2}. Such equivalences do not hold in the stratified programs introduced by~\citet{nitti2016probabilistic}.
This then allows the programmer to refactor the logic part as one would expect.

\subsection{Constructing Distributional Databases}
\label{sec:fintiedistdb}

\new{
In the definition of the measure semantics, we simply assumed that the distributional database $\distdb$ was valid, and we forewent an explicit construction of such databases.
When implementing \dcplpsty we would, however, ideally have a prescription for constructing $\distdb$.
A practical choice that is often made in probabilistic logic programming~\citep{kersting2000bayesian,fierens2015inference} restricts the distributional database to be a finite (and consequently countable) set.
For such distributional databases with a countable number of random variables to be meaningful, they have to encode a unique joint distribution over the variables~$\randomvariableset$.
}

We will now provide the conditions under which a finite distributional database is valid, but first we define the concepts of parents and ancestors that, which also hold for non-finite distributional databases.
\begin{definition}[Parents and Ancestors of Random Variables]
    \label{def:df_ancestor}
Let $\distdb$ be a distributional database. For facts $\nu_p\sim\delta_p$ and $\nu_c\sim\delta_c$  in $\distdb$.  The random variable $\nu_p$  is a \emph{parent} of the child random variable $\nu_c$ if and only if  $\nu_p$ appears in $\delta_c$. We define \emph{ancestor} to be the transitive closure of \emph{parent}. A node's ancestor set is the set of all its ancestors.
\end{definition}

\begin{example}[Ancestor Set]
    We graphically depict the ancestor set of the distributional database in Example~\ref{ex:dist_db} in Figure~\ref{fig:ex:dist_db_ancestor}.

    \begin{figure}[h]
        \centering
    \tikzstyle{node}=[circle, text centered, draw,thick]
        \begin{tikzpicture}[remember picture]

            \node[node] (x) {\probloginline{x}};
            \node[node] (y) [right=of x] {\probloginline{y}};
            \node[node] (z) [right=of y] {\probloginline{z}};

			\draw[->,thick] (x) to  (y);
			\draw[->,thick] (y) to  (z);

        \end{tikzpicture}
        \caption{Directed acyclic graph representing the ancestor relationship between the random variables in Example~\ref{ex:dist_db}. The ancestor set of \probloginline{x} is the empty set, the one of \probloginline{y} is $\{\mathprobloginline{x} \}$ and the one of \probloginline{z} is $\{\mathprobloginline{x}, \mathprobloginline{y} \}$.}
        \label{fig:ex:dist_db_ancestor}
    \end{figure}

\end{example}


\begin{definition}[Well-Defined Finite Distributional Database]\label{def:well-defd-facts}
A finite distributional database~$\distdb$  is called \emph{well-defined} if and only if it satisfies the following criteria:
\begin{description}
    \item[W!] Each $\nu \in \randomvariableset$ has a finite set of ancestors.
    \item[W1] The ancestor relation on the variables~$\randomvariableset$ is acyclic.
\item[W3] If $\nu \sim \delta \in \distdb$ and the parents of $\nu$ are $\{ \nu_1, \dots, \nu_m \}$, then replacing each occurrence of $\nu_i$ in $\delta$ by a sample $\samplefunction(\nu_i)$ always results in a well-defined distribution for~$\nu$. 
\end{description}  
\end{definition}

The distributional database in Example~\ref{ex:dist_db} is well-defined: the ancestor relation is acyclic and finite, and as normally distributed random variables are real-valued, using such a variable as the mean of another normal distribution is always well-defined. The database would no longer be well-defined after adding \probloginline{w ~ poisson(x)}, as not all real numbers can be used as a parameter of a Poisson distribution. 

\new{
Constructing valid finite distributional databases can be viewed as building Bayesian networks over a finite set of variables, where nodes represent (conditional) random variables and where each node's distribution is parameterized by the node's parents.
This approach was, for instance, taken by~\citep{kersting2000bayesian}.
More recently, \citet{wu2018discrete} extended these finite Bayesian network to allow also (under certain conditions) for infinite and uncountable numbers of nodes in these Bayesian networks. They dub this extension {\em measure theoretic Bayesian network} (MTBNs). 

While our measure semantics allow for MTBNs as the distributional database, we will restrict ourselves in the remainder of the paper to distributional databases that are finite. We leave it as future work on how to effectively use MTBNs in probabilistic logic programming. This would necessitate developing novel syntax and inference algorithms able to handle, for example, stochastic processes.
}
\section{\dcproblogsty} \label{sec:dcproblog}

\fixed{
While the previous section has focused on the core elements of the \dcproblogsty language, we now introduce syntactic sugar to ease modeling.
We  consider three kinds of modeling patterns in \dfplpsty, and introduce a more compact notation for each of them.
Here we focus on examples and intuitions and relegate the more technical details to Appendix~\ref{sec:detaileddcproblog}.
Specifically, we discuss the precise semantics of the syntactic sugar introduced below and how to map it onto \dfplpsty language constructs (Appendix~\ref{sec:semantics_syntactic_sugar} and~\ref{sec:dcvalidity}).
In Appendix~\ref{sec:Additionalsyntacticsugar} we introduce additional syntactic sugar constructs for user-defined sample spaces and multivariate distributions, and in Appendix~\ref{sec:beyondmixtures} we study the intricacies introduced by negation when using syntactic sugar. 
}

\subsection{Boolean Random Variables}

The first modelling pattern concerns Boolean random variables, which we already encountered in Section~\ref{sec:panorama_semantics} as probabilistic facts (in \dcproblogsty) or as a combination of a Bernoulli random variable, a comparison atom, and a logic rule (in \dfplpsty). Below we give a more concise example.

\begin{example}
    We model, in \dfplpsty, an alarm that goes off for different reasons.
    \begin{problog*}{linenos}
issue1 ~ flip(0.1).
issue2 ~ flip(0.6).
issue3 ~ flip(0.3).

alarm :- issue1=:=1, not issue2=:=1.
alarm :- issue3=:=1, issue1=:=0.
alarm :- issue2=:=1.
    \end{problog*}
\end{example}
To make such programs more readable, we borrow the well-known notion of \emph{probabilistic fact} from discrete PLP, which directly introduces a logical atom as alias for the comparison of a random variable with the value \probloginline{1}, together with the probability of that value being taken.

\begin{definition}[Probabilistic Fact]
    A probabilistic fact is of the form $p\prob\mu$, where  $p$ is an arithmetic term that evaluates to a real number in the interval $[0,1]$ and $\mu$ is a regular ground atom.
\end{definition}
\begin{example}
    We use probabilistic facts to rewrite the previous example.
    \begin{problog*}{linenos}
0.1::problem1.
0.6::problem2.
0.3::problem3.

alarm :- problem1, not problem2.
alarm :- problem3, not problem1.
alarm :- problem2.
    \end{problog*}
\end{example}

\subsection{Probabilistically Selected Logical Consequences}
The second pattern concerns situations where a random variable with a finite domain is used to model a choice between several logical consequences:
\begin{example}
    We use a random variable to model a choice between whom to call upon hearing the alarm.
    \begin{problog*}{linenos}
call ~ finite([0.6:1,0.2:2,0.1:3]).
alarm.
call(mary) :- call=:=1, alarm.
call(john) :- call=:=2, alarm.
call(police) :- call=:=3, alarm.
    \end{problog*}
\end{example}
To more compactly specify such situations, we borrow the concept of an \emph{annotated disjunction} from discrete PLP~\citep{vennekens2004logic}.

\begin{definition}[Annotated Disjunction]
    An annotated disjunction (AD) is a rule of the form
    $$
        p_1\prob \mu_1; \dots; p_n \prob \mu_n \lpif \beta.
    $$
    where the  $p_i$'s are arithmetic terms each evaluating to a number in $[0,1]$ with a total sum of at most $1$. The $\mu_i$'s are regular ground atoms, and $\beta$ is a (possibly empty) conjunction of literals.
\end{definition}
The informal meaning of such an AD is "if $\beta$ is true, it probabilistically causes one of the $\mu_i$ (or none of them, if the probabilities sum to less than one) to be true as well".

\begin{example}
    We now use an AD for the previous example.
    \begin{problog}
alarm.
0.6::call(mary); 0.2::call(john); 0.1::call(police) :- alarm.
    \end{problog}
\end{example}

It is worth noting that the same head atom may appear in multiple ADs, whose bodies may be non-exclusive, \ie, be true at the same time. That is, while a single AD \emph{can} be used to model a multi-valued random variable, \emph{not all} ADs  encode such variables.
\begin{example}
    The following program models the effect of two kids throwing stones at a window.
    \begin{problog}
0.5::throws(suzy).
throws(billy).

0.8::effect(broken); 0.2::effect(none) :- throws(suzy).
0.6::effect(broken); 0.4::effect(none) :- throws(billy).
    \end{problog}
    Here, we have $P(\text{\probloginline{effect(broken)}})=0.76$ and $P(\text{\probloginline{effect(none)}})=0.46$, as there are worlds where both \probloginline{effect(broken)} and \probloginline{effect(none)} hold. The two ADs do hence not encode a categorical distribution.
    This is explicit in the \dfplpsty program, which contains two  random variables (\probloginline{x1} and \probloginline{x2}):
    \begin{problog}
x0 ~ flip(0.5).
throws(suzy) :- x0=:=1.
throws(billy).

x1 ~ finite([0.8:1,0.2:2]).
effect(broken) :- x1=:=1, throws(suzy).
effect(none) :- x1=:=2, throws(suzy).
x2 ~ finite([0.6:1,0.4:2]).
effect(broken) :- x2=:=1, throws(billy).
effect(none) :- x2=:=2, throws(billy).
    \end{problog}


\end{example}

\subsection{Context-Dependent Distributions}\label{sec:sugar-dc}
The third pattern is concerned with situations where the same conclusion is based on random variables with different distributions depending on specific properties of the situation, as illustrated by the following example.
\begin{example}
    We use two separate random variables to model that whether a machine works depends on the temperature being below or above a threshold. The temperature follows different distributions based on whether it is a hot day or not, but the threshold is independent of the type of day.
    \begin{problog*}{linenos}
0.2::hot.

temp_hot ~ normal(27,5).
temp_not_hot ~ normal(20,5).

works :- hot, temp_hot<25.0.
works :- not hot, temp_not_hot<25.0.
    \end{problog*}
\end{example}
To more compactly specify such situations, we borrow the syntax of \emph{distributional clauses} from the DC  language~\citep{gutmann2011magic}, which we already encountered in Section~\ref{sec:panorama_semantics}.

\begin{definition}[Distributional Clause]
    A distributional clause (DC) is a rule of the form
    $$
        \tau \sim \delta \lpif \beta.
    $$
    where $\tau$ is a regular ground expression, the functor of $\delta$ is in $\distributionfunctors$, and $\beta$ is a conjunction of literals.
\end{definition}

We call the left-hand side of the \mathpredicate{\sim}{2} prediate in a distributional clause a {\em random term} and the right-hand side a {\em distribution term}. Note that random terms cannot always be interpreted as random variables, which we discuss now.

The informal meaning of a distributional clause is "if $\beta$ is true, then the random term $\tau$ refers to a random variable that follows a distribution given by the distribution term $\delta$". Here, the distinction between \emph{refers to} a random variable and \emph{is} a random variable becomes crucial, as we will often have several distributional clauses for the same random term. This is also the case in the following example.
\begin{example}
    Using distributional clauses, we can rewrite the previous example with a single random term \probloginline{temp} as
    \begin{problog*}{linenos}
0.2::hot.

temp ~ normal(27,5) :- hot.
temp ~ normal(20,5) :- not hot.

works :- temp < 25.0.
    \end{problog*}
    The idea is that we still have two underlying random variables, one for each distribution, but the logic program uses the same term to refer to both of them depending on the logical context. The actual comparison facts are on the level of these implicit random variables, and \probloginline{temp<0.25} refers to one of them depending on context, just as in the original example.
\end{example}

\section{Probabilistic Inference Tasks} \label{sec:inference-tasks}

In Section~\ref{sec:log_cons_bool_comp} we defined the probability distribution induced by a \dfplpsty program by extending the basic measure to logical consequences (expressed as logical rules). The joint distribution is then simply the joint distribution over all (ground) logical consequences. We obtain marginal probability distributions by marginalizing out specific logical consequences.
This means that marginal and joint probabilities of atoms in \dfplpsty programs are well-defined. 
Defining the semantics of probabilistic logic programs using an extension of Sato's distribution semantics gives us the semantics of probabilistic queries: the probability of an atom of interest is given by the probability induced by the joint probability of the program and marginalizing out all atoms one is not interested in.

The situation is more involved with regard to conditional probability queries. In contrast to unconditional queries, not all conditional queries are well-defined under the measure semantics (as well ass the distribution semantics). We will now give the formal definition of the PROB task, which lets us compute the (conditional) marginal probability of probabilistic events and which has so far not yet been defined in the PLP literature for hybrid domains under a declarative semantics (\eg \citep{azzolini2021semantics}).

After defining the task of computing conditional marginal probabilities, we will study how to compute these probabilities in the hybrid domain. Before defining the PROB task, we will first need to formally introduce the notion of a conditional probability with respect to a  \dcproblogsty program.

\begin{definition}[Conditional Probability]
\label{def:conditional_prob}
Let $\herbrandbase$ be 
the set of all ground atoms in a given \dcproblogsty program \dcpprogram.
Let $\evidenceset = \{\eta_1,\ldots,\eta_n\} \subset \herbrandbase$ be a set of observed atoms, and  $e=\langle e_1,\ldots, e_n\rangle$ a vector of corresponding  observed truth values, with $e_i\in \{\bot, \top \}$.
We refer to $(\eta_1=e_1) \wedge \ldots\wedge (\eta_n=e_n)$  as the evidence and write more compactly  $\evidenceset = e$.
Let  $\mu \in\herbrandbase$ be an atom of interest called the query.
If the probability of $\evidenceset = e$ is greater than zero, then the conditional probability of $\mu=\top$ given $\evidenceset=e$ is defined as:
\begin{align}
    P_\dcpprogram(\mu=\top \mid \evidenceset=e)= \frac{P_{\dcpprogram}(\mu=\top, \evidenceset=e)}{P_\dcpprogram(\evidenceset=e)} \label{eq:conditional_prob}
\end{align}
\end{definition}

\begin{definition}[PROB Task]
\label{def:prob_task}
Let $\herbrandbase$ be the set of all ground atoms of a given \dcproblogsty program \dcpprogram.
We are given the (potentially empty) evidence $\evidenceset=e$ (with $\evidenceset \subset \herbrandbase$)
and a set $\queryset \subset \herbrandbase$ of atoms of interest, called
query atoms.
The {\bf PROB task} consists of computing the conditional probability of the truth value of every atom in $\queryset$ given the evidence, \ie compute the conditional probability $P_{\dcpprogram} (\mu {=} \top \mid \evidenceset {=} e)$ for each $\mu \in \queryset$.
\end{definition}

\begin{example}[Valid Conditioning Set]
    Assume two random variables $\nu_1$ and $\nu_2$, where $\nu_1$ is distributed according to a normal distribution and $\nu_2$ is distributed according to a Poisson distribution. Furthermore, assume the following conditioning set $\evidenceset= \{\eta_1=\top, \eta_2=\top \}$, where $\eta_1 \leftrightarrow (\nu_1>0)$ and $\eta_2 \leftrightarrow (\nu_2=5)$. This is a valid conditioning set as none of the events has a zero probability of occurring, and we can safely perform the division in Equation~\ref{eq:conditional_prob}.
\end{example}

\subsection{Conditioning on  Zero-Probability Events}

A prominent class of conditional queries, which are not captured by Definition~\ref{def:conditional_prob}, are so-called  zero probability conditional queries. For such queries the probability of the observed event happening is actually zero but the event is still possible. Using Equation~\ref{eq:conditional_prob} does not work anymore as a division by zero would now occur.

\begin{example}[Zero-Probability Conditioning Set]
    Assume that we have a random variable $\nu$ distributed according to a normal distribution and that we have the conditioning set $\evidenceset= \{ \eta=\top \} $, with $\eta\leftrightarrow (\nu=20)$. In other words, we condition the query on the observation that the random variable $\nu$ takes the value $20$ -- for instance in a distance measuring experiment. This is problematic as the probability of any specific value for a random variable with uncountably many outcomes is in fact zero and applying Equation~\ref{eq:conditional_prob} leads to a division-by-zero. Consequently,  an ill-defined conditional probability arises.
\end{example}

In order to sidestep divisions by zero when conditioning on zero-probability (but possible) events, we modify Definition~\ref{def:conditional_prob}. Analogously to~\citet{nitti2016probabilistic}, we follow the approach taken in~\citep{kadane2011principles}.

\begin{definition}[Conditional Probability with Zero-Probability Events]
\label{def:conditional_prob_zero_event}
Let $\nu$ be a continuous random variable in the \dcproblogsty program \dcpprogram with ground atoms \herbrandbase. Furthermore, let us assume that the evidence consists of $\evidenceset = \{ \eta_0 = \top  \}$ with $\eta_0\leftrightarrow (\nu=w)$ and $w\in \samplespace_\nu$.
The conditional probability of an atom of interest $\mu\in \herbrandbase$ is now defined as:
\begin{align}
    P_\dcpprogram (\mu= \top \mid \eta_0= \top)
    &= \lim_{\Delta w \rightarrow 0}
    \frac{P_\dcpprogram(\mu=\top, \nu \in [w-\nicefrac{\Delta w}{2}, w+\nicefrac{\Delta w}{2} ]) }{P_\dcpprogram(\nu \in [w-\nicefrac{\Delta w}{2}, w+\nicefrac{\Delta w}{2} ] )} \label{eq:condition_prob_zero_ev}
\end{align}
\end{definition}

To write this limit more compactly, we introduce an 
infinitesimally small constant $\delta w$ and two new comparison atoms  $\eta_1 \leftrightarrow ( w-\nicefrac{\delta w}{2} \leq  \nu)$ and $\eta_2 \leftrightarrow (\nu \leq w+\nicefrac{\delta w}{2} )$ that together encode the limit interval.
Using these, we rewrite Equation~\ref{eq:condition_prob_zero_ev} as
\begin{align}
    P_\dcpprogram (\mu = \top \mid \eta_0 = \top) =  \frac{P_\dcpprogram(\mu=\top,  \eta_1=\top, \eta_2=\top )}{P_\dcpprogram( \eta_1=\top, \eta_2=\top )}
\end{align}

Applying the definition recursively, allows us to have multiple zero probability conditioning events. More specifically, let us assume an additional continuous random variable $\nu'$ that takes the value $w'$ for which we define: 
$\eta^{\prime}_1 \leftrightarrow ( w'-\nicefrac{\delta w'}{2} \leq  \nu')$ and $\eta^{\prime}_2 \leftrightarrow (\nu' \leq w'+\nicefrac{\delta w'}{2} )$.
This then leads to the following conditional probability:
\begin{align}
    P_\dcpprogram(\mu=\top \mid \nu=w, \nu'=w') 
    &= \frac{P_\dcpprogram(\mu=\top, \eta_1=\top, \eta_2=\top \mid \nu'=w' )}{P_\dcpprogram(\eta_1=\top, \eta_2=\top \mid \nu'=w')} \nonumber \\
    &= \frac{ \frac{P_\dcpprogram(\mu=\top, \eta_1=\top, \eta_2=\top, \eta'_1=\top, \eta'_2=\top)}{ \cancel{P_\dcpprogram(\eta'_1=\top, \eta'_2=\top)}} }{\frac{ P_\dcpprogram(\eta_1=\top, \eta_2=\top, \eta'_1=\top, \eta'_2=\top)}{\cancel{ P_\dcpprogram(\eta'_1=\top, \eta'_2=\top) }}} \nonumber \\
    &= \frac{P_\dcpprogram(\mu=\top,\eta_1=\top, \eta_2=\top, \eta'_1=\top, \eta'_2=\top)}{P_\dcpprogram(\eta_1=\top, \eta_2=\top, \eta'_1=\top, \eta'_2=\top)} 
\end{align}
Here we first applied the definition of the conditional probability for the observation of the random variable $\nu$ and then for the observation of the random variable $\nu'$. Finally, we simplified the expression.

\begin{proposition}
\label{prop:existence_cond_prob_zero_evidence}
The conditional probability as defined in Definition~\ref{def:conditional_prob_zero_event} exists.
\end{proposition}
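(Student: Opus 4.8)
The plan is to show that the limit defining $P_\dcpprogram(\mu=\top\mid\eta_0=\top)$ in Equation~\ref{eq:condition_prob_zero_ev} exists by rewriting the quotient as a ratio of interval-averaged probabilities and then invoking a differentiation argument. First I would record that, by Propositions~\ref{prop:pv}, \ref{prop:pf} and~\ref{prop:pp}, the program \dcpprogram induces a well-defined joint probability measure over Herbrand interpretations, so that for every $\Delta w>0$ both the numerator $P_\dcpprogram(\mu=\top,\nu\in[w-\nicefrac{\Delta w}{2},w+\nicefrac{\Delta w}{2}])$ and the denominator $P_\dcpprogram(\nu\in[w-\nicefrac{\Delta w}{2},w+\nicefrac{\Delta w}{2}])$ are genuine finite probabilities, with $\{\mu=\top\}$ a measurable event since it is a logical consequence of the measurable comparison atoms (Definition~\ref{def:comparison-atoms-set}). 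The substance of the proposition is therefore the existence of the limit of the quotient as $\Delta w\to0$.

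Since $\nu$ is continuous, its marginal law admits a density $f_\nu$ with respect to Lebesgue measure, fixed by the distribution term $\delta$. The key step is to disintegrate the joint measure along $\nu$: writing $g(t)=P_\dcpprogram(\mu=\top\mid\nu=t)$ for the regular conditional probability of $\{\mu=\top\}$ given $\nu=t$, and setting $I=[w-\nicefrac{\Delta w}{2},w+\nicefrac{\Delta w}{2}]$, one obtains
\begin{align*}
P_\dcpprogram(\nu\in I)=\int_I f_\nu(t)\,dt,
\qquad
P_\dcpprogram(\mu=\top,\nu\in I)=\int_I g(t)\,f_\nu(t)\,dt.
\end{align*}
Dividing numerator and denominator by $\Delta w$ and letting $\Delta w\to0$, I would apply the Lebesgue differentiation theorem (or, at points of continuity of the densities, the mean value theorem for integrals) to get $\tfrac{1}{\Delta w}\int_I f_\nu\,dt\to f_\nu(w)$ and $\tfrac{1}{\Delta w}\int_I g\,f_\nu\,dt\to g(w)\,f_\nu(w)$. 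Provided $f_\nu(w)>0$, the quotient converges to $g(w)=P_\dcpprogram(\mu=\top\mid\nu=w)$, which establishes existence and simultaneously identifies the limit.

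The main obstacle is the positivity of $f_\nu$ at the conditioned value $w$: when $f_\nu(w)=0$ the argument only produces an indeterminate $0/0$, so existence genuinely presupposes that $w$ lies in the support of $\nu$, which is the intended reading of the requirement $w\in\samplespace_\nu$ in Definition~\ref{def:conditional_prob_zero_event}. A secondary technical point is the regularity required for the differentiation step: the Lebesgue differentiation theorem secures the two limits for Lebesgue-almost-every $w$, and for an arbitrary fixed $w$ one falls back on the densities arising from the built-in distribution functors being continuous at $w$, so that the mean value theorem applies directly. Finally, I would note that the extension to several zero-probability conditioning events requires no new idea: applying the single-variable argument recursively, exactly as in the chain of equalities preceding the proposition, each additional continuous variable contributes one density factor to both numerator and denominator that cancels, so existence propagates through the recursion.
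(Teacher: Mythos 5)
Your argument is sound in substance, but note that the paper itself offers no proof at this point: its entire justification for Proposition~\ref{prop:existence_cond_prob_zero_evidence} is a citation to \citep[Equation 6]{nitti2016probabilistic}. What you have written is essentially a self-contained reconstruction of that outsourced argument --- disintegrate the joint measure along $\nu$, express numerator and denominator as $\int_I g\,f_\nu\,dt$ and $\int_I f_\nu\,dt$ with $g(t)=P_\dcpprogram(\mu=\top\mid\nu=t)$, divide by $\Delta w$, and pass to the limit --- and where you go beyond the citation is in surfacing the hypotheses it silently uses: that $\nu$'s law has a density, that the differentiation step holds only almost everywhere unless the density is continuous at the fixed $w$, and that $f_\nu(w)>0$ is needed to avoid a $0/0$ indeterminacy. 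The paper only imposes this last condition informally and much later, as the standing assumption that conditioning events are ``possible,'' so your proof makes explicit what the citation buys implicitly. Two refinements you could add: first, your quotient is a weighted average of $g$ over the interval, so existence requires only that every neighbourhood of $w$ carry positive $\nu$-mass and that $g$ admit a version continuous at $w$ --- strictly weaker than $f_\nu(w)>0$; second, since $g$ is a regular conditional probability it is defined only up to null sets, so identifying the limit with ``$g(w)$'' presupposes choosing such a continuous version, and at points where a comparison atom changes truth value (take $\mu\leftrightarrow(\nu\geq w)$, where the symmetric-interval limit exists but equals $\nicefrac{1}{2}$, the average of the one-sided limits) the limit exists without equalling any canonical pointwise value of $g$. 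In short: your route is the same density-ratio computation as the cited source, but carried out and annotated rather than delegated, which is a genuine improvement in rigor over the paper's one-line proof.
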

\begin{proof}
See ~\citep[Equation 6]{nitti2016probabilistic}.
\end{proof}

In order to express zero-probability events in \dcproblogsty we add a new built-in comparison predicate to the finite set of comparison predicates $\comparisonpredicates =\{ \mathprobloginline{<}, \mathprobloginline{>},\allowbreak \mathprobloginline{=<}, \mathprobloginline{>=}, \mathprobloginline{=:=},\allowbreak \mathprobloginline{=\=} \}$ (cf. Definition~\ref{def:reserved_vocabulary}).

\begin{definition}[Delta Interval Comparison]\label{def:delta_interval}
    For a random variable \probloginline{v} and a rational number \probloginline{w}, we define \probloginline{delta_interval(v,w)} (with $\text{\predicate{delta_interval}{2}} \in \comparisonpredicates$) as follows. If \probloginline{v} has a countable sample space, then  \probloginline{delta_interval(v,w)} is equivalent to \probloginline{v=:=w}. Otherwise, \probloginline{delta_interval(v,w)} is equivalent to the conjunction of
    the two comparison atoms \probloginline{w-@$\delta$@w=<v} and \probloginline{v=<w+@$\delta$@w},
    where \probloginline{@$\delta$@w} is an infinitesimally small number.
\end{definition}

The delta interval predicate lets us express conditional probabilities with zero probability conditioning events as defined in Definition~\ref{def:conditional_prob_zero_event}.

Zero probability conditioning events are often abbreviated as $P_\dcpprogram (\mu=\top\mid \nu=w)$.
This can be confusing as it does not convey the intent of conditioning on an infinitesimally small interval. To this end, we introduce the symbol `$\doteq$' (equal sign with a dot on top). We use this symbol to explicitly point out an infinitesimally small conditioning set. For instance, we abbreviate the limit
\begin{align*}
    \lim_{\Delta w \rightarrow 0}
    \frac{P_\dcpprogram(\mu=\top, \nu\in [w-\nicefrac{\Delta w}{2}, w+\nicefrac{\Delta w}{2} ]) }{P_\dcpprogram(\nu \in [w-\nicefrac{\Delta w}{2}, w+\nicefrac{\Delta w}{2} ] )}
\end{align*}
in Definition~\ref{def:conditional_prob_zero_event} as:
\begin{align}
    P_\dcpprogram(\mu=\top \mid \nu\doteq w )
\end{align}

More concretely, if we measure the height $h$ of a person to be $180cm$ we denote this by $h\doteq 180$. This means that we measured the height  of the person to be in an infinitesimally small interval around $180cm$. Note that the $\doteq$ sign has slightly different semantics for random variables with a countable support. For discrete random variables the $\doteq$ is equivalent to the $equal$ sign.

\begin{example} \label{ex:conditional_prob_zero}
    Assume that we have a random variable $\nu$ distributed according to a normal distribution and that we have the evidence set $\evidenceset= \{ \eta=\top \} $, with $\eta\leftrightarrow (\nu\doteq 20)$. This is a valid conditional probability defined through Definition~\ref{def:conditional_prob_zero_event}.
\end{example}

\begin{example}
    Assume that we have a random variable $\nu$ distributed according to a normal distribution and that we have the conditioning set $\evidenceset= \{ \eta=\top, \eta' = \top \} $, with $\eta_1\leftrightarrow (\nu\doteq 20)$ and $\eta'\leftrightarrow (\nu\doteq 30)$. This does not encode a conditional probability as the conditioning event is not a possible event: one and the same random variable cannot be observed to have two different outcomes.
\end{example}

The notation used to condition on zero probability events (even when using `$\doteq$') hides away the limiting process that is used to define the conditional probability. This can lead to situations where seemingly equivalent conditional probabilities have diametrically opposed meanings.

\begin{example} \label{ex:conditional_prob}
    Let us consider the conditioning set $\evidenceset = \{ \eta=\top, \eta'=\top \}$, with $\eta \leftrightarrow (\nu\leq 20)$ and $\eta' \leftrightarrow (20\leq \nu)$, which we use again to condition a continuous random variable $\nu$. In contrast to Example~\ref{ex:conditional_prob_zero}, where we directly observed $\nu\doteq 20$, here, Definition~\ref{def:conditional_prob} applies, which 
    states that the conditional probability is undefined as $P(\nu\leq 20, 20\leq \nu )=0$.
\end{example}

\subsection{Discussion on the Well-Definedness of a Query}

The probability of an unconditional query to a valid \dcproblogsty program is always well-defined, as it is simply a marginal of the distribution represented by the program.  
This stands in stark contrast to conditional probabilities: an obvious issue are  divisions by zero occurring when the conditioning event does not belong to the set of possible outcomes of the conditioned random variable. Similarly to~\citet{wu2018discrete} we will assume for the remainder of the paper that conditioning events are always possible events, \ie events that have a non-zero probability but possibly an infinitesimally small probability  of occurring. This allows us to bypass potential issues caused by zero-divisions.\footnote{In general, deciding whether a conditioning event is possible or not is undecidable. This follows from the undecidability of general logic programs under the well-founded semantics~\citep{cherchago2007decidability}. A similar discussion is also presented in the thesis of Brian Milch~\citep[Proposition 4.8]{milch2006probabilistic} for the \blogsty language, which also discusses decidable language fragments~\citep[Section 4.5]{milch2006probabilistic}.}  

Even when discarding impossible conditioning events, conditioning a probabilistic event on a zero probability (but possible) event remains inherently ambiguous~\citep{jaynes2003probability} and might lead to the Borel-Kolmogorov paradox.
Problems arise when the limiting process used to define the conditional probability with zero probability events (cf. Definition~\ref{def:conditional_prob_zero_event}) does not produce a unique limit. 
For instance, a conditional probability  $P(\mu=\top \mid  2\nu \doteq \nu' )$, where $\nu$ and $\nu'$ are two random variables, depends on the parametrization used. We refer the reader to~\citep{shan2017exact} and \citep{jacobs2021paradoxes} for a more detailed discussion on ambiguities arising with zero probability conditioning events in the context of probabilistic programming.
We will sidestep such ambiguities completely by limiting observations of zero probability events to direct comparisons between random variables and numbers. This makes also sense from an epistemological perspective: we interpret a conditioning event as the outcome of an experiment, which produces a number, for instance the reading of a tape measure.

\subsection{Conditional Probabilities by Example}

\begin{example}\label{example:problog_machine}
The following ProbLog program models the conditions under which machines work. There are two machines (Line \ref{program:problog_machines_dec}), and three (binary) random terms, which we interpret as random variables as the bodies of the probabilistic facts are empty. The random variables are: the outside temperature (Line \ref{program:problog_machines_temp}) and  whether the cooling of each machine works (Lines \ref{program:problog_machines_cool1} and \ref{program:problog_machines_cool2}). Each machine works if its cooling works or if the temperature is low (Lines \ref{program:problog_machines_work_cool} and \ref{program:problog_machines_work_temp}).

	\begin{problog*}{linenos}
machine(1). machine(2). @\label{program:problog_machines_dec}@

0.8::temperature(low). @\label{program:problog_machines_temp}@
0.99::cooling(1). @\label{program:problog_machines_cool1}@
0.95::cooling(2). @\label{program:problog_machines_cool2}@

works(N):- machine(N), cooling(N). @\label{program:problog_machines_work_cool}@
works(N):- machine(N), temperature(low). @\label{program:problog_machines_work_temp}@
	\end{problog*}
We can query this program for the probability of \dcplpinline{works(1)} given that we have as evidence that \dcplpinline{works(2)} is true:
$$
P(\text{\dcplpinline{works(1)}}{=}1\mid\text{\dcplpinline{works(2)}}{=}1)\approx 0.998
$$
\end{example}

\begin{example}\label{example:dcproblog_machine}
In the previous example there are only Boolean random variables (encoded as probabilistic facts) and the \dcproblogsty program is equivalent to an identical \problogsty program. An advantage of \dcproblogsty is that we can now use an almost identical program to model the temperature as a continuous random variable.

	\begin{problog*}{linenos}
machine(1). machine(2).

temperature ~ normal(20,5). @\label{program:dcproblog_machines_temp}@
0.99::cooling(1).
0.95::cooling(2).

works(N):- machine(N), cooling(N).
works(N):- machine(N), temperature<25.0. @\label{program:dcproblog_machines_work_temp}@
	\end{problog*}
We can again ask for the probability of \probloginline{works(1)} given that we have as evidence that \probloginline{works(2)} is true but now the program also involves a continuous random variable:
$$
P(\text{\probloginline{works(1)}}{=}\top\mid\text{\probloginline{works(2)}}{=}\top)\approx 0.998
$$
\end{example}

In the two previous examples we were interested in a conditional probability where the conditioning event has a non-zero probability of occurring. However, \dcproblogsty programs can also encode conditional probabilities where the conditioning event has a zero probability of happening, while still being possible.
\begin{example}\label{example:dcproblog:observation}
	We model the size of a ball as a mixture of  different beta distributions, depending on whether the ball is made out of wood or metal (Line~\ref{program:dcproblog_machines_observation:ad}).
	We would now like to know the probability of the ball being made out of wood given that we have a measurement of the size of the ball.
	\begin{problog*}{linenos}
3/10::material(wood);7/10::material(metal).@\label{program:dcproblog_machines_observation:ad}@

size~beta(2,3):- material(metal)@\label{example:dcproblog:observation:beta23}@.
size~beta(4,2):- material(wood).
	\end{problog*}
Assume that we measure the size of the ball and we find that it is $0.4cm$, which means that we have a measurement (or observation) infinitesimally close to $0.4$. Using the `$\doteq$' notation, we write this conditional probability as:
	\begin{align}
		P\Bigl(\text{\probloginline{material(wood)}}{=}\top \mid \left(\text{\probloginline{size@$\doteq$@4/10}}\right) {=}\top \Bigr)
	\end{align}
\end{example}

The {\em Indian GPA problem} was initially proposed by Stuart Russell as an example problem to showcase the intricacies of mixed random variables.
Below we express the Indian GPA problem in \dcproblogsty.

\begin{example} \label{ex:indian_gpa}
The Indian GPA problem models US-American and Indian students and their GPAs. Both receive scores on the continuous domain, namely from 0 to 4 (American) and from 0 to 10 (Indian), cf. Line~\ref{ex:gpa:dens_am} and \ref{ex:gpa:dens_in}. With non-zero probabilities both student groups can also obtain marks at the extremes of the respective scales (Lines~\ref{ex:gpa:max_am}, \ref{ex:gpa:min_am}, \ref{ex:gpa:max_in}, \ref{ex:gpa:min_in}). 

\begin{problog*}{linenos}
1/4::american;3/4::indian.

19/20::isdensity(a).
99/100::isdensity(i).

17/20::perfect_gpa(a).
1/10::perfect_gpa(i).

gpa(a)~uniform(0,4):- isdensity(a). @\label{ex:gpa:dens_am}@
gpa(a)~delta(4.0):- not isdensity(a), perfect_gpa(a).  @\label{ex:gpa:max_am}@
gpa(a)~delta(0.0):- not isdensity(a), not perfect_gpa(a). @\label{ex:gpa:min_am}@
     
gpa(i)~uniform(0,10):- isdensity(i). @\label{ex:gpa:dens_in}@
gpa(i)~delta(10.0):- not isdensity(i), perfect_gpa(i).  @\label{ex:gpa:max_in}@
gpa(i)~delta(0.0):- not isdensity(i), not perfect_gpa(i).  @\label{ex:gpa:min_in}@
    
gpa(student)~delta(gpa(a)):- american.
gpa(student)~delta(gpa(i)):- indian.
\end{problog*}
Note that in order to write the probability distribution of \probloginline{gpa(a)} and \probloginline{gpa(i)} we used uniform and Dirac delta distributions. This allowed us to distribute the random variables \probloginline{gpa(a)} and \probloginline{gpa(i)} according to a discrete-continuous mixture distribution.
We then observe that a student has a GPA of $4$ and we would like to know the probability of this student being American or Indian. 
\begin{align}
P\Bigl(\text{\probloginline{american}}{=}\top \mid (\text{\probloginline{gpa(student)}}\doteq4)=\top \Bigr)&=1 
\nonumber \\
P \Bigl(\text{\probloginline{indian}}{=}\top \mid (\text{\probloginline{gpa(student)}}\doteq4)=\top \Bigr)&=0 \nonumber
\end{align}
\end{example}

\section{Inference via Computing Expectations of Labeled Logic Formulas} \label{sec:dc2smt}

In the previous sections we have delineated the semantics of \dcproblogsty programs and described the PROB task that defines conditional probability queries on \dcproblogsty programs. The obvious next step is to actually perform the inference. We will follow an approach often found in implementations of PLP languages in the discrete domain: reducing inference in probabilistic programs to 
performing inference on labeled Boolean formulas that encode relevant parts of the logic program.
Contrary to languages in the discrete domain that follow this approach~\citep{fierens2015inference,riguzzi2011pita}, we will face the additional complication of handling random variables with infinite sample spaces. We refer the reader to~\citep[Section 5]{riguzzi2018foundations} for a broader overview of this approach.

Specifically, we are going to define a reduction from \dcproblogsty inference to the task of computing the expected label of a propositional formula. The formula is a propositional encoding of the  relevant part of the logic program (relevant with respect to a query), where atoms become propositional variables, and the labels of the basic facts in the distribution database are derived from the probabilistic part of the program.
At a high level, we extend \problogsty's inference algorithm such that Boolean comparison atoms over (potentially correlated) random variables are correctly being kept track of. The major complication, with regard to \problogsty and other systems such as PITA~\citep{riguzzi2011pita}, is the presence of context-dependent random variables, which are denoted by the same ground random term. For instance, the random term \probloginline{size} in the program in Example~\ref{example:dcproblog:observation} denotes two different random variables but is being referred to by one and the same term in the program.

Inference algorithms for PLP languages often consider only a fragment of the language for which the semantics have been defined. A common restriction for inference algorithms is to only consider range-restricted programs\footnote{We call a \dcproblogsty program range-restricted if it holds that for every statement all logic variables occurring in the head also occur as positive literals in the body. This guarantees that all terms will become ground during backward chaining. Note that range-restrictedness implies that all facts (including probabilistic and distributional ones) are ground.}. 
Furthermore, we consider, without loss of generality only AD-free programs, cf.~Definition~\ref{def:ad_free_program}, as annotated disjunctions or probabilistic facts can be eliminated up front by means of {\em local} transformations that  solely affect the annotated disjunctions (or probabilistic facts).\footnote{For non-ground ADs, we adapt Definition~\ref{def:elim-ad} to include all logical variables as arguments of the new random variable. As this introduces non-ground distributional facts, which are not range-restricted, we also move the comparison atom to the end of the rule bodies of the AD encoding to ensure those local random variables are ground when reached in backward chaining.}






The high level steps for converting a \dcproblogsty program to a labeled propositional formula closely follow the corresponding conversion for \problogsty programs provided by \citet[Section 5]{fierens2015inference}, \ie, given a \dcproblogsty program \dcpprogram,
		evidence $\evidenceset =e$ and a set of query atoms $\queryset$, the conversion algorithm performs the following steps:
\begin{enumerate}
    \item Determine the { relevant ground program} $\dcpprogram_g$ with respect to the atoms in  $\queryset\cup\evidenceset$ and obtain the corresponing \dfplpsty program. 
    \item Convert $\dcpprogram_g$  to an { equivalent propositional formula} $\phi_g$ and $\evidenceset =e$ to a propositional conjunction $\phi_e$.
    \item Define the { labeling function} for all atoms in $\phi_g$.
\end{enumerate}
Step 1 exploits the fact that ground clauses that have no influence on the truth values of  query or evidence atoms are irrelevant for inference and can thus be omitted from the ground program. Step 2 performs the conversion from logic program semantics to propositional logic, generating a formula that encodes \emph{all} models of the relevant ground program as well as a formula that serves to assert the evidence by conjoining both formulas. 
Step 3 completes the conversion by defining the labeling function. 
In the following, we discuss the three steps in more detail and prove correctness of our approach (cf. Theorem~\ref{thm:inference-by-expectation}).

\subsection{The Relevant Ground Program}

The first step in the conversion of a non-ground \dcproblogsty program to a labeled  Boolean formula consists of grounding the program with respect to a query set $\queryset$ and the evidence $\evidenceset=e$.
For each ground atom in $\queryset$ and $\evidenceset$ we construct its dependency set. That is, we collect the set of ground atoms and ground rules that occur in any of the proofs of an atom in $\queryset\cup \evidenceset$. The union of all dependency sets for all the ground atoms in $\queryset\cup\evidenceset$ is the dependency set of the \dcproblogsty with respect to the sets $\queryset$ and $\evidenceset$. This dependency set, consisting of ground rules and ground atoms, is called the relevant ground program (with respect to a set of queries and evidence).

\begin{example}
	\label{ex:grounded_nicely}
	Consider the non-ground (AD-free) \dcproblogsty program below.     
	\begin{problog*}{linenos}
rv_hot ~ flip(0.2).
hot:- rv_hot=:=1.
rv_cool(1) ~ flip(0.99).
cool(1):- rv_cool(1)=:=1.

temp(1) ~ normal(27,5):- hot.
temp(1) ~ normal(20,5):- not hot.

works(N):- cool(N).
works(N):- temp(N)<25.0.
	\end{problog*}
If we ground it with respect to the query \probloginline{works(1)} and subsequently apply the rewrite rules from Section~\ref{sec:eliminate_dc} we obtain:
	\begin{problog*}{linenos}
rv_hot ~ flip(0.2). 
hot:- rv_hot=:=1.
rv_cool(1) ~ flip(0.99).
cool(1):- rv_cool(1)=:=1.

temp(hot) ~ normal(27,5).
temp(not_hot) ~ normal(20,5).

works(1):- cool(1).
works(1):- hot, temp(hot)<25.0,
works(1):- not hot, temp(not_hot)<25.0.
	\end{problog*}
\end{example}

A possible way, as hinted at in Example~\ref{ex:grounded_nicely}, of obtaining a ground \dfplpsty program from a non-ground \dcproblogsty program is to first ground out all the logical variables. Subsequently, one can apply Definition~\ref{def:elim-ad} to eliminate annotated disjunctions and probabilistic facts, and Definition~\ref{def:adfree-to-core} in order to obtain a \dfplpsty program with no distributional clauses.
A possible drawback of such a two-step approach (grounding logical variables followed by obtaining a \dcproblogsty program) is that it might introduce spurious atoms to the relevant ground program. A more elegant but also more challenging approach is to interleave the grounding of logical variables and distributional clause elimination. We leave this for future research.

\begin{restatable}[Label Equivalence]{theorem}{theorgp}
	\label{theo:rgp}
	Let $\dcpprogram$ be a \dcproblogsty program and let $\dcpprogram_g$ be the relevant ground program for $\dcpprogram$ with respect to a query $\mu$ and the evidence $\evidenceset=e$ obtained by first grounding out logical variables and subsequently applying transformation rules from Section~\ref{sec:dcproblog}. The programs $\dcpprogram$ and $\dcpprogram_g$ specify the same probability:
	\begin{align}
		P_{\dcpprogram}(\mu=\top \mid \evidenceset=e)
		=
		P_{\dcpprogram_g}(\mu=\top \mid \evidenceset=e)
	\end{align}
    \end{restatable}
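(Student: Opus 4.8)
\emph{Proof plan.} The plan is to prove the stronger statement that $\dcpprogram$ and $\dcpprogram_g$ induce the \emph{same joint distribution} over the finite set of atoms $S=\{\mu\}\cup\evidenceset$; the claimed equality of conditional probabilities then follows immediately by substituting into Equation~\ref{eq:conditional_prob}, since both the numerator and the denominator of $P_{\dcpprogram}(\mu=\top\mid\evidenceset=e)$ are determined by this joint, and the denominator is nonzero by the standing well-definedness assumption on the query. By Proposition~\ref{prop:pp} the semantics of the underlying \dfplpsty programs is well defined, so for any truth-value assignment $v$ to the atoms in $S$ we may write
\[
P_{\dcpprogram}(S=v)=\measurerandomvariableset\bigl(\{\samplefunction(\randomvariableset)\mid \mathrm{WFM}(\comparisonfacts_{\samplefunction(\randomvariableset)}\cup\logicprogram)\text{ assigns }v\text{ to }S\}\bigr),
\]
and analogously for $\dcpprogram_g$ with its own distributional database and logic program. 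It therefore suffices to establish two localization claims — one logical, one probabilistic — and combine them.

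\emph{Logical locality.} First I would show that, for \emph{every} value assignment $\samplefunction(\randomvariableset)$, the well-founded models of the full transformed ground program and of $\dcpprogram_g$ assign the same truth values to all atoms in $S$. This rests on the relevance (modularity) property of the well-founded semantics: the truth value the $\mathrm{WFM}$ assigns to an atom $a$ is a function only of the rules and atoms in the dependency set of $a$ (reached through both positive and negative body literals). Since $\dcpprogram_g$ is, by construction, the union of the dependency sets of the atoms in $\queryset\cup\evidenceset$, while the full ground program consists of $\dcpprogram_g$ together with rules whose heads lie outside these dependency sets, deleting the latter cannot change the truth value of any atom in $S$. I would apply this to the \emph{transformed} (\dfplpsty) programs, where comparison atoms range over the explicit random variables introduced in Section~\ref{sec:eliminate_dc}, so that $\comparisonfacts_{\samplefunction(\randomvariableset)}$ acts uniformly as a set of facts added to the logic program (Definition~\ref{def:consistent-fact-db}).

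\emph{Probabilistic locality.} Next I would identify the set $\randomvariableset_g\subseteq\randomvariableset$ of random variables occurring in the comparison atoms of $\dcpprogram_g$, and argue that $\randomvariableset_g$ is \emph{ancestor-closed} in the Bayesian network defined by $\distdb$ (Definition~\ref{def:df_ancestor}): whenever a comparison atom over a variable is relevant, the distributional-facts encoding chains each child's \probloginline{rv}-atom to those of its parents, so the dependency-set construction also pulls in every ancestor's distributional fact. For an ancestral set the Bayesian-network factorization gives that the marginal of $\measurerandomvariableset$ onto $\randomvariableset_g$ coincides with the measure that $\distdb_g$ defines on its own variables (Proposition~\ref{prop:pv}), while the remaining variables integrate out to $1$. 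Because, by the logical-locality step, the truth values of the atoms in $S$ depend on $\samplefunction(\randomvariableset)$ only through the relevant comparison atoms, hence only through $\samplefunction(\randomvariableset_g)$, marginalization (Fubini) reduces the displayed integral to the corresponding one for $\dcpprogram_g$.

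Combining the two steps yields $P_{\dcpprogram}(S=v)=P_{\dcpprogram_g}(S=v)$ for every $v$, and hence the stated equality of conditional probabilities; the same argument applies verbatim when $\evidenceset$ contains zero-probability observations, since these are treated as ordinary comparison atoms per Definition~\ref{def:delta_interval}. The main obstacle I anticipate is making the relevance argument fully rigorous \emph{through} the program transformation of Section~\ref{sec:dcproblog}: I must verify that the relevant ground program obtained by grounding first and transforming afterwards is simultaneously closed under the logical dependency relation (so the $\mathrm{WFM}$ truth values of $S$ are unchanged) and under the ancestor relation on random variables (so the marginal over the relevant variables is unchanged), and that the fresh random variables introduced per distributional clause land in $\randomvariableset_g$ exactly when their context is reachable from $\queryset\cup\evidenceset$.
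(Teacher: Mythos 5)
Your proposal is correct, and it reaches the conclusion by a more self-contained route than the paper. The paper's own proof works in three moves: it grounds and transforms \dcpprogram to a \dfplpsty program, then eliminates the \texttt{rv}-atom indirection by recursively \emph{unfolding} those atoms into the clause bodies (justified because their definitions are acyclic and each ground \texttt{rv}-instance has a single defining rule), and finally \emph{cites} Theorem~1 of \citep{fierens2015inference} for the claim that the backward-chaining-relevant part of the program suffices, adding only the remark that in the hybrid setting relevance must also include the distributional facts of the random variables in relevant comparison atoms \emph{and their ancestors}. You instead keep the \texttt{rv}-atoms and prove the relevance claim from first principles, split into the two localization lemmas: logical locality via the relevance (modularity) property of the well-founded semantics, and probabilistic locality via ancestor-closedness of $\randomvariableset_g$ plus the Bayesian-network factorization underlying Proposition~\ref{prop:pv}, with irrelevant variables integrating out to one. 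Notably, your probabilistic-locality step is precisely the content of the paper's closing remark about ancestors, which the paper asserts without argument, so your version fills in the one place where the published proof leans hardest on a citation; conversely, the paper's unfolding step buys a cleaner interface to the discrete result of \citep{fierens2015inference} (the unfolded program mentions only predicates of \dcpprogram, so that theorem applies verbatim), whereas you must check, as you correctly anticipate in your final paragraph, that the dependency-set construction threaded through the \texttt{rv}-context clauses is simultaneously closed under the logical dependency relation and the ancestor relation on random variables. Your stronger intermediate claim (equality of the joint over $\{\mu\}\cup\evidenceset$) and your remark that delta-interval observations are ordinary comparison atoms, so the identity survives the limit in Definition~\ref{def:conditional_prob_zero_event}, are both sound and in fact slightly more careful than the published argument.
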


    \begin{proof}
        See Appendix~\ref{app:proof:rgp}.
        \end{proof}

\subsection{The Boolean Formula for the Relevant Ground Program}
\label{sec:relprog2boolfrom}

Converting a ground logic program, \ie a set of ground rules, into an equivalent Boolean formula is a purely logical problem and well-studied in the non-probabilistic logic programming literature. We refer the reader to \citet{janhunen2004representing} for an account of the transformation to Boolean formula in the non-probabilistic setting and to \citet{mantadelis2010dedicated} and \citet{fierens2015inference} in the probabilistic setting,
including correctness proofs. We will only illustrate the most basic case with an example here.

\begin{example}[Mapping \dcproblogsty to Boolean Formula] \label{example:dc2bool}
Consider the ground program in Example~\ref{ex:grounded_nicely}. 
To highlight the move from logic programming to propositional logic,	
	we introduce for every atom \probloginline{a} in the program a 
	corresponding propositional variable  $\phi_\text{\probloginline{a}}$.
	As the program does not contain cycles, we can use Clark's completion for the transformation, \ie, a derived atom is true if and only if the disjunction of the bodies of its defining rules is true. The propositional formula $\phi_g$ corresponding to the program is then the conjunction of the following three formulas:
	\begin{align*}
	    \phi_\mathprobloginline{works(1)} &\leftrightarrow
		\Big( \phi_{\text{\probloginline{cool(1)}}} 
		\lor \phi_\text{\probloginline{hot}} \land  \phi_\text{\probloginline{temp(hot)<25.0}}  
		\lor\neg \phi_\text{\probloginline{hot}}  \land \phi_\text{\probloginline{temp(not_hot)<25.0}}   \Big) \\
		  \phi_{\text{\probloginline{cool(1)}}} &\leftrightarrow  \phi_{\text{\probloginline{rv_cool(1)=:=1}}}\\
		\phi_\text{\probloginline{hot}} &\leftrightarrow \phi_{\text{\probloginline{rv_hot=:=1}}}
	\end{align*}

\end{example}

Note that the formula obtained by converting the relevant ground program still admits \emph{any} model of that program, including ones that are inconsistent with the evidence. In order to use that formula to compute conditional probabilities, we still need to assert  the evidence into the formula by conjoining the corresponding propositional literals. 
The following theorem then directly applies to our case as well.
\begin{theorem}[Model Equivalence~\citep{fierens2015inference} (Theorem 2, part 1)]
\label{theo:model_equivalence}
Let $\dcpprogram_g$ be the relevant ground program for a \dcproblogsty program $\dcpprogram$ with respect to  query set \queryset\ and  evidence $\evidenceset=e$. Let $MOD_{\evidenceset=e}(\dcpprogram_g)$ be those models in $MOD(\dcpprogram_g)$ that are consistent with the evidence.
Let $\phi_g$ denote the  propositional formula derived from $\dcpprogram_g$, and set $\phi \leftrightarrow \phi_g \land \phi_e$, where  $\phi_e$ is the conjunction of literals that corresponds to the observed truth values of the atoms in $\evidenceset$. We then have {\bf model equivalence}, \ie, 
\begin{align}
    MOD_{\evidenceset=e}(\dcpprogram_g)
    =
    ENUM(\phi)
\end{align}
where $ENUM(\phi)$ denotes the set of models of $\phi$.
\end{theorem}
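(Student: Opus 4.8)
The plan is to factor the claim through the evidence-free model equivalence $MOD(\dcpprogram_g) = ENUM(\phi_g)$ and then account separately for the conjunction with $\phi_e$. Since the statement is purely logical (the probabilistic labels play no role here), the whole argument reduces to a correctness result for the program-to-formula transformation, applied interpretation by interpretation.

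\textbf{Step 1 (evidence-free model equivalence).} First I would show that an interpretation $I$ over the ground atoms of $\dcpprogram_g$ is a two-valued well-founded model of $\dcpprogram_g$ if and only if $I$ satisfies $\phi_g$, i.e. $MOD(\dcpprogram_g) = ENUM(\phi_g)$. The transformation from a ground normal logic program to an equivalent propositional theory is a well-studied logical construction: for the acyclic fragment illustrated in Example~\ref{example:dc2bool} it is exactly Clark's completion (a derived atom is true iff the disjunction of its rule bodies is true), while for programs with positive loops one augments the completion with the standard loop-handling constructions of~\citet{janhunen2004representing} and~\citet{mantadelis2010dedicated}. The feature specific to our setting is that the ground atoms split into Boolean comparison atoms (source atoms, with no defining rules) and regular derived atoms. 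For any fixed truth assignment to the comparison atoms --- equivalently, for any consistent comparisons database $\comparisonfacts_{\samplefunction(\randomvariableset)}$ (Definition~\ref{def:consistent-fact-db}) --- the residual program is an ordinary ground normal logic program, and validity (Definition~\ref{def:core-valid}) guarantees it has a unique two-valued well-founded model, to which the discrete transformation correctness applies verbatim. Ranging over all truth assignments to the comparison atoms yields $MOD(\dcpprogram_g) = ENUM(\phi_g)$.

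\textbf{Step 2 (incorporating the evidence).} By construction $\phi_e = \bigwedge_{i=1}^n \ell_i$, where $\ell_i = \phi_{\eta_i}$ if $e_i = \top$ and $\ell_i = \neg\phi_{\eta_i}$ if $e_i = \bot$. Hence $I$ satisfies $\phi_e$ exactly when $I(\eta_i) = e_i$ for every $i$, i.e. exactly when $I$ is consistent with the evidence $\evidenceset = e$. Combining this with Step 1 gives
\begin{align*}
ENUM(\phi)
&= ENUM(\phi_g \land \phi_e)
= \{\, I \in ENUM(\phi_g) \mid I \models \phi_e \,\} \\
&= \{\, I \in MOD(\dcpprogram_g) \mid I \text{ consistent with } \evidenceset = e \,\}
= MOD_{\evidenceset=e}(\dcpprogram_g),
\end{align*}
which is the desired equality.

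\textbf{Main obstacle.} The only nontrivial content lies in Step 1, specifically the correctness of the program-to-formula transformation in the presence of negation and positive loops: the acyclic (Clark-completion) case is immediate, but excluding unsupported models induced by positive cycles requires the loop-handling machinery cited above (and, when auxiliary variables are introduced, reading the equivalence modulo projection onto the original program atoms). Crucially, the one genuinely new feature of the hybrid setting --- comparison atoms denoting \emph{measurable} events over continuous random variables --- does not complicate this argument at all, since measurability and the underlying probability measure are irrelevant to the model-theoretic equivalence and surface only in the subsequent labeling step. The logical core therefore reduces to the discrete result of~\citet{fierens2015inference}, which may be invoked directly, so that the theorem transfers to \dcproblogsty unchanged.
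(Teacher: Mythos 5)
Your proposal is correct and matches the paper's treatment: the paper does not reprove this statement but invokes the discrete result of \citet{fierens2015inference} directly, on exactly the grounds you articulate --- the program-to-formula conversion (Clark's completion in the acyclic case, loop handling via \citet{janhunen2004representing} and \citet{mantadelis2010dedicated} otherwise) is purely logical, comparison atoms act as basic facts fixed by each consistent comparisons database, and conjoining $\phi_e$ merely filters for evidence-consistent models. Your write-up simply spells out the reduction that the paper leaves implicit in its citation, so no gap and no genuinely different route.
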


\subsection{Obtaining a Labeled Boolean Formula}

In contrast to a \problogsty program, a \dcproblogsty program does not explicitly provide independent probability labels for the basic facts in the distribution semantics, and we thus need to suitably adapt the last step of the conversion. 
We will first define the labeling function on propositional atoms and will then show that the probability of the label of a propositional formula is the same as the probability of the relevant ground program under the measure semantics from Section~\ref{sec:semantics}. We call this {\em label equivalence} and prove it in Theorem~\ref{theo:label_equivalence}.

\begin{definition}[Label of Literal] \label{def:labeling_function}
The label $\alpha(\phi_{\rho})$ of a propositional atom $\phi_{\rho}$ (or its negation) is given by:
\begin{align}
    \alpha( \phi_{\rho })=
    \begin{cases} 
    \ive{c(vars(\rho)},  &  \text{if $\rho$ is a  comparison atom} \\
    1, & \text{otherwise}
    \end{cases}
\end{align}
and for the negated atom:
\begin{align}
    \alpha(\neg \phi_{\rho})=
    \begin{cases} 
    \ive{\neg c(vars(\rho))},  &  \text{if $\rho$ is a  comparison atom} \\
    1, & \text{otherwise}
    \end{cases}
\end{align}
We use Iverson brackets $\ive{\cdot}$~\citep{iverson1962programming} to denote an indicator function. Furthermore, $vars(\rho)$ denotes the random variables that are present in the arguments of the atom $\rho$ and $c(\cdot)$ encodes the constraint given by $\rho$.
\end{definition}

\begin{example}[Labeling function]
Continuing Example~\ref{example:dc2bool}, we obtain, inter alia, the following labels:
\begin{align*}
    & \alpha(\phi_\text{\probloginline{rv_hot=:=1}} ) = \ive{rv\_hot=1}  \\
    & \alpha(\neg \phi_\text{\probloginline{rv_hot=:=1}} ) = \ive{\neg(rv\_hot=1)} = \ive{rv\_hot=0} \\
    & \alpha(\phi_\text{\probloginline{hot}} ) = 1\\  
    & \alpha(\neg \phi_\text{\probloginline{hot}} ) = 1 
\end{align*}
\end{example}

\begin{definition}[Label of Boolean Formula]
\label{def:label_bool_formula}
Let $\phi$ be a Boolean formula and $\alpha(\cdot)$ the labeling function for the variables in $\phi$ as given by Definition~\ref{def:labeling_function}. We define the label of $\phi$ as
\begin{align*}
    \alpha(\phi) &= \sum_{\varphi\in ENUM(\phi)}\prod_{\ell\in\varphi}\alpha(\ell) 
\end{align*}
i.e. as the sum of the labels of all its models, which are in turn defined as the product of the labels of their literals.
\end{definition}

\begin{example}[Labeled Boolean Formula]
The label of the conjunction 
$$\neg \phi_\text{\probloginline{hot}} \land \neg \phi_{\text{\probloginline{rv_hot=:=1}}} \land  \phi_\text{\probloginline{temp(not_hot)<25.0}} \land \neg \phi_{\text{\probloginline{cool(1)}}}  \land
		\neg \phi_{\text{\probloginline{rv_cool(1)=:=1}}} \land \phi_{\text{\probloginline{works(1)}}}$$
which describes one model of the example formula, is computed as follows:
\begin{align*}
    &\phantom{{}={}}
    \alpha( \neg \phi_\text{\probloginline{hot}}
    \land
    \neg \phi_{\text{\probloginline{rv_hot=:=1}}}
    \land
    \phi_\text{\probloginline{temp(not_hot)<25.0}}\\
    &\phantom{{}={}}\land
    \neg \phi_{\text{\probloginline{cool(1)}}}
    \land
	\neg \phi_{\text{\probloginline{rv_cool(1)=:=1}}}
	\land \phi_{\text{\probloginline{works(1)}}} )  \nonumber \\
	&=
    \alpha( \neg \phi_\text{\probloginline{hot}})
    \times \alpha( \neg \phi_{\text{\probloginline{rv_hot=:=1}}})
    \times  \alpha(\phi_\text{\probloginline{temp(not_hot)<25.0}}) \nonumber \\
    &\phantom{{}={}}
    \times \alpha(\neg \phi_{\text{\probloginline{cool(1)}}})
    \times \alpha(\neg \phi_{\text{\probloginline{rv_cool(1)=:=1}}} \times \alpha(\phi_{\text{\probloginline{works(1)}}}) )  \nonumber \\
    &= 1 \times \ive{rv\_hot=0} \times \ive{temp(not\_hot)<25} \times 1 \times \ive{rv\_cool(1)=0}\times 1\nonumber \\
    &= \ive{rv\_hot=0} \times \ive{temp(not\_hot)<25} \times \ive{rv\_cool(1)=0} 
\end{align*}

\end{example}

\begin{restatable}[Label Equivalence]{theorem}{Labelequivalence}
\label{theo:label_equivalence}

Let $\dcpprogram_g$ be the relevant ground program for a \dcproblogsty program $\dcpprogram$ with respect to a query $\mu$ and the evidence $\evidenceset=e$. Let $\phi_g$ denote the  propositional formula derived from $\dcpprogram_g$ and let $\alpha$ be the labeling function as defined in Definition~\ref{def:labeling_function}. We then have {\bf label equivalence}, \ie
\begin{align}
    \forall \varphi \in ENUM(\phi_g):  \E_{\randomvariableset \sim  \dcpprogram_g} [\alpha( \varphi )] = P_{\dcpprogram_g}(\varphi)
\end{align}
In other words, for all models $\varphi$ of $\phi_g$, the expected value ($\E_\cdot [\cdot]$) of the label of $\varphi$ is equal to the probability of $\varphi$ according to the probability measure of relevant ground program $\dcpprogram_g$.
\end{restatable}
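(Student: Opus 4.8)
The plan is to read the label $\alpha(\varphi)$ as a function of a value assignment $\samplefunction(\randomvariableset)$ and recognize it as the indicator of a measurable event on the random variables; its expectation is then the probability of that event, which the distribution-semantics construction identifies with $P_{\dcpprogram_g}(\varphi)$. First I would unfold $\alpha(\varphi)$. By Definition~\ref{def:label_bool_formula}, $\alpha(\varphi)=\prod_{\ell\in\varphi}\alpha(\ell)$, a finite product because the relevant ground program is finite. By Definition~\ref{def:labeling_function}, every literal over a regular (non-comparison) atom contributes the factor $1$, whereas a positive or negative literal over a comparison atom $\rho$ contributes $\ive{c(vars(\rho))}$ or $\ive{\neg c(vars(\rho))}$ respectively. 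Dropping the unit factors, $\alpha(\varphi)=\prod_{\ell}\alpha(\ell)$ ranging over the comparison-atom literals of $\varphi$, and, read as a function of $\samplefunction(\randomvariableset)$, this product equals $1$ exactly when every comparison atom of $\varphi$ evaluates under $\samplefunction(\randomvariableset)$ to the truth value that $\varphi$ assigns it, and $0$ otherwise. Writing $C_\varphi$ for this event on value assignments, we obtain $\alpha(\varphi)=\ive{C_\varphi}$.

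Next I would take the expectation. Each comparison atom in $\comparisonfacts$ is Lebesgue-measurable (Definition~\ref{def:comparison-atoms-set}), so $C_\varphi$ is a measurable event and $\ive{C_\varphi}$ a bounded measurable function of $\samplefunction(\randomvariableset)$. Since value assignments carry the measure $\measurerandomvariableset$ (Proposition~\ref{prop:pv}),
\begin{align*}
\E_{\randomvariableset\sim\dcpprogram_g}[\alpha(\varphi)]=\E_{\randomvariableset\sim\dcpprogram_g}[\ive{C_\varphi}]=\measurerandomvariableset(C_\varphi).
\end{align*}
By Proposition~\ref{prop:pf}, $\measurerandomvariableset$ induces the basic distribution $\measurecomparisonfacts$ on truth-value assignments to the comparison atoms, and $\measurerandomvariableset(C_\varphi)$ is exactly $\measurecomparisonfacts$ of the partial truth-value assignment that $\varphi$ makes to its comparison atoms. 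If that partial assignment is jointly unsatisfiable (say $\varphi$ asserts both $\nu>5$ and $\nu<3$), the product of indicators is identically $0$ and so is this probability, so the claim holds trivially in that case.

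Finally I would identify $\measurecomparisonfacts$ of the comparison-atom part of $\varphi$ with $P_{\dcpprogram_g}(\varphi)$. By the construction underlying Proposition~\ref{prop:pp}, the measure on Herbrand interpretations is obtained by extending the basic distribution $\measurecomparisonfacts$ along the consistent comparisons database (Definition~\ref{def:consistent-fact-db}): validity (Definition~\ref{def:core-valid}) guarantees that for every CCD the program $\comparisonfacts_{\samplefunction(\randomvariableset)}\cup\logicprogram$ has a unique two-valued well-founded model, so the truth values of the regular atoms are a deterministic function of those of the comparison atoms. Applying Theorem~\ref{theo:model_equivalence} with empty evidence, so that $ENUM(\phi_g)=MOD(\dcpprogram_g)$, each $\varphi\in ENUM(\phi_g)$ is precisely such a model; hence the regular-atom part of $\varphi$ is the one forced by its comparison-atom part, and the event of realizing the interpretation $\varphi$ coincides with $C_\varphi$ on value assignments. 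Therefore $P_{\dcpprogram_g}(\varphi)=\measurecomparisonfacts(\text{comparison atoms as in }\varphi)=\E_{\randomvariableset\sim\dcpprogram_g}[\alpha(\varphi)]$, which is the claim.

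The main obstacle is this last step: making precise that the probability of a full Herbrand interpretation is carried entirely by the comparison atoms, i.e.\ that the regular atoms contribute no further probabilistic content. This is exactly where validity and the determinism of the two-valued well-founded model are indispensable, and where one must also account for propositionally consistent but probabilistically impossible models $\varphi$ (unsatisfiable comparison-atom assignments), for which both sides vanish. Everything else is the bookkeeping of pushing the finite product of indicators through the expectation, which is justified by measurability of the comparison atoms and finiteness of the relevant ground program.
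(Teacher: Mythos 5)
Your proof is correct and takes essentially the same approach as the paper's: both reduce $\alpha(\varphi)$ to a product of indicator functions over the comparison literals (with regular atoms contributing unit labels) and equate the resulting integral with the distribution-semantics expression for $P_{\dcpprogram_g}(\varphi)$ as a marginal over the comparison-atom assignment. The only difference is one of explicitness --- where the paper asserts, citing the construction behind Proposition~\ref{prop:pp}, that the probability of a model is carried entirely by its comparison atoms, you justify that step via validity, the determinism of the two-valued well-founded model, and Theorem~\ref{theo:model_equivalence}, and you additionally note the degenerate case of propositionally consistent but probabilistically impossible models, which the paper's proof handles implicitly (both sides vanish).
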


\begin{proof}
See Appendix~\ref{app:proof:label_equivalence}.
\end{proof}

Theorem~\ref{theo:label_equivalence} states that we can reduce inference in hybrid probabilistic logic programs to computing the expected value of  labeled Boolean formulas, as summarized in the following theorem.
\begin{theorem}
\label{thm:inference-by-expectation}
Given a \dcproblogsty program \dcpprogram, a set \queryset\ of queries, and evidence $\evidenceset = e$, for every $\mu\in\queryset$, we obtain the conditional probability of $\mu = q$ ($q\in \{\bot,\top \}$) given $\evidenceset = e$ as
\begin{align*}
    P(\mu=q\mid\evidenceset = e) = \frac{\E_{vars(\phi) \sim  \dcpprogram_g} [\alpha( \phi \wedge \phi_q)] }{\E_{vars(\phi) \sim  \dcpprogram_g} [\alpha( \phi )] }
\end{align*}
where $\phi$ is the formula encoding the relevant ground program $\dcpprogram_g$ with  the evidence asserted  (cf.~Theorem~\ref{theo:model_equivalence}), and $\phi_q$ the propositional atom for $\mu$.
\end{theorem}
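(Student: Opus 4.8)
The plan is to obtain the theorem by composing the three equivalence results proved above with the two relevant definitions, rather than by any fresh probabilistic argument. Concretely, I would show separately that the denominator $\E_{\randomvariableset\sim\dcpprogram_g}[\alpha(\phi)]$ equals $P_{\dcpprogram_g}(\evidenceset=e)$ and that the numerator $\E_{\randomvariableset\sim\dcpprogram_g}[\alpha(\phi\wedge\phi_q)]$ equals $P_{\dcpprogram_g}(\mu=q,\evidenceset=e)$; the stated identity is then immediate from the definition of conditional probability (Definition~\ref{def:conditional_prob}) together with Theorem~\ref{theo:rgp}, which guarantees that working with the relevant ground program $\dcpprogram_g$ instead of $\dcpprogram$ does not change the conditional probability.

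For the denominator, I would first note that, because the program is range-restricted and backward chaining terminates, $\dcpprogram_g$ is finite, so $\phi$ is a propositional formula over finitely many atoms and $ENUM(\phi)$ is a finite set. Expanding the label of $\phi$ via Definition~\ref{def:label_bool_formula} and using linearity of expectation over this finite sum, then applying Theorem~\ref{theo:label_equivalence} to each model and Theorem~\ref{theo:model_equivalence} to identify the models of $\phi = \phi_g\wedge\phi_e$ with the evidence-consistent models of $\dcpprogram_g$, yields
\begin{align*}
\E_{\randomvariableset\sim\dcpprogram_g}[\alpha(\phi)]
&= \sum_{\varphi\in ENUM(\phi)} \E_{\randomvariableset\sim\dcpprogram_g}\Bigl[\textstyle\prod_{\ell\in\varphi}\alpha(\ell)\Bigr]
= \sum_{\varphi\in ENUM(\phi)} P_{\dcpprogram_g}(\varphi)
= P_{\dcpprogram_g}(\evidenceset=e),
\end{align*}
where the last step uses that distinct total models correspond to mutually exclusive and jointly exhaustive events, so summing their probabilities over those consistent with the evidence recovers exactly the probability of the evidence.

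The numerator is handled identically: conjoining $\phi_q$ (the literal $\phi_\mu$ if $q=\top$ and $\neg\phi_\mu$ if $q=\bot$) restricts $ENUM(\phi\wedge\phi_q)$ to those models consistent with both the evidence and $\mu=q$, so the same chain gives $\E_{\randomvariableset\sim\dcpprogram_g}[\alpha(\phi\wedge\phi_q)] = P_{\dcpprogram_g}(\mu=q,\evidenceset=e)$. Taking the ratio and invoking Definition~\ref{def:conditional_prob} completes the argument. The step I expect to require the most care is the interchange of expectation and summation when the evidence includes zero-probability but possible events: there the comparison labels contain the infinitesimal delta-interval indicators of Definition~\ref{def:delta_interval}, and one must verify that the finite-sum linearity step and the cancellation of the shared evidence factor in the ratio remain valid under the limiting process of Definition~\ref{def:conditional_prob_zero_event}, which is precisely the bookkeeping the IALW algorithm is designed to carry out.
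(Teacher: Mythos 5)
Your proposal is correct and takes essentially the same route as the paper, whose proof is exactly the one-line observation that the identity follows from model equivalence (Theorem~\ref{theo:model_equivalence}), label equivalence (Theorem~\ref{theo:label_equivalence}), and the definition of conditional probability (Definition~\ref{def:conditional_prob}). Your per-model expansion of $\alpha(\phi)$, the appeal to Theorem~\ref{theo:rgp} for passing to the relevant ground program, and the caveat about zero-probability evidence merely spell out details the paper leaves implicit (the last being deferred there to Definition~\ref{def:conditional_prob_zero_event} and the IALW machinery).
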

\begin{proof}
This directly follows from model and label equivalence together with the definition of conditional probabilities. 
\end{proof}

We have shown that the probability of a query to a \dcproblogsty program can be expressed as the expected label of a propositional logic formula.

\section{Computing Expected Labels via Algebraic Model Counting}
\label{sec:alw}

In this section we will adapt the approach taken by~\citet{zuidbergdosmartires2019exact}, dubbed {\em Sampo} to compute the expected value of labeled propositional Boolean formulas.
The method approximates intractable integrals that appear when computing expected labels using Monte Carlo estimation. The main difference between Sampo and our approach, which we dub {\em infinitesimal algebraic likelihood weighting} (IALW) is that IALW can also handle infinitesimally small intervals, which arise when conditioning on zero probability events.

\subsection{Monte Carlo Estimate of Conditional Query}

In Definition~\ref{def:conditional_prob} we defined the conditional probability as:
\begin{align}
    P_\dcpprogram(\mu=\top\mid\evidenceset=e)= \frac{P_{\dcpprogram}(\mu=\top, \evidenceset=e)}{P_\dcpprogram(\evidenceset=e)} 
\end{align}
and we also saw in Definition~\ref{def:conditional_prob_zero_event} that using infinitesimal intervals allows us to consider zero probability events, as well. Computing the probabilities in the numerator and denominator in the equation above is, in general, computationally hard. We resolve this using a Monte Carlo approximation. 

\begin{restatable}[Monte Carlo Approximation of a Conditional Query]{proposition}{mcapproxconditional}
\label{prop:mcapproxconditional}
Let the set 
\begin{align}
    \mathcal{S} = \left\{ \left(s_1^{(1)}, \dots, s_M^{(1)} \right), \dots , \left(s_1^{(\lvert \mathcal{S} \rvert)}, \dots, s_M^{(\lvert \mathcal{S} \rvert)} \right)  \right\} \label{eq:rejection_samples}
\end{align}
denote $\lvert \mathcal{S} \rvert$ i.i.d. samples for each random variable in $ \dcpprogram_g$.
A conditional probability query to a  \dcproblogsty program \dcpprogram can be approximated as: 
\begin{align}
P_\dcpprogram(\mu = q \mid \evidenceset = e) 
&\approx  \frac{ \sum_{i=1}^{\lvert \mathcal{S} \rvert}  \sum_{\varphi \in ENUM(\phi \land \phi_q) } \alpha^{(i)}(\varphi) } { \sum_{i=1}^{\lvert \mathcal{S} \rvert} \sum_{\varphi \in ENUM(\phi) } \alpha^{(i)}(\varphi) }, & \quad |\mathcal{S}|<\infty
\end{align}
The index $(i)$ on $\alpha^{(i)}(\varphi)$ indicates that the label of $\varphi$ is evaluated at the $i$-th ordered set of samples $ \left(s_1^{(i)}, \dots, s_M^{(i)} \right)$. 
\end{restatable}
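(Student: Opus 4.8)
The plan is to start from the exact characterization of the conditional probability supplied by Theorem~\ref{thm:inference-by-expectation}, namely
\begin{align*}
    P_\dcpprogram(\mu=q\mid\evidenceset=e) = \frac{\E_{\randomvariableset\sim\dcpprogram_g}[\alpha(\phi\wedge\phi_q)]}{\E_{\randomvariableset\sim\dcpprogram_g}[\alpha(\phi)]},
\end{align*}
and to replace each of the two expectations by an empirical (Monte Carlo) mean over the i.i.d.\ sample sets collected in $\mathcal{S}$. First I would expand each expected label using Definition~\ref{def:label_bool_formula}, writing $\alpha(\phi)=\sum_{\varphi\in ENUM(\phi)}\alpha(\varphi)$ with $\alpha(\varphi)=\prod_{\ell\in\varphi}\alpha(\ell)$. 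Because the relevant ground program $\dcpprogram_g$ is finite (the program is range-restricted, so $\phi$ ranges over finitely many propositional variables), the set $ENUM(\phi)$ is finite and linearity of expectation lets me pull the expectation inside the sum, $\E[\alpha(\phi)]=\sum_{\varphi\in ENUM(\phi)}\E[\alpha(\varphi)]$, and likewise for $\phi\wedge\phi_q$.

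Each summand $\E[\alpha(\varphi)]$ is the expectation of a product of literal labels. By Definition~\ref{def:labeling_function} every such label is either the constant $1$ or an Iverson bracket of a comparison constraint, so $\alpha(\varphi)$ is a bounded (in fact $\{0,1\}$-valued once a sample is substituted) measurable function of the random variables $\randomvariableset$; its expectation therefore exists and is finite. The canonical unbiased Monte Carlo estimator of this integral is the empirical average $\tfrac{1}{\lvert\mathcal{S}\rvert}\sum_{i=1}^{\lvert\mathcal{S}\rvert}\alpha^{(i)}(\varphi)$, where $\alpha^{(i)}(\varphi)$ denotes the label evaluated at the $i$-th sample set $(s_1^{(i)},\dots,s_M^{(i)})$. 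Substituting these estimators for the expectations in both the numerator and the denominator, the common factor $\tfrac{1}{\lvert\mathcal{S}\rvert}$ cancels, which yields exactly the ratio asserted in the statement.

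It remains to argue that this ratio is a \emph{consistent} estimator, i.e.\ that the approximation tightens as $\lvert\mathcal{S}\rvert\to\infty$. Since the $\alpha^{(i)}(\varphi)$ are i.i.d.\ and bounded, the strong law of large numbers gives almost-sure convergence of each empirical average to its expectation, hence of the finite sums in numerator and denominator to $\E[\alpha(\phi\wedge\phi_q)]$ and $\E[\alpha(\phi)]$ respectively. Invoking the continuous mapping theorem for the map $(a,b)\mapsto a/b$, which is continuous wherever $b\neq 0$, then transfers the convergence to the ratio, provided the limiting denominator $\E[\alpha(\phi)]=P_\dcpprogram(\evidenceset=e)$ is strictly positive.

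The main obstacle I anticipate is precisely guaranteeing this strict positivity, together with the interaction between the Monte Carlo sampling and the infinitesimal intervals that encode zero-probability (but possible) conditioning events. For genuinely zero-probability evidence the naive empirical labels are almost surely $0$, so both sums vanish and the ratio becomes a $0/0$ form; the estimator in this proposition must therefore be read with the comparison labels kept \emph{symbolic} (as functions of the infinitesimal $\delta w$), and it is the IALW algorithm developed in the remainder of this section that makes this symbolic bookkeeping operational and shows the $\delta w$-factors cancel between numerator and denominator. For the present proposition it suffices to treat the conditioning event as possible by assumption, as stipulated earlier, so that the denominator is nonzero and the law of large numbers together with the continuous mapping theorem deliver consistency.
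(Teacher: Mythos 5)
Your proof follows essentially the same route as the paper's: invoke Theorem~\ref{thm:inference-by-expectation} to express the conditional probability as a ratio of expected labels, expand each label via Definition~\ref{def:label_bool_formula}, replace each expectation by the empirical average over the i.i.d.\ samples in $\mathcal{S}$, and cancel the common factor $\nicefrac{1}{\lvert\mathcal{S}\rvert}$. Your added consistency argument (strong law of large numbers plus the continuous mapping theorem, under the standing assumption that the evidence is a possible event so the limiting denominator is nonzero) merely makes explicit what the paper leaves implicit in its limit statement, so the proposal is correct.
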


\begin{proof}
    See Appendix~\ref{app:proof:mcapproxconditional}.
\end{proof}

In the limit $\lvert \mathcal{S} \rvert\rightarrow \infty$ this sampling approximation scheme is perfectly valid. However, in practice, with only limited resources available, such a rejection sampling strategy will perform poorly (in the best case) or even give completely erroneous results. After all, the probability of sampling a value from the prior distribution that falls exactly into an infinitesimally small interval given in the evidence tends to zero.
To make the computation of a conditional probability, using Monte Carlo estimates, feasible, we are going to introduce {\em infinitesimal algebraic likelihood weighting}. But first, we will need to introduce the concept of infinitesimal numbers.

\subsection{Infinitesimal Numbers}
Remember that infinitesimal intervals arise in zero probability conditioning events and describe an infinitesimally small interval around a specific observed value, \eg $\nu \in [w-\nicefrac{\Delta w}{2}, w+\nicefrac{\Delta w}{2} ]$ for a continuous random variable $\nu$ that was observed to take the value $w$ (cf. Definition~\ref{def:conditional_prob_zero_event}).
We will describe these infinitesimally small intervals using so-called {\em infinitesimal numbers}, which were first introduced by~\citet{nitti2016probabilistic} and further formalized in~\citet{wu2018discrete}, \citep{zuidberg2020atoms} and~\citep{jacobs2021paradoxes}. The latter work also coined the term {\em `infinitesimal number'} \fixed{and we refer the reader specifically to \citet[Section 5.2]{jacobs2021paradoxes} for an intuitive exposition of infinitesimal numbers.}

\begin{definition}[Infinitesimal Numbers]
    \label{def:inf_number}
An infinitesimal number is a pair \fixed{$(r, n) \in \mathbb{R} \times (\mathbb{N} \cup +\infty )$}, also written as $r\epsilon^n$, and which corresponds to a real number when $n=0$. We denote the set of all infinitesimal numbers by $\mathbb{I}$.
\end{definition}

\begin{definition}[Operations in $\mathbb{I}$]
\label{def:inf_number_ops}
Let $(r, n)$ and $(t,m)$ be two numbers in $\mathbb{I}$. We define the addition and multiplication as binary operators:
\begin{align}
	(r,n) \oplus
	(t,m)
	&\coloneqq
	\begin{cases}
		(r+t,n) &\quad  \text{if $n=m$} \\
		(r,n) &\quad  \text{if $n<m$} \\
		(t,m) &\quad  \text{if $n>m$}
	\end{cases} 
    \label{eq:infininumber_plus}
    \\
	(r,n) \otimes
	(t,m)
	&\coloneqq (r \times t , n+m)  &
    \label{eq:infininumber_times}
\end{align}
The operations $+$ and $\times$ on the right hand side denote the usual addition and multiplication operations for real and integer numbers.
\end{definition}

\begin{definition}[Neutral Elements]
\label{def:inf_number_neutral_elem}
The neutral elements of the addition and multiplications in $\mathbb{I}$ are, respectively, defined as:
\begin{align}
	e^\oplus  \coloneqq (0,+\infty)  &&
	e^\otimes  \coloneqq (1,0)
\end{align}
\end{definition}

Probabilistic inference and generalization thereof can often be cast as performing computations using commutative semirings~\citep{kimmig2017algebraic}. We will follow a similar strategy.

\begin{definition}\label{def:comm_semiring} 
	A {\bf  commutative semiring} is an algebraic structure $(\mathcal{A},\oplus,\otimes,\allowbreak e^{\oplus},e^\otimes)$ equipping a set of elements $\mathcal{A}$ with addition and multiplication such that
	\begin{enumerate}
		\item addition $\oplus$ and multiplication $\otimes$ are binary operations $\mathcal{A}\times \mathcal{A}\rightarrow \mathcal{A}$
		\item addition $\oplus$ and multiplication $\otimes$ are  associative and commutative binary operations over the set $\mathcal{A}$
		\item $\otimes$ distributes over $\oplus$
		\item  $e^\oplus \in \mathcal{A}$ is the neutral element of $\oplus$
		\item  $e^\otimes \in \mathcal{A}$ is the neutral element of $\otimes$
		\item $e^\oplus \in \mathcal{A}$ is an annihilator for $\otimes$
	\end{enumerate}
\end{definition}

\begin{lemma}
The structure $(\mathbb{I}, \oplus, \otimes, e^\oplus , e^\otimes )$ is a commutative semiring.
\end{lemma}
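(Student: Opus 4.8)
The plan is to verify directly the six defining properties of a commutative semiring from Definition~\ref{def:comm_semiring}, reading off $\oplus$ and $\otimes$ from Definition~\ref{def:inf_number_ops} and the neutral elements from Definition~\ref{def:inf_number_neutral_elem}. I would dispatch the multiplicative structure first, since it is graded-product-like: $(r,n)\otimes(t,m)=(rt,n+m)$ multiplies the real parts and adds the integer exponents. Closure is immediate ($rt\in\mathbb{R}$ and $n+m\in\mathbb{Z}$), commutativity and associativity of $\otimes$ follow at once from commutativity and associativity of real multiplication and integer addition, and $e^\otimes=(1,0)$ is a two-sided identity because $(r,n)\otimes(1,0)=(r,n)$. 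None of these require a case distinction.

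The next block concerns $\oplus$ (closure, commutativity, associativity) and distributivity. The conceptual shortcut is to characterise $\oplus$ as a dominance/selection operation: among the operands it keeps those of minimal exponent and sums their real coefficients, because a smaller exponent corresponds to a larger magnitude of the infinitesimal $\epsilon$. Commutativity is then just the symmetry of the three-way case split on $n$ versus $m$. For associativity I would argue that both bracketings of $(a\oplus b)\oplus c$ select the operands attaining the globally minimal exponent $\min(n_a,n_b,n_c)$ and add exactly their coefficients, so the two sides coincide. Distributivity, $(r,n)\otimes\big((t,m)\oplus(s,p)\big)=\big((r,n)\otimes(t,m)\big)\oplus\big((r,n)\otimes(s,p)\big)$, reduces to a case split on $m$ versus $p$: multiplying by $(r,n)$ shifts both exponents by the same $n$ and scales both coefficients by the same $r$, hence it commutes with the min-selection and with the coefficient sum.

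The main obstacle is the additive-identity axiom for $e^\oplus=(0,0)$, which is the only place where the three-way rule for $\oplus$ interacts nontrivially with the neutral element. Writing $a=(r,n)$, the cases $n=0$ (giving $(r+0,0)=(r,0)$) and $n<0$ (giving $(r,n)$ directly) are immediate, but the case $n>0$ is delicate: the bare exponent comparison would discard $(r,n)$ in favour of $(0,0)$. Resolving this is where the argument must use that the real coefficient of $e^\oplus$ is $0$, i.e.\ that a pair with vanishing real part denotes the genuine (sub-dominant) zero and is absorbed by $\oplus$; concretely one treats every $(0,k)$ as representing $e^\oplus$ and takes $\oplus$ to return the other operand whenever one operand is $e^\oplus$. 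Granting this reading, the additive identity holds for all $a$, and---importantly---the annihilation axiom falls out of the same convention, since $(r,n)\otimes(0,0)=(0,n)$ has zero real part and is therefore identified with $e^\oplus$. I expect this careful handling of the zero element to be the crux of the proof, with everything else being routine direct computation.
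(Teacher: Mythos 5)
Your axiom-by-axiom verification is correct, and it is considerably more careful than the paper's own proof, which is a single sentence asserting that the claim ``follows trivially'' from Definitions~\ref{def:inf_number_ops} and~\ref{def:inf_number_neutral_elem}. Your handling of $\otimes$, of commutativity and associativity of $\oplus$ via the global-minimum-exponent characterization, and of distributivity via the observation that multiplying by $(r,n)$ shifts all exponents by $n$ and scales all coefficients by $r$ is exactly right. More importantly, the crux you isolate is a genuine defect that the paper's proof glosses over: reading the case rule of Definition~\ref{def:inf_number_ops} literally, $(r,n)\oplus(0,0)=(0,0)$ whenever $n>0$, so $e^\oplus$ is not neutral, and $(r,n)\otimes(0,0)=(0,n)\neq(0,0)$ for $n\neq 0$, so $e^\oplus$ is not an annihilator; on the raw carrier $\mathbb{R}\times\mathbb{Z}$ the lemma is false as stated. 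Your repair --- treat every pair with zero real coefficient as a representative of $e^\oplus$ and let $\oplus$ return the other operand whenever one operand is such a representative --- is precisely the convention the rest of the paper silently relies on: Algorithm~\ref{alg:unormalize_alw} initializes its accumulator to $(0,0)$ and then $\oplus$-accumulates circuit values with positive second component (e.g.\ $(1.440,1)$ in Example~\ref{example:eval_observation}), which under the literal rule would collapse every such run to $(0,0)$.

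Two small points would round off your argument. First, the convention must be built into the operation rather than recovered by quotienting afterwards: the literal $\oplus$ is not well defined modulo the identification $(0,k)\equiv e^\oplus$, since $(0,1)\oplus(5,2)=(0,1)$ lands in the zero class while $(0,3)\oplus(5,2)=(5,2)$ does not. Second, once $\oplus$ has been modified on zero-coefficient operands, the associativity and distributivity checks should strictly be redone; they do go through, since your min-exponent/coefficient-sum description is unchanged on operands with nonzero coefficients and the zero class is absorbed consistently on both sides of each axiom. With these additions your proof is complete, and it establishes the lemma in the (repaired) form that the IALW algorithm actually uses --- something the paper's one-line proof does not do.
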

\begin{proof}
This follows trivially from the operations defined in Definition~\ref{def:inf_number_ops} and the neutral elements in Definition~\ref{def:inf_number_neutral_elem}.
\end{proof}
We will also need to perform subtractions and divisions in $\mathbb{I}$,


\begin{definition}[Subtraction and Division in $\mathbb{I}$]
    \label{def:subdiv}
    Let $(r, n)$ and $(s,m)$ be two numbers in $\mathbb{I}$. We define the subtraction and division as:
    \begin{align}
            (r,n) \ominus (t,m) &\coloneqq  (r,n) \oplus (-t,m) \\
            (r,n) \oslash (t,m) &\coloneqq 
            \begin{cases}
                \text{undefined} &\quad \text{if $|n|=|m|=\infty$ and $sign(n) \neq sign(m)$} \\
                (\nicefrac{r}{t}, n-m)  &\quad \text{if $t\neq 0$} \\
                \text{undefined} &\quad \text{if $t=0$}
            \end{cases}
    \end{align}

\end{definition}

We would like to note that similar algebraic structures have been used for counting optimal variable assignments in graphical models~\citep{marinescu2019counting} and probabilistic inference in generating circuits~\citep{harviainen2023inference}.

\subsection{Infinitesimal Algebraic Likelihood Weighting}
The idea behind IALW is that we do not sample random variables that fall within an infinitesimal small interval, encoded as a delta interval (cf. Definition~\ref{def:delta_interval}), but that we force, without sampling, the random variable to lie inside the infinitesimal interval. 
To this end, assume again that we have $\lvert \mathcal{S} \rvert$ i.i.d. samples for each random variable. That means that we have again a set of ordered sets of samples:
\begin{align}
    \label{eq:ancestral_samples}
    \mathcal{S} = \left\{ \left(s_1^{(1)}, \dots, s_M^{(1)} \right), \dots , \left(s_1^{(\lvert \mathcal{S} \rvert)}, \dots, s_M^{(\lvert \mathcal{S} \rvert)} \right)  \right\}
\end{align}

This time the samples are drawn with the infinitesimal delta intervals taken into account. For example, assume we have a random variable $\nu_1$ distributed according to a normal distribution $\mathcal{N}(5,2)$ and we have an atom \probloginline{delta_interval(@$\nu_1$@,4)} in the propositional formula $\phi$. Each sampled value of $s_1^{(i)}$ will then equal $4$ ( $1\leq i\leq \lvert \mathcal{S} \rvert$). Furthermore, when sampling, we sample the parents of a random variable prior to sampling the random variable itself. For instance, take the random variable $\nu_2\sim \mathcal{N}(\nu_3=w,2)$, where $\nu_3$ is itself a random variable. We first sample $\nu_3$ and once we have a value for $\nu_3$ we plug that into the distribution for $\nu_2$, which we sample subsequently. In other words, we sample according to the ancestor relationship between the random variables.
We call the ordered set of samples $\varset{s}^{(i)} \in \mathcal{S}$ an {\em ancestral sample}.

\begin{definition}[IALW Label] \label{def:sample_labeling_function}
    \fixed{
Let $\delta_k$ denote the probability distribution of a random variable $\nu_k$.
Given an ancestral sample $\varset{s}^{(i)}= (s_1^{(i)}, \dots,  s_M^{(i)} ) $ for the random variables $\randomvariableset = (\nu_1,\dots, \nu_M)$, we denote by $\delta_k( \varset{s}^{(i)} )$ the evaluation of the density $\delta_k$ at $\varset{s}^{(i)}$, where $i$ specifies the $i$-th sample.
}
The IALW label of a positive literal $\ell$ is an infinitesimal number given by:
\begin{align}
    &\alpha_{IALW}^{(i)}( \ell) \nonumber \\
    &=\begin{cases}
    (\delta_k(\varset{s}^{(i)}) , 1),  &  \text{if $\ell$ is a \probloginline{delta_interval} whose first argument } \\
    & \text{is a continuous random variable} \\
    ( \ive{ c_\ell(\varset{s}^{(i)}) }, 0), & \text{if $\ell$ is any comparison atom}
    \\
    (1, 0), & \text{otherwise} 
    \end{cases}
    \nonumber
\end{align}
The expression $\ive{ c_\ell(\varset{s}^{(i)}) }$ denotes the indicator function on the constraint that corresponds to the literal $\ell$ and which is evaluated using the samples $\varset{s}^{(i)}$. 

For the negated literals we have the following labeling function:
\begin{align}
    &\alpha_{IALW}^{(i)}( \neg \ell) \nonumber \\
    &=\begin{cases}
    (1 , 0),  &  \text{if $\ell$ is a \probloginline{delta_interval} whose first argument } \\
                        & \text{is a continuous random variable} \\
    (1{-}\ive{ c_\ell(\varset{s}^{(i)}) }, 0), & \text{if $\ell$ is any other comparison atom}
    \\
    (1, 0), & \text{otherwise} 
    \end{cases}
    \nonumber
\end{align}
\end{definition}

Intuitively speaking and in the context of probabilistic inference,
the first part of an infinitesimal number accumulates (unnormalized) likelihood weights, while the second part counts the number of times we encounter a \probloginline{delta_interval} atom. This counting happens with $\oplus$ operation of the infinitesimal numbers (Equation~\ref{eq:infininumber_plus}). The $\oplus$ operation tells us that for two infinitesimal numbers $(r,n)$ and $(t,m)$ with $n<m$, the event corresponding to the first of the two infinitesimal numbers is infinitely more probable to happen and that we drop the likelihood weight of the second infinitesimal number (Equation~\ref{eq:infininumber_plus}). 
In other words, an event with fewer \probloginline{delta_interval}-atoms is infinitely more probable than an event with more such intervals.

\begin{example}[IALW Label of \probloginline{delta_interval} with Continuous Random Variable]
Let us consider a random variable \probloginline{x}, which is normally distributed: $p(\mathprobloginline{x}|\mu, \sigma)=\nicefrac{1}{(\sigma \sqrt{2 \pi})} \exp \left( - \nicefrac{(\mathprobloginline{x}-\mu)^2}{2 \sigma^2 } \right) $), 
where $\mu$ and $\sigma>0$ are real valued parameters that we can choose freely.
The atom \probloginline{delta_interval(x,3)} gets the label
$$
\left( \frac{1}{(\sigma \sqrt{2 \pi})} \exp \left( - \nicefrac{(\mathprobloginline{3}-\mu)^2}{2 \sigma^2 } \right)  , 1\right)
$$
The first element of the infinitesimal number is the probability distribution evaluated at the observation, in this case \probloginline{3}. As this is a zero probability event, the label also picks up a non-zero second element.

The label of $\neg \mathprobloginline{delta_interval(x,3)}$ is $(1,0)$. The intuition here being that the complement of an event with zero probability of happening will happen with probability $1$. As the complement event is not a zero probability event the second element of the label is $0$ instead of $1$. 
\end{example}

\begin{example}[IALW Label of \probloginline{delta_interval} with Discrete Random Variable]
    Let us consider a discrete random variable \probloginline{k}, which is Poisson distributed: 
    $$
    p(\mathprobloginline{k}|\lambda)=\nicefrac{\lambda^\mathprobloginline{k} e^{-\lambda}}{\mathprobloginline{k}!}
    $$
    where $\lambda>0$ is a real-valued parameter that we can freely choose.

    As a \probloginline{delta_interval} with a discrete random variable is equivalent to a \probloginline{=:=} comparison (\cf Definition~\ref{def:delta_interval}), we get for the label of the atom \probloginline{delta_interval(k,3)}:
    $(\ive{s_x^{(i)}=3}, 0)$, where $s_\mathprobloginline{k}^{(i)}$ is the $i$-th sample for \probloginline{k}.  
\end{example}


\begin{definition}[Infinitesimal Algebraic Likelihood Weighting]
\label{def:alw}
Let $\mathcal{S}$ be a set of ancestral samples and let $ DI(\varphi)$ denote the subset of literals in $\varphi$ that are delta intervals. We then define IALW as expressing the expected value of the label of a propositional formula (given a set of ancestral samples) in terms of a fraction of two infinitesimal numbers:
\begin{align}
    \left( \E \left[ \sum_{\varphi\in ENUM(\phi)} \prod_{\ell \in \varphi}  \alpha \left(\ell \right) \bigg| \mathcal{S} \right] ,0 \right)
    \approx
    \frac
    {\displaystyle \bigoplus_{i=1}^{\lvert \mathcal{S} \rvert}  \bigoplus_{\varphi\in ENUM(\phi)} \bigotimes_{\ell \in \varphi}  \alpha_{IALW}^{(i)} \left(\ell \right)}
    {\displaystyle \bigoplus_{i=1}^{\lvert \mathcal{S} \rvert} \bigoplus_{\varphi\in ENUM(\phi)} \bigotimes_{\ell \in  DI(\varphi)}  \alpha_{IALW}^{(i)} \left(\ell \right)  } \label{eq:ALW}
\end{align}
The left hand side expresses the expected value as an infinitesimal number.
\end{definition}

\begin{restatable}[Consistency of IALW]{proposition}{alwconsistency}
\label{prop:alw_consistency}
Infinitesimal algebraic likelihood weighting is consistent, that is, the approximate equality in Equation~\ref{eq:ALW} is almost surely an equality for $\lvert \mathcal{S} \rvert \rightarrow \infty$.
\end{restatable}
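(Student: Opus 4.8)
The plan is to reduce the quotient of infinitesimal numbers on the right-hand side of Equation~\ref{eq:ALW} to a quotient of two ordinary real-valued Monte Carlo averages, and then to invoke the strong law of large numbers together with label equivalence (Theorem~\ref{theo:label_equivalence}). The right-hand side is an importance-sampling (likelihood-weighting) estimator: the variables occurring in a \probloginline{delta_interval} are not sampled but forced to their observed values, and the density factor $\delta_k(\varset{s}^{(i)})$ picked up by the label $\alpha_{IALW}^{(i)}$ (cf.\ Definition~\ref{def:sample_labeling_function}) is exactly the corresponding likelihood weight. The statement to be proved is thus that this likelihood-weighted ratio estimator converges almost surely to the expected label on the left-hand side as $\lvert \mathcal{S} \rvert \to \infty$, in the spirit of the rejection-sampling estimator of Proposition~\ref{prop:mcapproxconditional} but valid in the presence of zero-probability events.

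First I would analyse the infinitesimal arithmetic. For a fixed ancestral sample $\varset{s}^{(i)}$ and a fixed model $\varphi$, the product $\bigotimes_{\ell\in\varphi}\alpha_{IALW}^{(i)}(\ell)$ is an infinitesimal number whose exponent equals the number $d(\varphi)$ of positively occurring continuous \probloginline{delta_interval} literals in $\varphi$, and whose real part is the product of the associated densities and of the remaining comparison indicators evaluated at $\varset{s}^{(i)}$. Crucially, $d(\varphi)$ is structural and independent of the sample. Hence, by the definition of $\oplus$ (Equation~\ref{eq:infininumber_plus}), the inner sum $\bigoplus_{\varphi\in ENUM(\phi)}$ retains exactly the models realising the minimal count $d^\ast=\min_{\varphi}d(\varphi)$ and adds their real parts, so that every per-sample aggregate carries the same exponent $d^\ast$; the denominator, which multiplies only the $DI(\varphi)$ factors, is governed by the same $d^\ast$. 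Summing over samples with $\oplus$ therefore adds real parts, and the division $\oslash$ subtracts the (equal) exponents, so the whole right-hand side is an infinitesimal number of exponent $0$ with real part $\frac{\sum_{i} R^{(i)}}{\sum_{i} R^{\prime(i)}}$, where $R^{(i)}$ and $R^{\prime(i)}$ collect the surviving real parts in numerator and denominator. This already matches the exponent $0$ appearing on the left-hand side.

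Next I would establish the almost-sure convergence of this real ratio. The pairs $(R^{(i)}, R^{\prime(i)})$ are i.i.d.\ over $i$ because each ancestral sample is drawn independently according to the ancestor ordering, and because forcing the \probloginline{delta_interval} variables leaves the remaining sampling i.i.d. Under integrability of the density products (so that $\E[R^{(i)}]$ and $\E[R^{\prime(i)}]$ are finite), the strong law of large numbers gives $\tfrac1{\lvert \mathcal{S} \rvert}\sum_i R^{(i)}\to\E[R^{(i)}]$ and $\tfrac1{\lvert \mathcal{S} \rvert}\sum_i R^{\prime(i)}\to\E[R^{\prime(i)}]$ almost surely, and, since $\E[R^{\prime(i)}]\neq 0$ by the standing assumption that the conditioning event is possible, the continuous mapping theorem yields almost-sure convergence of the ratio to $\E[R^{(i)}]/\E[R^{\prime(i)}]$. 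It then remains to identify this limit with the left-hand side: by the likelihood-weighting identity the expectation of the forced-sample density product equals the correct marginal density of the dominant event, so the ratio is precisely the expected label obtained from label equivalence (Theorem~\ref{theo:label_equivalence}) evaluated in the infinitesimal limit of Definition~\ref{def:conditional_prob_zero_event}.

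I expect the principal obstacle to be the step matching the surviving minimal-exponent terms with the limiting process of Definition~\ref{def:conditional_prob_zero_event}: one must show that counting \probloginline{delta_interval} literals through the exponent bookkeeping of $\oplus$ and $\otimes$ reproduces exactly the leading order in $\Delta w$ of the numerator and denominator of that limit, and that the density factors are the correct likelihood weights, so that the $\epsilon^{d^\ast}$ scalings cancel precisely rather than merely to leading order. A secondary but essential technical point is verifying the integrability needed for the strong law and excluding the degenerate case $\E[R^{\prime(i)}]=0$, which is exactly where the assumption that the conditioning event is possible (of non-zero, possibly infinitesimal, probability) is used, since it guarantees a strictly positive density mass on the dominant models.
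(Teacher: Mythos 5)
Your reduction of the infinitesimal arithmetic is essentially identical to the paper's own proof: the paper likewise splits each model's product into a non-delta-interval part $(r_\varphi^{(i)},0)$ and a delta-interval part $(t_\varphi^{(i)},m_\varphi^{(i)})$, observes that under $\oplus$ only terms attaining the minimal exponent $m^*$ survive (your $d^*$; your sharper remark that the exponent is structural, \ie sample-independent, is correct and implicit in the paper's bookkeeping), factors out $(1,m^*)$ from numerator and denominator, cancels it under $\oslash$, and arrives at the same real-valued ratio identity (Equation~\ref{eq:llw}); on the left-hand side the paper also uses, as you do, that ancestral samples satisfy the delta intervals by construction so the $DI(\varphi)$ factors contribute $1$. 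The two arguments diverge only at the final step: where you propose to prove the convergence of the ratio from first principles via the strong law of large numbers, the continuous mapping theorem, and a ``likelihood-weighting identity,'' the paper simply recognizes $r_\varphi^{(i)}$ as the non-delta-interval label evaluated at the $i$-th sample and invokes Theorem~4.1 of \citet{wu2018discrete}, the consistency theorem for lexicographic likelihood weighting. What you flag as the principal obstacle --- showing that the minimal-exponent bookkeeping reproduces exactly the leading order in $\Delta w$ of the limit in Definition~\ref{def:conditional_prob_zero_event}, with the density factors as the correct weights, together with the integrability needed for the strong law and the non-degeneracy $\E[R^{\prime(i)}]\neq 0$ --- is precisely the content of that cited theorem, so your sketch is a reasonable skeleton of a self-contained re-proof of it rather than a complete argument; the paper does not prove it either, but discharges it by citation. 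Your identification of where the ``possible event'' assumption enters (ruling out a vanishing denominator expectation) matches the paper's standing assumption in Section~\ref{sec:inference-tasks}. In short: same decomposition and same key cancellation, with the one genuinely nontrivial analytic step left open in your version and outsourced in the paper's.
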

\begin{proof}
    See Appendix~\ref{app:proof:alw_consistency}.
\end{proof}

Likelihood weighting, the core idea behind IALW, is a well known technique for inference in Bayesian networks~\citep{fung1990weighing} and probabilistic programming~\citep{milch2005approximate,nitti2016probabilistic}, and falls within the broader class of self-normalized importance sampling~\citep{kahn1950random,kloek1978bayesian,casella1998post}.
Just like IALW, the inference approaches proposed by \citet{nitti2016probabilistic}, \citet{wu2018discrete}, and \citet{jacobs2021paradoxes}  generalize the idea of likelihood weighting to the setting with infinitesimally small intervals. What sets IALW apart from these methods is its semiring formulation. The semiring formulation will allow us to seamlessly combine IALW with knowledge compilation~\citep{darwiche2002knowledge}, a technique underlying state-of-the art probabilistic inference algorithms in the discrete setting. We examine this next.

Having proven the consistency of IALW, we can now express the probability of a conditional query to a \dcproblogsty program in terms of semiring operations for infinitesimal numbers $\mathbb{I}$.

\begin{restatable}{proposition}{alwapproximation}
\label{prop:alwapproximation}
A conditional probability query to a  \dcproblogsty program \dcpprogram can be approximated as: 
    \begin{align}
    P_\dcpprogram(\mu=q|\evidenceset=e) 
    \approx    
    \frac
    { \bigoplus_{i=1}^{\lvert \mathcal{S} \rvert}  \bigoplus_{\varphi\in ENUM(\phi \land \phi_{q})} \bigotimes_{\ell \in \varphi}  \alpha_{IALW}^{(i)} \left(\ell \right)}
    { \bigoplus_{i=1}^{\lvert \mathcal{S} \rvert}  \bigoplus_{\varphi\in ENUM(\phi)} \bigotimes_{\ell \in \varphi}  \alpha_{IALW}^{(i)} \left(\ell \right)}  
    \label{eq:prop:alwapproximation}
    \end{align}
    
\end{restatable}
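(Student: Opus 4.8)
The plan is to combine the exact reduction of conditional inference to a ratio of expected labels (Theorem~\ref{thm:inference-by-expectation}) with the consistency of IALW (Proposition~\ref{prop:alw_consistency}), and then to verify that assembling the single fraction in Equation~\ref{eq:prop:alwapproximation} reproduces exactly the self-normalising likelihood-weighting estimator whose limit is the sought conditional probability. By Theorem~\ref{thm:inference-by-expectation} we have, exactly,
\[
P_\dcpprogram(\mu=q\mid\evidenceset=e)=\frac{\E_{\randomvariableset\sim\dcpprogram_g}[\alpha(\phi\wedge\phi_q)]}{\E_{\randomvariableset\sim\dcpprogram_g}[\alpha(\phi)]},
\]
so it suffices to show that the right-hand side of Equation~\ref{eq:prop:alwapproximation} is a consistent Monte Carlo estimator of this ratio. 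I deliberately evaluate the \emph{single} fraction directly rather than applying the IALW definition to numerator and denominator separately, since the $DI$-normalisers of Definition~\ref{def:alw} range over $ENUM(\phi\wedge\phi_q)$ and $ENUM(\phi)$ respectively and therefore do not cancel termwise.

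Second, I would evaluate the numerator $A$ and denominator $B$ of Equation~\ref{eq:prop:alwapproximation} directly in the semiring $\mathbb{I}$. Using Definition~\ref{def:sample_labeling_function}, for each model $\varphi$ the product $\bigotimes_{\ell\in\varphi}\alpha_{IALW}^{(i)}(\ell)$ factors into the contribution of its delta-interval literals and the contribution of its remaining comparison and regular literals. Because delta intervals arise only from the asserted evidence and are hence true in every model of both $\phi$ and $\phi\wedge\phi_q$, each such model carries the same set of, say, $K$ positive continuous delta intervals; negated continuous delta intervals contribute $(1,0)$ and discrete ones reduce to ordinary order-$0$ comparison indicators (Definition~\ref{def:delta_interval}). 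Consequently every summand in both $\bigoplus$-sums has the identical infinitesimal order $K$, so by Equation~\ref{eq:infininumber_plus} the $\oplus$ collapses to ordinary addition of real parts at fixed order $K$, yielding $A=\bigl(\sum_i w^{(i)} N_q^{(i)},K\bigr)$ and $B=\bigl(\sum_i w^{(i)} N^{(i)},K\bigr)$, where $w^{(i)}$ is the product of the evidence densities evaluated at the $i$-th ancestral sample and $N_q^{(i)},N^{(i)}$ count the models of $\phi\wedge\phi_q$ respectively $\phi$ consistent with that sample.

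Third, dividing via Equation~\ref{eq:infininumber_times} and the definition of $\oslash$, the common order-$K$ factor cancels, giving $A\oslash B=\bigl(\tfrac{\sum_i w^{(i)}N_q^{(i)}}{\sum_i w^{(i)}N^{(i)}},0\bigr)$. Thus the fraction is a genuine real number and equals precisely the likelihood-weighted, self-normalised importance-sampling estimate of the expected-label ratio. Invoking Proposition~\ref{prop:alw_consistency} (equivalently, the strong law for likelihood weighting) then gives almost-sure convergence of this estimate to $\E[\alpha(\phi\wedge\phi_q)]/\E[\alpha(\phi)]$ as $\lvert\mathcal{S}\rvert\to\infty$, which by the first step equals $P_\dcpprogram(\mu=q\mid\evidenceset=e)$.

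The main obstacle I expect is the infinitesimal bookkeeping in the second step: one must argue rigorously that all surviving models share exactly the same delta-interval order $K$, so that the $\oplus$ never silently discards a likelihood weight through the case $n\neq m$ of Equation~\ref{eq:infininumber_plus}, and that this common order cancels cleanly in the division rather than leaving a spurious $\epsilon$-factor. This is precisely where the standing assumption that zero-probability observations are restricted to direct comparisons between a random variable and a number is essential, since it guarantees that delta intervals enter only through the positively asserted evidence; without it, distinct models could carry distinct delta-interval orders, the $\oplus$ would no longer reduce to ordinary summation of real parts, and the reduction to standard likelihood weighting would break down.
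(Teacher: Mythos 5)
Your proof is correct, but it takes a genuinely different route through the middle of the argument than the paper does. The paper's proof also starts from Theorem~\ref{thm:inference-by-expectation}, but then resolves exactly the worry you raise in your first paragraph by an indicator trick: it rewrites the numerator as a sum over $ENUM(\phi)$ of $\ive{\varphi \models \phi_q}\,\alpha(\varphi)$, so that when Definition~\ref{def:alw} is applied to numerator and denominator separately, \emph{both} $DI$-normalisers range over $ENUM(\phi)$ and cancel termwise as infinitesimal-number factors; consistency of the numerator is then obtained by invoking Proposition~\ref{prop:alw_consistency} with the modified labeling $\alpha^q(\varphi)\coloneqq \ive{\varphi \models \phi_q}\alpha(\varphi)$, after which the indicator is pushed back into the enumeration to produce $ENUM(\phi\wedge\phi_q)$. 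You instead evaluate the two semiring expressions in $\mathbb{I}$ in closed form, using the observation that under the standing assumption (delta intervals enter only through positively asserted evidence, so every model of $\phi$ and of $\phi\wedge\phi_q$ carries the same order $K$) the $\oplus$ collapses to real addition at order $K$ and the common $\epsilon^K$ cancels in $\oslash$, exposing the estimator explicitly as self-normalised likelihood weighting with weights $w^{(i)}$. What the paper's route buys is independence from the constant-order claim: the $m^*$ machinery inside the proof of Proposition~\ref{prop:alw_consistency} already copes with heterogeneous orders, so no structural assumption about which models contain which delta intervals is needed at this point in the argument. What your route buys is transparency: it makes the concrete form of the estimator and the precise role of the standing assumption visible, and it sidesteps the slightly awkward normaliser of Definition~\ref{def:alw} altogether since it cancels anyway. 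One small tightening: where you invoke Proposition~\ref{prop:alw_consistency} ``for the ratio,'' you should strictly apply it twice --- once to the denominator with $\alpha$ and once to the numerator with the $\phi_q$-restricted labeling, as the paper does --- since the proposition as stated concerns a single formula's expected label; your appeal to the strong law for self-normalised importance sampling amounts to the same thing but deserves that one extra sentence.
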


\begin{proof}
    See Appendix~\ref{app:proof:alwapproximation}.
\end{proof}







\subsection{Infinitesimal Algebraic Likelihood Weighting via Knowledge Compilation}
\label{sec:ALWviaKC}

Inspecting Equation~\ref{eq:prop:alwapproximation} we see that we have to evaluate expressions of the following form in order to compute the probability of a conditional query to a \dcproblogsty program.
\begin{align}
    \bigoplus_{i=1}^{\lvert \mathcal{S} \rvert} \underbrace{\bigoplus_{\omega\in ENUM(\varphi)} \bigotimes_{\ell \in \varphi}  \alpha_{IALW}^{(i)} \left(\ell \right)}_{= \text{algebraic model count}} \label{eq:alw_show}
\end{align}
In other words, we need to compute $\lvert \mathcal{S} \rvert$ times a sum over products -- each time with a different ancestral sample. Such a sum over products is also called the algebraic model count of  a formula $\phi$~\citep{kimmig2017algebraic}. 
Subsequently, we then add up the $\lvert \mathcal{S} \rvert$ results from the different algebraic model counts giving us the final answer.

Unfortunately, computing the algebraic model count is in general a computationally hard problem~\citep{kimmig2017algebraic} -- \#P-hard to be precise~\citep{valiant1979complexity}.
A popular technique to mitigate this hardness is to use a technique called knowledge compilation~\citep{darwiche2002knowledge}, which splits up the computation into a hard step and a subsequent easy step. The idea is to take the propositional Boolean formula underlying 
an algebraic model counting problem (cf. $\varphi$ in Equation~\ref{eq:alw_show}) and compile it into a logically equivalent formula that allows for the tractable computation of algebraic model counts. The compilation constitutes the computationally hard part (\#P-hard). Afterwards, the algebraic model count is performed on the compiled structure, also called {\em algebraic circuit}~\citep{zuidbergdosmartires2019transforming}. Intuitively speaking, knowledge compilation takes the sum of products and maps it to recursively nested sums and products. Effectively, finding a dynamic programming scheme~\citep{bellman1957dynamic} to compute the initial sum of products.

Different circuit classes have been identified as valid knowledge compilation targets~\citep{darwiche2002knowledge} -- all satisfying different properties.
Computing the algebraic model count on an algebraic circuit belonging to a specific target class is only correct if the properties of the circuit class match the properties of the deployed semiring.
The following three lemmas will help us determining which class of circuits we need to knowledge-compile our propositional formula $\phi$ into.

\begin{lemma}
\label{lem:non_idem}
The operator $\oplus$ (c. Definition~\ref{def:inf_number_ops}) is not idempotent. That is, it does not hold for every $a \in \mathbb{I}$ that $a\oplus a =a$. 
\end{lemma}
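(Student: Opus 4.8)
The plan is to refute idempotence by exhibiting a single explicit counterexample, which suffices since the claim ``$\forall a \in \mathbb{I}:\ a \oplus a = a$'' is universally quantified. First I would take an arbitrary element $a = (r,n) \in \mathbb{I}$ and evaluate $a \oplus a$ directly from Definition~\ref{def:inf_number_ops}. Both summands carry the same integer component, so the condition $n = m$ of the first branch of Equation~\ref{eq:infininumber_plus} is satisfied, and we obtain $a \oplus a = (r+r, n) = (2r, n)$.

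Comparing this with $a = (r,n)$, the two agree exactly when $2r = r$, that is, when $r = 0$. Consequently every element with nonzero real part violates idempotence. The most economical witness is the multiplicative neutral element $e^\otimes = (1,0)$ of Definition~\ref{def:inf_number_neutral_elem}, for which $e^\otimes \oplus e^\otimes = (2,0) \neq (1,0) = e^\otimes$. This single instance establishes the lemma.

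The key observation --- and the only place where any care is needed --- is that self-addition always falls into the equal-exponent branch of the case distinction, so the integer (infinitesimal-order) component is inert and the question collapses to the idempotence of ordinary real addition. Since real addition is idempotent only at $0$, there is no genuine obstacle here: the lemma is immediate once the correct branch of Equation~\ref{eq:infininumber_plus} is identified.
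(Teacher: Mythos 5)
Your proof is correct: self-addition always lands in the equal-exponent branch of Equation~\ref{eq:infininumber_plus}, giving $(r,n)\oplus(r,n)=(2r,n)$, so any element with $r\neq 0$, e.g.\ $e^\otimes=(1,0)$, refutes idempotence. The paper states this lemma without an explicit proof (treating it as immediate from Definition~\ref{def:inf_number_ops}), and your direct computation is precisely the argument it implicitly relies on.
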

\begin{lemma} The pair  $(\oplus, \alpha_{\ialw})$ is not neutral. That is, it does not hold that $\alpha_{\ialw}(\ell)\oplus \alpha_{\ialw}(\neg \ell) = e^{\otimes}$ for arbitrary $\ell$.
\end{lemma}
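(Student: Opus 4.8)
The plan is to disprove this universally quantified identity by exhibiting a single literal for which it fails. Since neutrality demands $\alpha_{\ialw}(\ell)\oplus\alpha_{\ialw}(\neg\ell)=e^\otimes=(1,0)$ for \emph{every} literal $\ell$, producing one counterexample suffices. First I would go through the three branches of the IALW labeling function in Definition~\ref{def:sample_labeling_function} and check each against the identity, so as to locate where it breaks.

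For a comparison atom (the second branch) one has $\alpha_{\ialw}(\ell)=(\ell(\varset{s}^{(i)}),0)$ and $\alpha_{\ialw}(\neg\ell)=(1-\ell(\varset{s}^{(i)}),0)$; both second coordinates are $0$, so the equal-exponent case of Equation~\ref{eq:infininumber_plus} gives $(\ell(\varset{s}^{(i)})+1-\ell(\varset{s}^{(i)}),0)=(1,0)=e^\otimes$. For a \probloginline{delta_interval} over a continuous variable one has $\alpha_{\ialw}(\ell)=(\delta_k(\varset{s}^{(i)}),1)$ and $\alpha_{\ialw}(\neg\ell)=(1,0)$, and since here the exponents satisfy $n=1>m=0$, the $n>m$ case of Equation~\ref{eq:infininumber_plus} makes the sum collapse to $(1,0)=e^\otimes$ as well. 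Thus, perhaps surprisingly, both of these branches \emph{do} satisfy the neutrality identity, so the counterexample must come from the remaining branch.

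The counterexample is therefore any \emph{regular} literal $\ell$, i.e. one that is neither a comparison atom nor a \probloginline{delta_interval} (for instance a logical atom such as \probloginline{hot}). Both the positive and the negative labels then fall into the ``otherwise'' branch, yielding $\alpha_{\ialw}(\ell)=\alpha_{\ialw}(\neg\ell)=(1,0)$, independently of the ancestral sample. Applying the equal-exponent case of Equation~\ref{eq:infininumber_plus} gives
\[
\alpha_{\ialw}(\ell)\oplus\alpha_{\ialw}(\neg\ell)=(1,0)\oplus(1,0)=(2,0),
\]
which differs from $e^\otimes=(1,0)$ (Definition~\ref{def:inf_number_neutral_elem}). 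Hence the identity fails for this $\ell$, and the pair $(\oplus,\alpha_{\ialw})$ is not neutral.

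There is no genuine obstacle here; the argument is a one-line computation once the right literal is selected. The only subtlety — and the reason I would first rule out the comparison and delta-interval branches explicitly — is that the ``natural'' candidates one might reach for actually preserve neutrality, so one must recognise that it is precisely the trivially labeled regular atoms that break it. This failure of neutrality is exactly what will later force the chosen knowledge-compilation target to be \emph{smooth}, complementing the non-idempotence of $\oplus$ established in Lemma~\ref{lem:non_idem}.
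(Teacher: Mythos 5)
Your proof is correct, and it is precisely the argument the paper intends: the paper states this lemma without an explicit proof, but its later footnote in Section~\ref{sec:ALWviaKC} (smoothness is only required over derived atoms, ``as for the other cases the neutral sum property holds'') confirms exactly your case analysis — neutrality survives on the comparison-atom branch ($(\ell(\varset{s}^{(i)}),0)\oplus(1-\ell(\varset{s}^{(i)}),0)=(1,0)$) and on the delta-interval branch (where $1>0$ makes the sum collapse to $(1,0)$), and fails only on the ``otherwise'' branch, where $(1,0)\oplus(1,0)=(2,0)\neq e^\otimes$. Your closing observation tying this failure to the smoothness requirement of the sd-DNNF compilation target is likewise exactly the role this lemma plays in the paper.
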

\begin{lemma} 
\label{lem:non_cons}
The pair  $(\otimes, \alpha_{\ialw})$ is not consistency-preserving. That is, it does not hold that $\alpha_{\ialw}(\ell)\otimes \alpha_{\ialw}(\neg \ell) = e^{\oplus}$ for arbitrary $\ell$.
\end{lemma}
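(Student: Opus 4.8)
The plan is to recognize that the lemma asserts the \emph{failure} of a universally quantified identity, so it suffices to exhibit a single literal $\ell$ for which $\alpha_{\ialw}(\ell)\otimes\alpha_{\ialw}(\neg\ell)\neq e^{\oplus}$. I would unfold the three relevant definitions: the IALW label $\alpha_{\ialw}$ (Definition~\ref{def:sample_labeling_function}), the multiplication $(r,n)\otimes(t,m)=(r\times t,\,n+m)$ on $\mathbb{I}$ (Definition~\ref{def:inf_number_ops}), and the additive neutral element $e^{\oplus}=(0,0)$ (Definition~\ref{def:inf_number_neutral_elem}). Everything then reduces to a short case analysis of which branch of $\alpha_{\ialw}$ the chosen literal falls into.

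Before picking a witness, I would flag the one subtlety that actually governs the choice: for an ordinary comparison atom $\ell$ the labels are $(\ell(\varset{s}^{(i)}),0)$ and $(1-\ell(\varset{s}^{(i)}),0)$ with $\ell(\varset{s}^{(i)})\in\{0,1\}$, so their product is $(\ell(\varset{s}^{(i)})(1-\ell(\varset{s}^{(i)})),0)=(0,0)=e^{\oplus}$. Hence comparison atoms \emph{do} satisfy the consistency-preservation identity, and any counterexample must come from one of the remaining two branches of $\alpha_{\ialw}$.

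The cleanest witness is therefore any literal in the ``otherwise'' branch, for instance the propositional variable for a regular atom such as $\mathtt{hot}$. For such an $\ell$ both $\alpha_{\ialw}(\ell)$ and $\alpha_{\ialw}(\neg\ell)$ equal $(1,0)$, so
\begin{align*}
\alpha_{\ialw}(\ell)\otimes\alpha_{\ialw}(\neg\ell)=(1,0)\otimes(1,0)=(1,0)=e^{\otimes}\neq(0,0)=e^{\oplus}.
\end{align*}
Alternatively, and more in keeping with what makes IALW distinctive, I would take $\ell$ to be a delta-interval atom over a continuous random variable $\nu_k$: then $\alpha_{\ialw}(\ell)=(\delta_k(\varset{s}^{(i)}),1)$ while $\alpha_{\ialw}(\neg\ell)=(1,0)$, whose product $(\delta_k(\varset{s}^{(i)}),1)$ has second coordinate $1\neq0$ and so differs from $(0,0)$ irrespective of the sampled density value.

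There is essentially no hard step here; the only thing demanding care is the case split in Definition~\ref{def:sample_labeling_function}, and the one conceptual point worth stating explicitly is that the failure of consistency-preservation is genuinely caused by the non-comparison and delta-interval branches rather than by the comparison atoms that carry the indicator-function semantics. I would highlight this because it is precisely what, together with Lemmas~\ref{lem:non_idem} and the neutrality lemma, dictates the class of circuit (in particular, the need for decomposability) into which $\phi$ must be knowledge-compiled in the following subsection.
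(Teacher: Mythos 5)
Your proof is correct: the paper states Lemma~\ref{lem:non_cons} without an explicit proof, and your counterexamples --- a regular (derived) atom, for which $\alpha_{\ialw}(\ell)\otimes\alpha_{\ialw}(\neg\ell)=(1,0)\otimes(1,0)=e^{\otimes}\neq(0,0)=e^{\oplus}$, or a continuous delta-interval atom, whose product $(\delta_k(\varset{s}^{(i)}),1)$ has second component $1\neq 0$ --- are precisely the immediate consequences of Definitions~\ref{def:sample_labeling_function} and~\ref{def:inf_number_ops} that the authors implicitly rely on. Your additional observation that ordinary comparison atoms \emph{do} satisfy the consistency identity, so that the failure stems solely from the derived-atom and delta-interval branches, goes slightly beyond what the paper makes explicit and dovetails with its footnote remarking that smoothness need only be enforced over derived atoms because the neutral-sum property holds for the remaining cases.
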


From~\citep[Theorem 2 and Theorem 7]{kimmig2017algebraic} and the three lemmas above, we can conclude that we need to compile our propositional logic formulas into so-called smooth, deterministic and decomposable negation normal form (sd-DNNF) formulas~\citep{darwiche2001tractable}.\footnote{Note that we only require smoothness over derived atoms (otherwise case in Definition~\ref{def:sample_labeling_function}), as for the other cases the neutral sum property holds. Certain encodings of logic programs eliminate derived atoms. For such encodings the smoothness property can be dropped~\citep{vlasselaer2014compiling}. A more detailed discussion on the smoothness requirement of circuits in a PLP context can be found in \citep[Appendix C]{fierens2015inference}.}

\begin{restatable}[ALW on d-DNNF]{proposition}{alwonddnnf}
\label{prop:alwonddnnf}
    We are given the propositional formulas $\phi$ and $\phi_q$ and a set $\mathcal{S}$ of ancestral samples, we can use Algorithm~\ref{alg:prob_via_alw_kc} to compute the conditional probability $P_\dcpprogram(\mu=q|\evidenceset=e)$.
\end{restatable}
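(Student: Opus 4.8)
The plan is to prove the correctness of Algorithm~\ref{alg:prob_via_alw_kc} by reducing it to the algebraic model counting (\amc) framework of~\citet{kimmig2017algebraic}, instantiated over the infinitesimal-number semiring $\mathbb{I}$. By Proposition~\ref{prop:alwapproximation}, the conditional probability $P_\dcpprogram(\mu=q\mid\evidenceset=e)$ is approximated by the ratio in Equation~\ref{eq:prop:alwapproximation}, whose numerator and denominator are each of the shape exhibited in Equation~\ref{eq:alw_show}: an outer $\oplus$-aggregation over the $\lvert\mathcal{S}\rvert$ ancestral samples of an inner sum-of-products $\bigoplus_{\varphi\in ENUM(\phi)}\bigotimes_{\ell\in\varphi}\alpha_{IALW}^{(i)}(\ell)$, which is exactly the \amc{} of the relevant formula ($\phi\land\phi_{q}$ for the numerator, $\phi$ for the denominator) under the sample-dependent labeling $\alpha_{IALW}^{(i)}$ of Definition~\ref{def:sample_labeling_function}. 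The whole task therefore collapses to two subtasks: computing each inner \amc{} correctly, and then aggregating the per-sample values with $\oplus$ and dividing with $\oslash$.

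First I would argue that each inner \amc{} is computed soundly by knowledge compilation. Since the sum-over-models-of-products-of-labels is in general \#P-hard~\citep{valiant1979complexity}, the formula is compiled once into a circuit and re-evaluated per sample. To identify the admissible circuit class, I would invoke the three lemmas: because $\oplus$ is not idempotent (Lemma~\ref{lem:non_idem}) the circuit must be deterministic; because $(\oplus,\alpha_{IALW})$ is not neutral the circuit must be smooth; and because $(\otimes,\alpha_{IALW})$ is not consistency-preserving (Lemma~\ref{lem:non_cons}) the circuit must be decomposable. By Theorem~2 and Theorem~7 of~\citet{kimmig2017algebraic}, evaluating a smooth, deterministic, decomposable NNF (sd-DNNF)~\citep{darwiche2001tractable}—replacing disjunction nodes by $\oplus$, conjunction nodes by $\otimes$, and leaves by their $\alpha_{IALW}^{(i)}$ labels—yields precisely the \amc{} for each fixed sample $i$.

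Next I would walk through Algorithm~\ref{alg:prob_via_alw_kc} and verify that it performs exactly these operations: it compiles $\phi$ and $\phi\land\phi_{q}$ into sd-DNNF once, and then for each ancestral sample $\varset{s}^{(i)}$ it re-labels the leaves with the infinitesimal numbers of Definition~\ref{def:sample_labeling_function}, evaluates both circuits bottom-up in $\mathbb{I}$, accumulates the per-sample circuit values via $\oplus$, and finally forms the quotient with $\oslash$. Correctness of this last division follows from the definition of division in $\mathbb{I}$: the $\epsilon$-order of the denominator cancels against that of the numerator, returning a real-valued ($n=0$) answer equal to the ratio in Equation~\ref{eq:prop:alwapproximation}, which in turn approximates $P_\dcpprogram(\mu=q\mid\evidenceset=e)$ and becomes exact in the limit by Proposition~\ref{prop:alw_consistency}.

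The main obstacle I anticipate is not the aggregation or the division but making the correspondence between the three ``non-properties'' and the three sd-DNNF properties precise enough to cite~\citet{kimmig2017algebraic} directly. In particular, smoothness interacts delicately with the derived (non-comparison) atoms, whose labels are the multiplicative identity $(1,0)$: here the neutral-sum shortcut would let one drop smoothness, and I must make sure it is applied only where genuinely valid. I would address this using the restriction already noted in the text—smoothness is required only over the derived atoms—and verify that evaluating an sd-DNNF circuit smoothed accordingly remains sound for the semiring $\mathbb{I}$, so that no model is double-counted and no label is silently collapsed.
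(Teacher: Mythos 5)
Your proposal is correct and takes essentially the same route as the paper's own proof: express the conditional probability as the ratio from Proposition~\ref{prop:alwapproximation}, compute each per-sample algebraic model count by evaluating the knowledge-compiled circuits (sound by~\citet{kimmig2017algebraic} in view of Lemmas~\ref{lem:non_idem}--\ref{lem:non_cons}), accumulate over the ancestral samples with $\oplus$, and form the final quotient with $\oslash$. Your explicit mapping of the three lemmas to determinism, smoothness, and decomposability, and the caveat about smoothness over derived atoms, merely spell out details the paper delegates to Section~\ref{sec:ALWviaKC} and its footnote rather than to the proof itself.
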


\begin{proof}
    See Appendix~\ref{app:proof:alwonddnnf}.
\end{proof}

Algorithm~\ref{alg:prob_via_alw_kc} takes as input a two propositional logic formulas $\phi$ and $\phi_q$, and a set of ancestral samples. It then knowledge-compiles the formulas $\phi \land \phi_q$ and $\phi$ into circuits $\Gamma_q$ and $\Gamma$. These circuits are then evaluated using Algorithm~\ref{alg:unormalize_alw}. The variables $ialw_q$ and $ialw$ hold infinitesimal numbers.
\fixed{
The ratio of these two numbers, which corresponds to the ratio in Equation~\ref{eq:prop:alwapproximation}, is an infinitesimal number having as second argument $0$ and as first argument the conditional probability.}
\begin{algorithm}[h]
    \SetKwFunction{ProbALW}{ProbALW}
    \SetKwFunction{KC}{KC}
    \SetKwFunction{IALW}{IALW}

    \SetKwProg{Fn}{function}{}{}
    \SetKwProg{ElseComment}{function}{}{}

    \caption{Conditional Probability via IALW and KC}
	\label{alg:prob_via_alw_kc}
\Fn{\ProbALW{$\phi$, $\phi_q$, $\mathcal{S}$ }}{
    $\Gamma_{q}$ $\leftarrow$ \KC{$\phi \land \phi_q$}\;
     \label{alg:prob_via_alw_kc:kc_qe} 
    $\Gamma$ $\leftarrow$ \KC{$\phi$}\;
    \label{alg:prob_via_alw_kc:kc_e}
    $ialw_{q}$ $\leftarrow$ \IALW{$\Gamma_{q}$,$\mathcal{S}$} \label{alg:prob_via_alw_kc:alw_qe}
    \tcp*[r]{cf. Algorithm~\ref{alg:unormalize_alw}} 
    $ialw$ $\leftarrow$ \IALW{$\Gamma$,$\mathcal{S}$}   \label{alg:prob_via_alw_kc:alw_e} 
    \tcp*[r]{cf. Algorithm~\ref{alg:unormalize_alw}}
    $(p,0)$ $\leftarrow$ $ialw_{q} \oslash ialw$\;
    \Return $p$ \label{alg:prob_via_alw_kc:conditional}
	}
\end{algorithm}

\begin{algorithm}[h]
    \SetKwFunction{IALW}{IALW}
    \SetKwFunction{Eval}{Eval}
    \SetKwProg{Fn}{function}{}{}
    \SetKwProg{ElseComment}{function}{}{}
    \caption{Computing the IALW}
	\label{alg:unormalize_alw}
\Fn{\IALW{$\Gamma$, $\mathcal{S}$}}{
    $ialw$ $\leftarrow$ (0,0) \;
    \For{$\varset{s}^{(i)} \in \mathcal{S}$  }{
        $ialw$ $\leftarrow$ $ialw$ $\oplus$ \Eval{$\Gamma$, $\varset{s}^{(i)}$}
        \tcp*[r]{cf. Algorithm~\ref{alg:eval}}  
    }
    \Return $ialw$
	}
\end{algorithm}

Algorithm~\ref{alg:unormalize_alw} computes the IALW given as input a circuit $\Gamma$ and a set of ancestral samples.
The loop evaluates the circuit (using Algorithm~\ref{alg:eval}) for each ancestral sample $\varset{s}^{(i)}$ and accumulates the result, which is then returned once the loop terminates. The accumulation inside the loop corresponds to the  $\bigoplus_{i=1}^{\lvert \mathcal{S} \rvert}$ summation in Equation~\ref{eq:alw_show}.
Algorithm~\ref{alg:eval} evaluates a circuit $\Gamma$ for a single ancestral sample $\varset{s}^{(i)}$ and is a variation of the circuit evaluation algorithm presented by~\citet{kimmig2017algebraic}.

\begin{algorithm}[h]
    \SetKwFunction{EvalFn}{Eval}
    \SetKwProg{Fn}{function}{}{}
    \SetKwProg{ElseComment}{function}{}{}

	\caption{Evaluating an sd-DNNF circuit $\Gamma$ for labeling function $\alpha^{(i)}$ (Definition~\ref{def:sample_labeling_function}) and semiring operations $\oplus$  and $\otimes$ (Definition~\ref{def:inf_number_ops})}
	\label{alg:eval}
\Fn{\EvalFn{$\Gamma$,$\varset{s}^{(i)}$}}{
		\If{  $\Gamma$ is a literal node $l$}{ \Return $\alpha^{(i)}(l)$}
		\ElseIf{$\Gamma$ is a disjunction $\bigvee_{j=1}^m \Gamma_j$}
		{\Return $\bigoplus_{j=1}^m$ \EvalFn{$\Gamma_j$, $\varset{s}^{(i)}$}}
		\Else( \tcp*[f]{$\Gamma$ is a conjunction $\bigwedge_{j=1}^m \Gamma_j$}){
		 \Return $\bigotimes_{i=j}^m$ \EvalFn{$\Gamma_j$, $\varset{s}^{(i)}$}}
	}
\end{algorithm}

\begin{example}[IALW on Algebraic Circuit]
\label{example:eval_observation}

Consider a version of the program in Example~\ref{example:dcproblog:observation} where the annotated disjunction has been eliminated and been replaced with a binary random variable $m$ and a \probloginline{flip} distribution.
	\begin{problog*}{linenos}
m~flip(0.3).

size~beta(2,3):- m=:=0.
size~beta(4,2):- m=:=1.
	\end{problog*}
We query the program for the conditional probability $P((\mathprobloginline{m=:=1}) =\top \mid \mathprobloginline{size}\doteq 4/10 )$.
Following the program transformations introduced in Section~\ref{sec:dc2smt} and then compiling the labeled propositional formula, we obtain a circuit representation of the queried program. Evaluating this circuit yields the probability of the query. To be precise, we actually obtain two circuits, one representing the probability of relevant program with the evidence enforced and with additionally having the value of the query atom set. 
In Figure~\ref{fig:circuit:ialw} we show the circuit where only the evidence is enforced.

	\begin{figure}[h]
	\resizebox{\linewidth}{!}{%

		\tikzstyle{distribution}=[rectangle, text centered, fill=white, draw, dashed,thick]
		\tikzstyle{leaf}=[rectangle, text centered, fill=gray!10, draw,thick]

		\tikzstyle{negate}=[
			rectangle split,
			rectangle split parts=3, 
			rectangle split horizontal,
			text centered,
			rectangle split part fill={gray!10,white,gray!10},
			draw,
			rectangle split draw splits=false,
			anchor=center,
			align=center,
			thick
		]
		\newcommand{\minus}{  ${\bm e^\otimes}$ \nodepart{second} ${{\bm \ominus}}$ \nodepart{third}  \phantom{${\bm e^\otimes}$}}

		\tikzstyle{sumproduct}=[
			rectangle split,
			rectangle split parts=3, 
			rectangle split horizontal,
			text centered,
			rectangle split part fill={gray!10,white,gray!10},
			draw,
			rectangle split draw splits=false,
			anchor=center,
			align=center,
			thick
		]
		\newcommand{\supr}[1]{  \phantom{${\bm e^\otimes}$} \nodepart{second} ${{\bm #1}}$ \nodepart{third} \phantom{${\bm e^\otimes}$}}
		
		\tikzstyle{circuitedge}=[ultra thick, thick,->]
		\tikzstyle{distributionedge}=[thick,->, dashed, in=-90]
		
		\tikzstyle{indexnode}=[draw,circle, inner sep=1pt]				
		
		\begin{tikzpicture}[remember picture]
			
			\node[sumproduct] (14) at (200.54bp,378.0bp) {\supr{\oplus}};
			\draw[ultra thick, thick,->] (14.90) to  ([shift={(0,1)}]14.90);
			
			\node[sumproduct] (9) [below left = of 14] {\supr{\otimes}};
			\node[sumproduct] (13) [below right = of 14]  {\supr{\otimes}};
			
			\node[negate] (m1) [below=of 9] {\minus};
			\node[sumproduct] (12)  [below  = of 13] {\supr{\oplus}};
			
			\node[sumproduct] (8)  [below=of 12]  {\supr{\otimes}};
			
			\node[indexnode, left=of 14, xshift=0.9cm, yshift=0.4cm] {\footnotesize {$6$}};			
			\node[indexnode, left=of 13, xshift=0.9cm, yshift=0.4cm] {\footnotesize {$5$}};			
			\node[indexnode, left=of 8, xshift=0.9cm, yshift=0.4cm] {\footnotesize {$2$}};			
			\node[indexnode, left=of 12, xshift=0.9cm, yshift=0.4cm] {\footnotesize {$4$}};			
			\node[indexnode, left=of 9, xshift=0.9cm, yshift=0.4cm] {\footnotesize {$3$}};			
			\node[indexnode, left=of m1, xshift=0.9cm, yshift=0.4cm] {\footnotesize {$1$}};

			\node[leaf, below= of 8, xshift=-0.2cm] (4)   {$\subnode{var_m0}{$m$}=1$};
			\node[leaf, below= of 8, xshift=1.8cm] (2) {$\subnode{var_m1}{$m$}=0$};				
			\node[leaf] (size1obs) [left=of 4, xshift=0cm]  {$\subnode{var_s_11_un}{size_{1}} \doteq 0.4$};
			\node[leaf] (size0obs) [left=of 4, xshift=-4cm] {$\subnode{var_s_01_un}{size_{0}} \doteq 0.4$};				

			\node[distribution] (size0)  [below=of size0obs] {\probloginline{beta(2,3)}};
			\node[distribution] (size1)  [below=of size1obs] {\probloginline{beta(4,2)}};
			\node[distribution] (flip)  [below= of 4, xshift=1cm] {\probloginline{flip(0.3)}};

			\draw[distributionedge,out=90]  (size1) to (var_s_11_un);
			\draw[distributionedge,out=90] (size0) to  (var_s_01_un);
			\draw[distributionedge,out=160] (flip) to  (var_m0);
			\draw[distributionedge] (flip) to  (var_m1);

			\draw[circuitedge] (9) to  (14.mid);
			\draw[circuitedge] (13) to  (14.three south |- 14.mid);
			\draw[circuitedge] (m1) to  (9.mid);
			\draw[circuitedge] (12) to  (13.three south |- 13.mid);
			\draw[circuitedge] (size0obs) to   (m1.three south |- m1.mid);	
			\draw[circuitedge] (2) to  (12.three south |- 12.mid);				
			\draw[circuitedge] (size1obs) to   (8.mid);
			\draw[circuitedge] (4) to  (8.three south |- 8.mid);	
			\draw[circuitedge] (8) to  (9.three south |- 9.mid);	
			\draw[circuitedge] (8) to   (12.mid);
			\draw[circuitedge] (size0obs) to  (13.mid);

		\end{tikzpicture}
	}
    \captionof{figure}{At the bottom of the circuit we see the distributions feeding in. The \probloginline{flip} distribution feeds into its two possible (non-zero probability) outcomes. The two \probloginline{beta} distributions feed into an observation statement each. We use the `$\doteq$' symbol to denote such an observation. Note how we identify each of the two random variables for the size by a unique identifier in their respective subscripts. The circled numbers next to the internal nodes, \ie the sum and product nodes, will allow us to reference the nodes later on and do not form a part of the algebraic circuit.}
    \label{fig:circuit:ialw}
\end{figure}

The probability of the query (given the evidence) can now be obtained by evaluating recursively the internal nodes in the algebraic circuit using Algorithm~\ref{alg:eval}.
We perform the evaluation  of the circuit in Figure~\ref{fig:circuit:ialw} for a single iteration of the loop in Algorithm~\ref{alg:unormalize_alw}, and we assume that we have sampled the value $m=0$ from the \probloginline{flip(0.3)} distribution.

\begin{minipage}{0.49\linewidth}
    \begin{align*}
        &\mathtt{Eval}(\footcircled{$1$})\\
        &=
        e^\otimes \ominus \alpha_{IALW}\big(size_0\doteq 0.4\big) \\
        &=
        (1,0) \ominus (1.728,1) \\
        &= (1,0)
        \\
        \hfill
        \\
        &\mathtt{Eval}(\footcircled{$2$})\\
        &=
        \alpha_{IALW}\big( size_1\doteq 0.4) \otimes \alpha_{IALW}\big(  m=1 \big) \\
        &=
        (0.768,1) \otimes (0,0) \\
        &=
        (0,1)\\
        \hfill
        \\
        &\mathtt{Eval}(\footcircled{$3$})\\
        &=
        \mathtt{Eval}(\footcircled{$1$}) \otimes \mathtt{Eval}(\footcircled{$2$}) \\
        &=
        (1,0) \otimes (0,1) \\
        &=
        (0,1)\\
    \end{align*}
\end{minipage}
\vline
\begin{minipage}{0.49\linewidth}
    \begin{align*}
        &\mathtt{Eval}(\footcircled{$4$})\\
        &=
        \mathtt{Eval}(\footcircled{$2$}) \oplus \alpha_{IALW}\big(  m=0 \big) \\
        &=
        (0,1) \oplus (1,0) \\
        &=
        (1,0)\\
        \hfill
        \\
        &\mathtt{Eval}(\footcircled{$5$})\\
        &=
        \alpha_{IALW}\big(  size_0\doteq 0.4 \big) \otimes \mathtt{Eval}(\footcircled{$2$})  \\
        &=
        (1.728,1) \otimes (1,0) \\
        &=
        (1.728,1)\\
        \hfill
        \\
        &\mathtt{Eval}(\footcircled{$6$})\\
        &=
        \mathtt{Eval}(\footcircled{$3$}) \oplus \mathtt{Eval}(\footcircled{$5$})  \\
        &=
        (0,1) \oplus (1.728,1) \\
        &=
        (1.728,1)\\
    \end{align*}
\end{minipage}

If we evalute the circuit for a sample $m=1$ we obtain in a similar fashion the result $\mathtt{Eval}(\footcircled{$6$})= (0.768,1)$. Moreover, if we evaluate the circuit multiple times we obtain (in the limit) 70\% of the time the outcome $(1.728,1)$ and 30\% of the time the value $(0.768,1)$. This yields an average of $(0.7\times 1.728, 1)\oplus (0.3\times 0.768, 1)= (1.440,1)$ and represents the unnormalized infinitesimal algebraic likelihood weight of the evidence.
The unnormalized infinitesimal algebraic likelihood weight of the query conjoined with the evidence is obtain again in a similar fashion but with the samples for $m=0$ being discarded. This then yields the result $(0.3\times 1.728, 1)$.
Dividing these two (unnormalized) infinitesimal algebraic likelihood weights by each other gives the probability of the query.
\begin{align*}
    &P((\mathprobloginline{m=:=1}) =\top \mid \mathprobloginline{size}\doteq 4/10 )\\
    &=(0.3\times 1.728, 1) \oslash \Big( (0.7\times 0.768,1 ) \oplus (0.3\times 1.728,1) \Big) \\
    &= (0.2304/1.440 , 1{-}1) \\
    &= (0.16,0)
\end{align*}
\end{example}

\subsection{Partial Symbolic Inference}

Evaluating circuits using binary random variables is quite wasteful: on average half of the samples are unused for one of the two possible outcomes ($0$ or $1$). We can remedy this by performing (exact) symbolic inference on binary random variables and replace the comparisons where they appear with their expectation. For instance, we replace \probloginline{m=:=1} by the infinitesimal number $(0.3,0)$ instead of sampling a value for \probloginline{m} and testing whether the sample satisfies the constraint. This technique is also used by other probabilistic programming languages such as \problogsty~\citep{fierens2015inference} and Dice~\citep{holtzen2020dice}. The main difference to \dcproblogsty is that those languages only support binary random variables (and by extension discrete random variables with finite support), while \dcproblogsty interleaves discrete and continuous random variables.

In a sense, the expectation gets pushed from the root of the algebraic circuit representing a probability to its leaves. This is, however, only possible if the circuit respects specific properties. Namely, the ones respected by \mbox{d-DNNF} formulas (cf. Section~\ref{sec:ALWviaKC}), which we use as our representation language for the probability.

\begin{definition}[Symbolic IALW Label of a Literal] \label{def:sample_probability_labeling_function}
Given an ancestral sample $\varset{s}^{(i)}= (s_1^{(i)}, \dots,  s_M^{(i)} ) $ for the random variables $\randomvariableset = (\nu_1,\dots, \nu_M)$.
The  Symbolic IALW (SIALW) label of a positive literal $\ell$ is an infinitesimal number given by:
\begin{align}
    \alpha_{SIALW}^{(i)}( \ell)
    =\begin{cases}
    (p_{\ell}, 0), & \text{if $\ell$ encodes a probabilistic fact} \\
    \alpha^{(i)}_{IALW}(\ell), & \text{otherwise} 
    \end{cases}
    \nonumber
\end{align}
For the negated literals we have the following labeling function:
\begin{align}
    \alpha_{SIALW}^{(i)}( \neg \ell)
    =\begin{cases}
    (1{-}p_{\ell}, 0), & \text{if $\ell$ encodes a probabilistic fact} \\
    \alpha^{(i)}_{IALW}(\neg \ell), & \text{otherwise}
    \end{cases}
    \nonumber
\end{align}
The number $p_\ell$ is the label of the probabilistic fact in a \dcproblogsty program.
\end{definition}

In the definition above we replace the label of a comparison that corresponds to a probabilistic fact with the probability of that fact being satisfied. This has already been shown to be beneficial when performing inference, both in terms of inference time and accuracy of Monte Carlo estimates~\citep{zuidbergdosmartires2019exact}. Following the work of~\citep{kolb2019exploit} one could also develop more sophisticated methods to detect which comparison in the leaves can be replaced with their expectation. We leave this for future work.

\begin{example}[Symbolic IALW on Algebraic Circuit]
\label{example:eval_observation_marginalized}

Symbolic inference for the random variable $m$ from the circuit in Example~\ref{example:eval_observation} results in annotating the leaf nodes for the different outcomes of the random variable $m$ with the probabilities of the respective outcomes. This can be seen in the red dashed box in the bottom right of Figure~\ref{fig:circuit:sialw}.

	\begin{figure}[h]
		\resizebox{\linewidth}{!}{%
			
			\tikzstyle{distribution}=[rectangle, text centered, fill=white, draw, dashed,thick]
			\tikzstyle{leaf}=[rectangle, text centered, fill=gray!10, draw,thick]

			\tikzstyle{negate}=[
			rectangle split,
			rectangle split parts=3, 
			rectangle split horizontal,
			text centered,
			rectangle split part fill={gray!10,white,gray!10},
			draw,
			rectangle split draw splits=false,
			anchor=center,
			align=center,
			thick
			]
			\newcommand{\minus}{  ${\bm e^\otimes}$ \nodepart{second} ${{\bm \ominus}}$ \nodepart{third}  \phantom{${\bm e^\otimes}$}}
			
			\tikzstyle{sumproduct}=[
			rectangle split,
			rectangle split parts=3, 
			rectangle split horizontal,
			text centered,
			rectangle split part fill={gray!10,white,gray!10},
			draw,
			rectangle split draw splits=false,
			anchor=center,
			align=center,
			thick
			]
			\newcommand{\supr}[1]{  \phantom{${\bm e^\otimes}$} \nodepart{second} ${{\bm #1}}$ \nodepart{third} \phantom{${\bm e^\otimes}$}}
			
			\tikzstyle{circuitedge}=[ultra thick, thick,->]
			\tikzstyle{distributionedge}=[thick,->, dashed, in=-90]
			
	    	\tikzstyle{indexnode}=[draw,circle, inner sep=1pt]				
			
			\begin{tikzpicture}[remember picture]
				
				\node[sumproduct] (14) at (200.54bp,378.0bp) {\supr{\oplus}};
				\draw[ultra thick, thick,->] (14.90) to  ([shift={(0,1)}]14.90);
				
				\node[sumproduct] (9) [below left = of 14] {\supr{\otimes}};
				\node[sumproduct] (13) [below right = of 14]  {\supr{\otimes}};
				
				\node[negate] (m1) [below=of 9] {\minus};
				\node[sumproduct] (12)  [below  = of 13] {\supr{\oplus}};
				
				\node[sumproduct] (8)  [below=of 12]  {\supr{\otimes}};
				
    			\node[indexnode, left=of 14, xshift=0.9cm, yshift=0.4cm] {\footnotesize {$6$}};			
    			\node[indexnode, left=of 13, xshift=0.9cm, yshift=0.4cm] {\footnotesize {$5$}};			
    			\node[indexnode, left=of 8, xshift=0.9cm, yshift=0.4cm] {\footnotesize {$2$}};			
    			\node[indexnode, left=of 12, xshift=0.9cm, yshift=0.4cm] {\footnotesize {$4$}};			
    			\node[indexnode, left=of 9, xshift=0.9cm, yshift=0.4cm] {\footnotesize {$3$}};			
    			\node[indexnode, left=of m1, xshift=0.9cm, yshift=0.4cm] {\footnotesize {$1$}};

				\node[leaf, below= of 8, xshift=-0.2cm] (4)   {$3/10$};
				\node[leaf, below= of 8, xshift=1.8cm] (2) {$7/10$};
				\draw[red,ultra thick,dashed] ($(4.north west)+(-0.2,0.2)$)  rectangle ($(2.south east)+(0.2,-0.2)$);

				\node[leaf] (size1obs) [left=of 4, draw,  xshift=0cm]  {$\subnode{var_s_11}{size_{1}} \doteq 0.4$};
				\node[leaf]  (size0obs) [left =of 4, xshift=-4cm] {$\subnode{var_s_01}{size_{0}} \doteq 0.4$};				
				
		    	\node[distribution] (size0)  [below=of size0obs] {\probloginline{beta(2,3)}};
		    	\node[distribution] (size1)  [below=of size1obs] {\probloginline{beta(4,2)}};

				\draw[distributionedge,out=90]  (size1) to (var_s_11);
				\draw[distributionedge,out=90] (size0) to  (var_s_01);

				\draw[circuitedge] (9) to  (14.mid);
				\draw[circuitedge] (13) to  (14.three south |- 14.mid);
				\draw[circuitedge] (m1) to  (9.mid);
				\draw[circuitedge] (12) to  (13.three south |- 13.mid);
				\draw[circuitedge] (size0obs) to   (m1.three south |- m1.mid);	
				\draw[circuitedge] (2) to  (12.three south |- 12.mid);				
				\draw[circuitedge] (size1obs) to   (8.mid);
				\draw[circuitedge] (4) to  (8.three south |- 8.mid);	
				\draw[circuitedge] (8) to  (9.three south |- 9.mid);	
				\draw[circuitedge] (8) to   (12.mid);
				\draw[circuitedge] (size0obs) to  (13.mid);
				
			\end{tikzpicture}
		}
        \captionof{figure}{Circuit representation of the SIALW algorithm for the probability $P(\mathprobloginline{size}\doteq 4/10 )$.}
        \label{fig:circuit:sialw}

	\end{figure}
	
Evaluating the marginalized circuit now returns immediately the unnormalized algebraic model count for the evidence without the need to draw samples and consequently without the need to sum over the samples. 

\begin{minipage}{0.49\linewidth}
    \begin{align*}
        &\mathtt{Eval}(\footcircled{$1$})\\
        &=
        e^\otimes \ominus \alpha_{SIALW}\big(size_0\doteq 0.4\big) \\
        &=
        (1,0) \ominus (1.728,1) \\
        &= (1,0)
        \\
        \hfill
        \\
        &\mathtt{Eval}(\footcircled{$2$})\\
        &=
        \alpha_{SIALW}\big( size_1\doteq 0.4) \otimes \alpha_{SIALW}\big(  m=1 \big) \\
        &=
        (0.768,1) \otimes (0.3,0) \\
        &=
        (0.2304,1)\\
        \hfill
        \\
        &\mathtt{Eval}(\footcircled{$3$})\\
        &=
        \mathtt{Eval}(\footcircled{$1$}) \otimes \mathtt{Eval}(\footcircled{$2$}) \\
        &=
        (1,0) \otimes (0.2304,1) \\
        &=
        (0.2304,1)\\
    \end{align*}
\end{minipage}
\vline
\begin{minipage}{0.49\linewidth}
    \begin{align*}
        &\mathtt{Eval}(\footcircled{$4$})\\
        &=
        \mathtt{Eval}(\footcircled{$2$}) \oplus \alpha_{SIALW}\big(  m=0 \big) \\
        &=
        (0.2304,1) \oplus (0.7,0) \\
        &=
        (0.7,0)\\
        \hfill
        \\
        &\mathtt{Eval}(\footcircled{$5$})\\
        &=
        \alpha_{SIALW}\big(  size_0\doteq 0.4 \big) \otimes \mathtt{Eval}(\footcircled{$2$})  \\
        &=
        (1.728,1) \otimes (0.7,0) \\
        &=
        (1.2096,1)\\
        \hfill
        \\
        &\mathtt{Eval}(\footcircled{$6$})\\
        &=
        \mathtt{Eval}(\footcircled{$3$}) \oplus \mathtt{Eval}(\footcircled{$5$})  \\
        &=
        (0.2304,1 \oplus (1.2096,1) \\
        &=
        (1.440,1)\\
    \end{align*}
\end{minipage}

\end{example}

\subsection{Experimental Evaluation}
\label{sec:experimental}

\new{

In order to demonstrate the benefits of adapting the technique of knowledge compilation to the discrete-continuous domain with zero-probability conditioning events, we model a machine that runs either in operating \probloginline{mode1} or \probloginline{mode2}; (with probability $0.2$ and $0.8$ respectively). Furthermore, the machine can be faulty with a small probability of $10^{-5}$. In this case we would like to switch the machine of and repair it.

If the machine is not faulty the temperature measurements we perform on the machine are distributed according to two Gaussian (Lines~\ref{line:ex:faulty:temp1} and \ref{line:ex:faulty:temp2}).
If the machine is faulty, however, we get a deterministic temperature reading of $2.0$ (Line \ref{line:ex:faulty:temp3}).

}

\begin{problog*}{linenos}
0.00001::faulty.
0.2::mode1;0.7::mode2.

temperature ~ normal(0.5,1.0) :- \+faulty, mode1. @\label{line:ex:faulty:temp1}@
temperature ~ normal(2.0,2.0) :- \+faulty, mode2. @\label{line:ex:faulty:temp2}@
temperature ~ delta(2.0) :- faulty. @\label{line:ex:faulty:temp3}@
\end{problog*}

\new{
We are now interested in computing $p( \mathprobloginline{faulty}{=}\top {\mid} \mathprobloginline{temperature}{\doteq} 2.0 )$. That is, what is the probability that the machine is faulty given that the temperature measurement is $2.0$.

In our experiment we compared the performance of SIALW (\cf Definition~\ref{def:sample_probability_labeling_function}) to the inference algorithm of \dcsty~\citep{nitti2016probabilistic}. The latter is equivalent to the algorithms presented by \Citet{wu2018discrete} and \Citet{jacobs2021paradoxes} as all three perform, in essence, likelihood weighting with infinitesimal numbers.   
Specifically, we study the sensitivity of the algorithms with regard to the probability of the machine being faulty.

In Figure~\ref{fig:experiment:faultVScorrectness} we see that using SIALW computes the correct probability regardless of the fault probability $\{10^{-5}, 10^{-4}, 10^{-3}, 10^{-2}, 10^{-1} \}$. We also see that the naivc likelihood weighting algorithm (without the exact symbolic inference of SIALW) needs a substantial amount of samples to infer the correct probability. Most notably, for a fault probability of $10^{-5}$ not even a sample size of $10^5$ is sufficient.

The large discrepancy between SIALW and the competing approach by \Citet{nitti2016probabilistic} is explained as follows: in order to correctly infer the queried probability one of the samples drawn from the Bernoulli distribution \probloginline{faulty ~ flip(0.00001)} needs to be true. This would then trigger the rule for \probloginline{temperature ~ delta(2.0) :- faulty}. As this is, however, extremely unlikely the crucial rule needed to perform correct likelihood weighting with infinitesimal numbers is never triggered and the returned probability is incorrect.
SIALW, in constrast, does not sample \probloginline{faulty ~ flip(0.00001)} but performs exact inference using knowledge compilation. As a result SIALW always computes the correct posterior probability.

\begin{figure}
    \begin{center}
        \includegraphics[width=\linewidth]{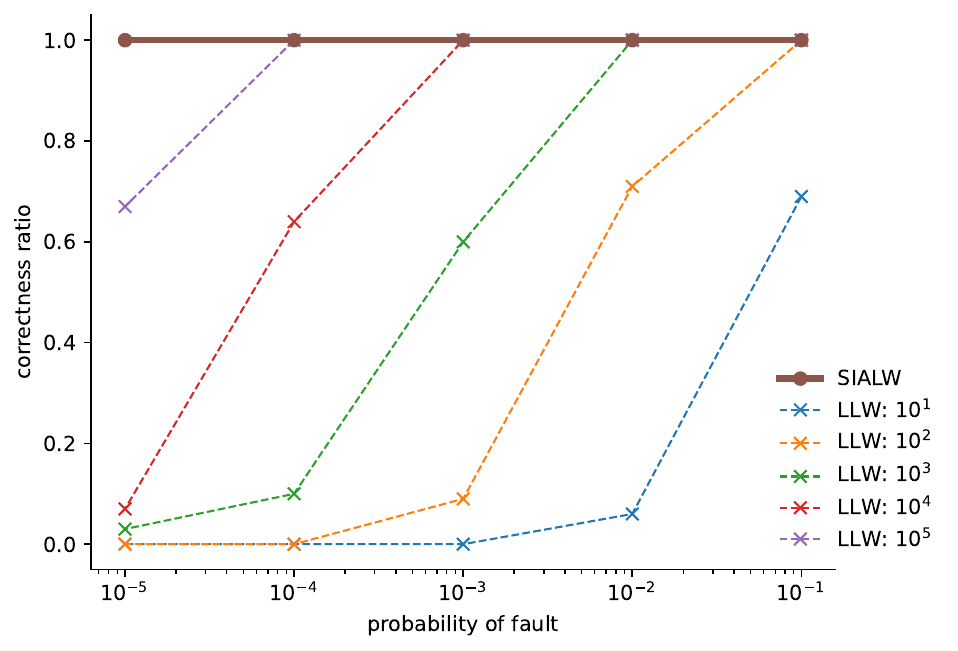}
    \end{center}
    \caption{
        \new{
        We queried SIALW and (non-symbolci) likelihood weighting each $100$ times for the probability $p( \mathprobloginline{faulty}{=}\top {\mid} \mathprobloginline{temperature}{\doteq} 2.0 )$. On the $y$-axis we give the ratio $\nicefrac{\text{\#correct runs}}{\text{\#runs}}$.
        Die to the use of knowledge compilation, SIALW is insensitive to the probability of fault (on the x-axis).
        This is in contrast to the log-likelihood weighting (LLW) algorithm presented by~\Citet{nitti2016probabilistic}, which necessitates a considerable number of samples to compute the queried probability reliably. The different dotted lines indicate settings with varying sample sizes ($\{10^1, 10^2, 10^3, 10^4, 10^5 \}$).  
        }
    }
    \label{fig:experiment:faultVScorrectness}
\end{figure}
}


\section{\dcproblogsty and the Probabilistic Programming Landscape}\label{sec:related}

In recent years a plethora of different probabilistic programming languages have been developed. 
We discuss these by pointing out key features present in \dcproblogsty (listed below), which are missing in specific related works.
We organize these features along the three key contributions stated in Section \ref{sec:introduction}. Our first key contribution is the introduction of the measure semantics with the following features:

\begin{enumerate}[label=C1.\arabic*, leftmargin=2\parindent]
    \item \new{possibly infinite number (even uncountable) number of random variables} 
    \label{f:infinite_number_of rv}
    \item random variables with (possibly) infinite sample spaces
    \label{f:infinite_samplespaces}
    \item functional dependencies between random variables
    \label{f:rv_dependencies}
    \item uniform treatment of discrete and continuous random variables
    \label{f:dc_rv_uniform}
    \item negation
    \label{f:declarative_negation}
\end{enumerate}

\noindent Our second contribution is the  introduction of  the \dcproblogsty language, which  
\begin{enumerate}[label=C2.\arabic*, leftmargin=2\parindent]
    \item has purely discrete PLPs and their semantics as a special case,
    \label{f:discrete_special}
    \item supports a rich set of comparison predicates, and
    \label{f:comparison_predicates}
    \item is a Turing complete language (\dcplpsty)
    \label{f:turing_complete}
\end{enumerate}

\noindent Our last contributions concern inference, which includes 
\begin{enumerate}[label=C3.\arabic*, leftmargin=2\parindent]
    \item a formal definition of the hybrid probabilistic inference task,
    \label{f:definition_inference}
    \item an inference algorithm called IALW, 
    \label{f:inference_algorithm}
    \item that uses standard knowledge compilation in the hybrid domain.
    \label{f:inference_kc}
\end{enumerate}

\subsection{\problogsty and \dcsty}

The \dcproblogsty language is a generalization of \problogsty, both in terms of syntax and semantics. A \dcproblogsty program that does not use distributional clauses (or distributional facts) is also a \problogsty program, and both define the same distribution over the logical vocabulary of the program. \dcproblogsty properly generalizes \problogsty to include random variables with infinite sample spaces (\ref{f:infinite_samplespaces}). 

On a syntactical level, \dcproblogsty is closely related to the \dcsty (DC) language, with which it shares the  \predicate{~}{2} predicate used in infix notation. 
In Appendix~\ref{sec:dcproblog-dc} we discuss in more detail the relationship between \dcproblogsty and the \dcsty language. Concretely, we point out that 
\dcproblogsty generalizes the original and negation-free version of DC~\citep{gutmann2011magic} (\ref{f:declarative_negation}).
However, \dcproblogsty differs in its declarative interpretation of negation from the procedural interpretation as introduced to DC by~\citet{nitti2016probabilistic}.
As a consequence, the semantics of DC and \problogsty differ in the absence of continuous random variables, while \dcproblogsty is a strict generalization of \problogsty (\ref{f:discrete_special}).

\subsection{Bayesian Logic Programs}
\label{sec:blp}

\new{
Bayesian logic programs (BLPs)~\citep{kersting2000bayesian} can be seen as a special case of \dcplpsty: while the semantics of \dcplpsty allow for a (possibly uncountable) infinite number of random variables~\ref{f:infinite_number_of rv}, BLPs are limited to finite distributional databases (expressed as Bayesian networks).

Moreover, using the construct of distributional clauses we introduce syntax to interleave logic programming statements and the declaration of the distributional database. This is not supported in the BLP language.
Lastly, IAWL equips \dcproblogsty with a sound inference algorithm for the discrete-continuous space. This is again in contrast to BLP's inference algorithm that only handles (discrete mixtures of) continuous random variables. 
}

\subsection{\extendedprismsty}

An early attempt of equipping a probabilistic logic programming language with continuous random variables can be found in~\citep{islam2012inference}, which was dubbed {\em \extendedprismsty}. 
Similar to \dcproblogsty, \extendedprismsty's semantics are based again on \citeauthor{sato1995statistical}'s distribution semantics.
However, \extendedprismsty assumes, just like \dcsty , pairwise mutually exclusive proofs (we refer again to Appendix~\ref{sec:dcproblog-dc} for details on this).
On the expressivity side, \extendedprismsty only supports linear equalities -- in contrast to \dcproblogsty, where also inequalities are included in the semantics of the language (\ref{f:comparison_predicates}). 
An advantage of restricting possible constraints to equalities is the possibility of performing exact symbolic inference. In this regard, \extendedprismsty, together with its symbolic inference algorithm, can be viewed as a logic programming language that has access to a computer algebra system. Swapping out the approximate Sampo-inspired inference algorithm in \dcproblogsty by an exact inference algorithm using symbolic expression manipulations would result in an inference approach closely related to that of \extendedprismsty. One possibility would be to use the Symbo algorithm presented in~\citep{zuidbergdosmartires2019exact}, which uses the PSI-language~\citep{gehr2016psi} as its (probabilistic)  computer algebra system.

\subsection{Probabilistic Constraint Logic Programming}

Impressive work on extending probabilistic logic programs with continuous random variables was presented by~\citet{michels2015new} with the introduction of Probabilistic Constraint Logic Programming (PCLP). The semantics of PCLP are again based on \citeauthor{sato1995statistical}'s distribution semantics and the authors also presented an approximate inference algorithm for hybrid probabilistic logic programs.
Interestingly, the algorithm presented in~\citep{michels2015new} to perform (conditional) probabilistic inference extends weighted model counting to continuous random variables using imprecise probabilities, and more specifically credal sets.

A shortcoming of PCLP's semantics is the lack of direct support for generative definitions of random variables, \ie, random variables can only be interpreted within constraints, but not within distributions of other random variables as is possible in \dcproblogsty (\ref{f:rv_dependencies}).
\citet{azzolini2021semantics} define a non-credal version of this semantics using a product measure over a space that explicitly separates discrete and continuous random variables, assuming that a measure over the latter is given as part of the input without further discussion of how this part of the measure is specified in a program. Furthermore, they do not define any inference tasks (\ref{f:definition_inference}), \eg computing conditional probabilities (\cf Section~\ref{sec:inference-tasks}), nor
do they provide an inference algorithm (\ref{f:inference_algorithm}).

A later proposal for the syntax of such programs~\citep{azzolini:iclp21} combines two classes of terms (logical and continuous ones) with typed predicates and functors, and defines mixture variables as well as arithmetic expressions over random variables through logical clauses. In other words, user-defined predicates define families of random variables through the use of typed arguments of the predicate identifying a specific random variable, arguments providing parameters for the distribution, and one argument representing the random variable itself.
In contrast, the syntax of \dcproblogsty clearly identifies all  random variables through explicit terms introduced through distributional facts or distributional clauses, explicitly exposes the probabilistic dependency structure by using random variable terms inside distribution terms, and avoids typing through argument positions.
Moreover, \dcproblogsty takes a uniform view on all random variables in terms of semantics, thereby avoiding treating discrete and continuous random variables separately (\ref{f:dc_rv_uniform}).

\subsection{\blogsty}

Notable in the domain of probabilistic logic programming is also the BLOG language~\citep{milch2005blog,wu2018discrete}. Contrary to the aforementioned probabilistic logic programming languages, BLOG's semantics are not specified using \citeauthor{sato1995statistical}'s distribution semantics but via so-called {\em measure-theoretic Bayesian networks} (MTBN), which were introduced in~\citep{wu2018discrete}. MTBNs can be regarded as the assembly language for BLOG: every BLOG program is translated or compiled to an MTBN.
With \dcproblogsty we follow a similar pattern: every \dcproblogsty program with syntactic sugar (\eg annotated disjunctions)  is transformed into \dfplpsty program. The semantics are defined on the bare-bones program. Note that the assembly language for \dcproblogsty (\dfplpsty) is Turing complete. This is not the case for MTBNs (\ref{f:turing_complete}).


\subsection{Non-logical Probabilistic Programming}

As first pointed out by~\citet{russell2015unifying} and later on elaborated upon by~\citet{kimmig2017probabilistic}, probabilistic programs fall either into the {\em possible worlds semantics} category or the {\em probabilistic execution traces semantics} category. The former is usually found in logic based languages, while the latter is the prevailing view in imperative and functional probabilistic languages.

While, the probabilistic programming languages discussed so far follow the possible worlds paradigm,
many languages follow the execution traces paradigm, either as a probabilistic functional language~\citep{goodman2008church,wood2014approach} or as a imperative probabilistic language~\citep{gehr2016psi,salvatier2016probabilistic,carpenter2017stan,bingham2019pyro,ge2018turing}. Generally speaking, functional and imperative probabilistic programming languages target first and foremost continuous random variables, and discrete random variables are only added as an afterthought. A notable exception is the functional probabilistic programming language Dice~\citep{holtzen2020dice}, which targets discrete random variables exclusively.

Concerning inference in probabilistic programming, we can observe general trends in logical and non-logical probabilistic languages. While the latter are interested in adapting and speeding up approximate inference algorithms, such as Markov chain Monte Carlo sampling schemes or variational inference, the former type of languages are more invested in exploiting independences in the probabilistic programs, mainly by means of knowledge compilation. Clearly, these trends are not strict. For instance, \citet{obermeyer2019functional} proposed so-called {\em funsors} to express and exploit independences in \pyrosty~\citep{bingham2019pyro}, an imperative probabilistic programming language, and \citet{gehr2016psi} developed a computer algebra system to perform exact symbolic probabilistic inference.

\subsection{Representation of Probabilistic Programs at Inference Time}

Lastly, we would like to point out a key feature of the IALW inference algorithm that sets it apart from any other inference scheme for probabilistic programming in the hybrid domain. But first, let us briefly talk about computing probabilities in probabilistic programming. Roughly speaking, probabilities are computed summing and multiplying weights. These can for example be represented as floating point numbers or symbolic expressions. The collection of all operations that were performed to obtain the probability of a query to a program is called the computation graph. Now, the big difference between IALW and other inference algorithms lies in the structure of the computation graph. IALW represents the computation graph as a directed acyclic graph (DAG), while all other languages, except some purely discrete languages~\citep{fierens2015inference,holtzen2020dice}, use a tree representation. IALW is the first inference algorithm in the discrete-continuous domain that uses DAGs (\ref{f:inference_kc}). In cases where the computation graph can be represented as a DAG the size of the representation might be exponentially smaller compared to tree representations, which leads to faster inference times.

Note that \citet{gutmann2010extending} and more recently~\citet{saad2021sppl} presented implementations of hybrid languages where the inference algorithm leverages directed acyclic graphs, as well. However, the constraints that may be imposed on random variables are limited to univariate equalities and inequalities. In the weighted model integration literature it was shown that such probability computations can be mapped to probability computations of discrete random variables only~\citep{zeng2019efficient}.

\subsection{Probabilistic Neurosymbolic AI}

\new{
As noted by \Citet{desmet2023neural} a shortcoming of many neurosymbolic AI systems~\citep{garcez2019neural,marra2024statistical}, \ie systems that combine the function approximation power of neural networks with logic reasoning, is their restriction to only allowing discrete random variables.
Based on the semantics of \dcplpsty, \citet{desmet2023neural} extended distributional facts to so-called neural-distributional facts. 
That is, they allowed for neural networks to estimate the parameters of the distribution in the distributional fact.
Importantly, they showed that endowing a neurosymbolic system in the discrete-continuous domain with proper probabilistic semantics is advantageous when comparing to systems that exhibit, for instance, a fuzzy logic semantics~\citep{badreddine2022logic}.
}

\section{Conclusions}\label{sec:conclusions}

We introduced \dcproblogsty, a hybrid PLP language for the discrete-continuous domain and its accompanying measure semantics.
\dcproblogsty strictly extends the discrete \problogsty language \citep{de2007problog,fierens2015inference} and the negation-free \dcsty~\citep{gutmann2011magic} language.
In designing the language and its semantics we adapted ~\citet{poole2010probabilistic}'s design principle of percolating probabilistic logic programs into two separate layers: the random variables and the logic program.
 Boolean comparison atoms then form the  link between the two layers.
It is this clear separation between the random variables and the logic program that has allowed us to use simpler language constructs and to write programs using a  more concise and intuitive syntax than alternative hybrid PLP approaches \citep{gutmann2010extending,nitti2016probabilistic,speichert2019learning,azzolini2021semantics}.

Separating random variables from the logic program also allowed us to develop the IALW algorithm to perform inference in the hybrid domain. 
IALW is the first algorithm based on knowledge compilation and algebraic model counting for hybrid probabilistic programming languages
and as such it generalizes the standard knowledge compilation based approach for PLP.
It is noteworthy that IALW correctly computes conditional probabilities in the discrete-continuous domain using the newly introduced infinitesimal numbers semiring.

Interesting future research directions include adapting ideas from functional probabilistic programming (the other declarative programming style besides logic programming) in the context of probabilistic logic programming. For instance, extending \dcproblogsty with a type system~\citep{Schrijvers2008TowardsTP} or investigating more recent advances, such as {\em quasi-Borel spaces}~\citep{heunen2017convenient} in the context of the measure semantics.

\section*{Acknowledgement}

This research received funding from the Wallenberg AI, Autonomous Systems and Software Program (WASP) of the Knut and Alice Wallenberg Foundation, the Flemish Government (AI Research Program),
the KU Leuven Research Fund, the European Research Council (ERC) under the European Union’s Horizon 2020 research and innovation programme (grant agreement No [694980] SYNTH: Synthesising Inductive Data Models), and
the Research Foundation - Flanders.

\bibliography{references}

\begin{thebibliography}{72}
\providecommand{\natexlab}[1]{#1}
\providecommand{\url}[1]{\texttt{#1}}
\expandafter\ifx\csname urlstyle\endcsname\relax
  \providecommand{\doi}[1]{doi: #1}\else
  \providecommand{\doi}{doi: \begingroup \urlstyle{rm}\Url}\fi

\bibitem[Azzolini and Riguzzi(2021)]{azzolini:iclp21}
Damiano Azzolini and Fabrizio Riguzzi.
\newblock Syntactic requirements for well-defined hybrid probabilistic logic
  programs.
\newblock In \emph{Technical communications of International Conference on
  Logic Programming}, 2021.

\bibitem[Azzolini et~al.(2021)Azzolini, Riguzzi, and
  Lamma]{azzolini2021semantics}
Damiano Azzolini, Fabrizio Riguzzi, and Evelina Lamma.
\newblock A semantics for hybrid probabilistic logic programs with function
  symbols.
\newblock \emph{Artificial Intelligence}, 294:\penalty0 103452, 2021.

\bibitem[Badreddine et~al.(2022)Badreddine, Garcez, Serafini, and
  Spranger]{badreddine2022logic}
Samy Badreddine, Artur~d'Avila Garcez, Luciano Serafini, and Michael Spranger.
\newblock Logic tensor networks.
\newblock \emph{Artificial Intelligence}, 303:\penalty0 103649, 2022.

\bibitem[Bellman(1957)]{bellman1957dynamic}
Richard Bellman.
\newblock \emph{Dynamic Programming}.
\newblock Princeton University Press, 1957.

\bibitem[Bingham et~al.(2019)Bingham, Chen, Jankowiak, Obermeyer, Pradhan,
  Karaletsos, Singh, Szerlip, Horsfall, and Goodman]{bingham2019pyro}
Eli Bingham, Jonathan~P. Chen, Martin Jankowiak, Fritz Obermeyer, Neeraj
  Pradhan, Theofanis Karaletsos, Rohit Singh, Paul~A. Szerlip, Paul Horsfall,
  and Noah~D. Goodman.
\newblock Pyro: Deep universal probabilistic programming.
\newblock \emph{Journal on Machine Learning Research}, 20:\penalty0 28:1--28:6,
  2019.

\bibitem[Carpenter et~al.(2017)Carpenter, Gelman, Hoffman, Lee, Goodrich,
  Betancourt, Brubaker, Guo, Li, and Riddell]{carpenter2017stan}
Bob Carpenter, Andrew Gelman, Matthew~D Hoffman, Daniel Lee, Ben Goodrich,
  Michael Betancourt, Marcus Brubaker, Jiqiang Guo, Peter Li, and Allen
  Riddell.
\newblock Stan: A probabilistic programming language.
\newblock \emph{Journal of statistical software}, 76\penalty0 (1), 2017.

\bibitem[Casella and Robert(1998)]{casella1998post}
George Casella and Christian~P Robert.
\newblock Post-processing accept-reject samples: recycling and rescaling.
\newblock \emph{Journal of Computational and Graphical Statistics}, 7\penalty0
  (2):\penalty0 139--157, 1998.

\bibitem[Cherchago et~al.(2007)Cherchago, Hitzler, and
  H{\"o}lldobler]{cherchago2007decidability}
Natalia Cherchago, Pascal Hitzler, and Steffen H{\"o}lldobler.
\newblock Decidability under the well-founded semantics.
\newblock In \emph{International Conference on Web Reasoning and Rule Systems},
  2007.

\bibitem[Darwiche(2001)]{darwiche2001tractable}
Adnan Darwiche.
\newblock On the tractable counting of theory models application to truth
  maintenance and belief revision.
\newblock \emph{Journal of Applied Non-Classical Logics}, 11:\penalty0 11 --
  34, 2001.

\bibitem[Darwiche and Marquis(2002)]{darwiche2002knowledge}
Adnan Darwiche and Pierre Marquis.
\newblock A knowledge compilation map.
\newblock \emph{Journal of Artificial Intelligence Research}, 17:\penalty0
  229--264, 2002.

\bibitem[De~Raedt and Kimmig(2015)]{de2015probabilistic}
Luc De~Raedt and Angelika Kimmig.
\newblock Probabilistic (logic) programming concepts.
\newblock \emph{Machine Learning}, 100\penalty0 (1):\penalty0 5--47, 2015.

\bibitem[De~Raedt et~al.(2007)De~Raedt, Kimmig, and Toivonen]{de2007problog}
Luc De~Raedt, Angelika Kimmig, and Hannu Toivonen.
\newblock Problog: A probabilistic prolog and its application in link
  discovery.
\newblock In \emph{International Joint Conference on Artificial Intelligence},
  2007.

\bibitem[{De Raedt} et~al.(2016){De Raedt}, Kersting, Natarajan, and
  Poole]{deraedt:mc16}
Luc {De Raedt}, Kristian Kersting, Sriraam Natarajan, and David Poole.
\newblock \emph{Statistical Relational Artificial Intelligence: Logic,
  Probability, and Computation}.
\newblock Synthesis Lectures on Artificial Intelligence and Machine Learning.
  Morgan {\&} Claypool Publishers, 2016.

\bibitem[De~Smet et~al.(2023)De~Smet, Zuidberg Dos~Martires, Manhaeve, Marra,
  Kimmig, and De~Raedt]{desmet2023neural}
Lennert De~Smet, Pedro Zuidberg Dos~Martires, Robin Manhaeve, Giuseppe Marra,
  Angelika Kimmig, and Luc De~Raedt.
\newblock Neural probabilistic logic programming in discrete-continuous
  domains.
\newblock In \emph{Uncertainty in Artificial Intelligence Conference (UAI)},
  2023.

\bibitem[Dries et~al.(2015)Dries, Kimmig, Meert, Renkens, Van~den Broeck,
  Vlasselaer, and De~Raedt]{dries2015problog2}
Anton Dries, Angelika Kimmig, Wannes Meert, Joris Renkens, Guy Van~den Broeck,
  Jonas Vlasselaer, and Luc De~Raedt.
\newblock Problog2: Probabilistic logic programming.
\newblock In \emph{{Joint European conference on machine learning and knowledge
  discovery in databases}}, 2015.

\bibitem[Fierens et~al.(2015)Fierens, Van~den Broeck, Renkens, Shterionov,
  Gutmann, Thon, Janssens, and De~Raedt]{fierens2015inference}
Daan Fierens, Guy Van~den Broeck, Joris Renkens, Dimitar Shterionov, Bernd
  Gutmann, Ingo Thon, Gerda Janssens, and Luc De~Raedt.
\newblock Inference and learning in probabilistic logic programs using weighted
  boolean formulas.
\newblock \emph{Theory and Practice of Logic Programming}, 15\penalty0
  (3):\penalty0 358--401, 2015.

\bibitem[Fung and Chang(1990)]{fung1990weighing}
Robert Fung and Kuo-Chu Chang.
\newblock Weighing and integrating evidence for stochastic simulation in
  bayesian networks.
\newblock \emph{Machine Intelligence and Pattern Recognition}, 10:\penalty0
  209--219, 1990.

\bibitem[Garcez et~al.(2019)Garcez, Gori, Lamb, Serafini, Spranger, and
  Tran]{garcez2019neural}
Artur~d'Avila Garcez, Marco Gori, Luis~C Lamb, Luciano Serafini, Michael
  Spranger, and Son~N Tran.
\newblock Neural-symbolic computing: An effective methodology for principled
  integration of machine learning and reasoning.
\newblock \emph{arXiv preprint arXiv:1905.06088}, 2019.

\bibitem[Ge et~al.(2018)Ge, Xu, and Ghahramani]{ge2018turing}
Hong Ge, Kai Xu, and Zoubin Ghahramani.
\newblock Turing: A language for flexible probabilistic inference.
\newblock In \emph{International Conference on Artificial Intelligence and
  Statistics}, 2018.

\bibitem[Gehr et~al.(2016)Gehr, Misailovic, and Vechev]{gehr2016psi}
Timon Gehr, Sasa Misailovic, and Martin Vechev.
\newblock {PSI}: Exact symbolic inference for probabilistic programs.
\newblock In \emph{International Conference on Computer Aided Verification},
  2016.

\bibitem[Goodman et~al.(2008)Goodman, Mansinghka, Roy, Bonawitz, and
  Tenenbaum]{goodman2008church}
Noah~D Goodman, Vikash~K Mansinghka, Daniel Roy, Keith Bonawitz, and Joshua~B
  Tenenbaum.
\newblock Church: a language for generative models.
\newblock In \emph{Conference on Uncertainty in Artificial Intelligence}, 2008.

\bibitem[Gutmann et~al.(2010)Gutmann, Jaeger, and
  De~Raedt]{gutmann2010extending}
Bernd Gutmann, Manfred Jaeger, and Luc De~Raedt.
\newblock Extending problog with continuous distributions.
\newblock In \emph{International Conference on Inductive Logic Programming},
  2010.

\bibitem[Gutmann et~al.(2011)Gutmann, Thon, Kimmig, Bruynooghe, and
  De~Raedt]{gutmann2011magic}
Bernd Gutmann, Ingo Thon, Angelika Kimmig, Maurice Bruynooghe, and Luc
  De~Raedt.
\newblock The magic of logical inference in probabilistic programming.
\newblock \emph{Theory and Practice of Logic Programming}, 11\penalty0
  (4-5):\penalty0 663--680, 2011.

\bibitem[Harviainen et~al.(2023)Harviainen, Ramaswamy, and
  Koivisto]{harviainen2023inference}
Juha Harviainen, Vaidyanathan~Peruvemba Ramaswamy, and Mikko Koivisto.
\newblock On inference and learning with probabilistic generating circuits.
\newblock In \emph{Uncertainty in Artificial Intelligence}, 2023.

\bibitem[Heunen et~al.(2017)Heunen, Kammar, Staton, and
  Yang]{heunen2017convenient}
Chris Heunen, Ohad Kammar, Sam Staton, and Hongseok Yang.
\newblock A convenient category for higher-order probability theory.
\newblock In \emph{Annual ACM/IEEE Symposium on Logic in Computer Science
  (LICS)}, 2017.

\bibitem[Holtzen et~al.(2020)Holtzen, den Broeck, and
  Millstein]{holtzen2020dice}
Steven Holtzen, Guy~Van den Broeck, and Todd~D. Millstein.
\newblock Scaling exact inference for discrete probabilistic programs.
\newblock In \emph{ACM on Programming Languages}, 2020.

\bibitem[Islam et~al.(2012)Islam, Ramakrishnan, and
  Ramakrishnan]{islam2012inference}
Muhammad~Asiful Islam, CR~Ramakrishnan, and IV~Ramakrishnan.
\newblock Inference in probabilistic logic programs with continuous random
  variables.
\newblock \emph{Theory and Practice of Logic Programming}, 12\penalty0
  (4-5):\penalty0 505--523, 2012.

\bibitem[Iverson(1962)]{iverson1962programming}
Kenneth~E Iverson.
\newblock A programming language.
\newblock In \emph{Spring joint computer conference}, 1962.

\bibitem[Jacobs(2021)]{jacobs2021paradoxes}
Jules Jacobs.
\newblock Paradoxes of probabilistic programming: And how to condition on
  events of measure zero with infinitesimal probabilities.
\newblock In \emph{ACM SIGPLAN Symposium on Principles of Programming
  Languages}, 2021.

\bibitem[Janhunen(2004)]{janhunen2004representing}
Tomi Janhunen.
\newblock Representing normal programs with clauses.
\newblock In \emph{European Conference on Artificial Intelligence}, 2004.

\bibitem[Jaynes(2003)]{jaynes2003probability}
Edwin~T Jaynes.
\newblock \emph{Probability theory: The logic of science}.
\newblock Cambridge university press, 2003.

\bibitem[Kadane(2011)]{kadane2011principles}
Joseph~B Kadane.
\newblock \emph{Principles of uncertainty}.
\newblock CRC Press, 2011.

\bibitem[Kahn(1950)]{kahn1950random}
Herman Kahn.
\newblock Random sampling (monte carlo) techniques in neutron attenuation
  problems. i.
\newblock \emph{Nucleonics}, 6\penalty0 (See also NSA 3-990), 1950.

\bibitem[Kersting and De~Raedt(2000)]{kersting2000bayesian}
Kristian Kersting and Luc De~Raedt.
\newblock Bayesian logic programs.
\newblock In \emph{International Conference on Inductive Logic Programming},
  2000.

\bibitem[Kimmig and De~Raedt(2017)]{kimmig2017probabilistic}
Angelika Kimmig and Luc De~Raedt.
\newblock Probabilistic logic programs: Unifying program trace and possible
  world semantics.
\newblock In \emph{Workshop on probabilistic programming semantics}, 2017.

\bibitem[Kimmig et~al.(2017)Kimmig, Van~den Broeck, and
  De~Raedt]{kimmig2017algebraic}
Angelika Kimmig, Guy Van~den Broeck, and Luc De~Raedt.
\newblock Algebraic model counting.
\newblock \emph{Journal of Applied Logic}, 22:\penalty0 46--62, 2017.

\bibitem[Kloek and Van~Dijk(1978)]{kloek1978bayesian}
Tuen Kloek and Herman~K Van~Dijk.
\newblock Bayesian estimates of equation system parameters: an application of
  integration by monte carlo.
\newblock \emph{Econometrica: Journal of the Econometric Society}, pages 1--19,
  1978.

\bibitem[Kolb et~al.(2019)Kolb, Zuidberg Dos~Martires, and
  De~Raedt]{kolb2019exploit}
Samuel Kolb, Pedro Zuidberg Dos~Martires, and Luc De~Raedt.
\newblock How to exploit structure while solving weighted model integration
  problems.
\newblock In \emph{Uncertainty in Artificial Intelligence}, 2019.

\bibitem[Lloyd(2012)]{lloyd2012foundations}
John~W Lloyd.
\newblock \emph{Foundations of logic programming}.
\newblock Springer Science \& Business Media, 2012.

\bibitem[Mansinghka et~al.(2014)Mansinghka, Selsam, and
  Perov]{mansinghka2014venture}
Vikash Mansinghka, Daniel Selsam, and Yura Perov.
\newblock Venture: a higher-order probabilistic programming platform with
  programmable inference.
\newblock \emph{arXiv:1404.0099}, 2014.

\bibitem[Mantadelis and Janssens(2010)]{mantadelis2010dedicated}
Theofrastos Mantadelis and Gerda Janssens.
\newblock Dedicated tabling for a probabilistic setting.
\newblock In \emph{Technical Communications of the 26th International
  Conference on Logic Programming}, 2010.

\bibitem[Marinescu and Dechter(2019)]{marinescu2019counting}
Radu Marinescu and Rina Dechter.
\newblock Counting the optimal solutions in graphical models.
\newblock In \emph{Advances in Neural Information Processing Systems}, 2019.

\bibitem[Marra et~al.(2024)Marra, Duman{\v{c}}i{\'c}, Manhaeve, and
  De~Raedt]{marra2024statistical}
Giuseppe Marra, Sebastijan Duman{\v{c}}i{\'c}, Robin Manhaeve, and Luc
  De~Raedt.
\newblock From statistical relational to neurosymbolic artificial intelligence:
  A survey.
\newblock \emph{Artificial Intelligence}, page 104062, 2024.

\bibitem[Michels et~al.(2015)Michels, Hommersom, Lucas, and
  Velikova]{michels2015new}
Steffen Michels, Arjen Hommersom, Peter~JF Lucas, and Marina Velikova.
\newblock A new probabilistic constraint logic programming language based on a
  generalised distribution semantics.
\newblock \emph{Artificial Intelligence}, 228:\penalty0 1--44, 2015.

\bibitem[Milch et~al.(2005{\natexlab{a}})Milch, Marthi, Russell, Sontag, Ong,
  and Kolobov]{milch2005blog}
Brian Milch, Bhaskara Marthi, Stuart Russell, David Sontag, Daniel~L Ong, and
  Andrey Kolobov.
\newblock Blog: Probabilistic models with unknown objects.
\newblock In \emph{International Joint Conference on Artificial Intelligence},
  2005{\natexlab{a}}.

\bibitem[Milch et~al.(2005{\natexlab{b}})Milch, Marthi, Sontag, Russell, Ong,
  and Kolobov]{milch2005approximate}
Brian Milch, Bhaskara Marthi, David Sontag, Stuart Russell, Daniel~L Ong, and
  Andrey Kolobov.
\newblock Approximate inference for infinite contingent bayesian networks.
\newblock In \emph{International Workshop on Artificial Intelligence and
  Statistics}, 2005{\natexlab{b}}.

\bibitem[Milch(2006)]{milch2006probabilistic}
Brian~Christopher Milch.
\newblock \emph{Probabilistic Models with Unknown Objects}.
\newblock University of California, Berkeley, 2006.

\bibitem[Nitti et~al.(2016)Nitti, De~Laet, and
  De~Raedt]{nitti2016probabilistic}
Davide Nitti, Tinne De~Laet, and Luc De~Raedt.
\newblock Probabilistic logic programming for hybrid relational domains.
\newblock \emph{Machine Learning}, 103\penalty0 (3):\penalty0 407--449, 2016.

\bibitem[Obermeyer et~al.(2019)Obermeyer, Bingham, Jankowiak, Phan, and
  Chen]{obermeyer2019functional}
Fritz Obermeyer, Eli Bingham, Martin Jankowiak, Du~Phan, and Jonathan Chen.
\newblock Functional tensors for probabilistic programming.
\newblock \emph{arXiv:1910.10775}, 2019.

\bibitem[Poole(1993)]{poole1993probabilistic}
David Poole.
\newblock Probabilistic horn abduction and bayesian networks.
\newblock \emph{Artificial intelligence}, 64\penalty0 (1):\penalty0 81--129,
  1993.

\bibitem[Poole(2010)]{poole2010probabilistic}
David Poole.
\newblock Probabilistic programming languages: Independent choices and
  deterministic systems.
\newblock \emph{Heuristics, probability and causality: A tribute to Judea
  Pearl}, pages 253--269, 2010.

\bibitem[Riguzzi(2018)]{riguzzi2018foundations}
Fabrizio Riguzzi.
\newblock \emph{Foundations of Probabilistic Logic Programming}.
\newblock River Publishers, 2018.

\bibitem[Riguzzi and Swift(2011)]{riguzzi2011pita}
Fabrizio Riguzzi and Terrance Swift.
\newblock The pita system: Tabling and answer subsumption for reasoning under
  uncertainty.
\newblock \emph{Theory and Practice of Logic Programming}, 11\penalty0
  (4-5):\penalty0 433--449, 2011.

\bibitem[Russell(2015)]{russell2015unifying}
Stuart Russell.
\newblock Unifying logic and probability.
\newblock \emph{Communications of the ACM}, 58\penalty0 (7):\penalty0 88--97,
  2015.

\bibitem[Saad et~al.(2021)Saad, Rinard, and Mansinghka]{saad2021sppl}
Feras~A Saad, Martin~C Rinard, and Vikash~K Mansinghka.
\newblock Sppl: Probabilistic programming with fast exact symbolic inference.
\newblock In \emph{Conference on Programming Language Design and
  Implementation}, 2021.

\bibitem[Salvatier et~al.(2016)Salvatier, Wiecki, and
  Fonnesbeck]{salvatier2016probabilistic}
John Salvatier, Thomas~V Wiecki, and Christopher Fonnesbeck.
\newblock Probabilistic programming in python using pymc3.
\newblock \emph{PeerJ Computer Science}, 2:\penalty0 e55, 2016.

\bibitem[Sato(1995)]{sato1995statistical}
Taisuke Sato.
\newblock A statistical learning method for logic programs with distribution
  semantics.
\newblock In \emph{International Conference On Logic Programming}, 1995.

\bibitem[Sato and Kameya(1997)]{sato1997prism}
Taisuke Sato and Yoshitaka Kameya.
\newblock Prism: a language for symbolic-statistical modeling.
\newblock In \emph{International Joint Conference on Artificial Intelligence},
  1997.

\bibitem[Schrijvers et~al.(2008)Schrijvers, Costa, Wielemaker, and
  Demoen]{Schrijvers2008TowardsTP}
Tom Schrijvers, V{\'i}tor~Santos Costa, Jan Wielemaker, and Bart Demoen.
\newblock Towards typed prolog.
\newblock In \emph{International Conference on Logic Programming}, 2008.

\bibitem[Shan and Ramsey(2017)]{shan2017exact}
Chung-chieh Shan and Norman Ramsey.
\newblock Exact {B}ayesian inference by symbolic disintegration.
\newblock In \emph{ACM SIGPLAN Notices}, 2017.

\bibitem[Speichert and Belle(2019)]{speichert2019learning}
Stefanie Speichert and Vaishak Belle.
\newblock Learning probabilistic logic programs over continuous data.
\newblock In \emph{International Conference on Inductive Logic Programming},
  2019.

\bibitem[Staton et~al.(2016)Staton, Wood, Yang, Heunen, and
  Kammar]{staton2016semantics}
Sam Staton, Frank Wood, Hongseok Yang, Chris Heunen, and Ohad Kammar.
\newblock Semantics for probabilistic programming: higher-order functions,
  continuous distributions, and soft constraints.
\newblock In \emph{Annual ACM/IEEE Symposium on Logic in Computer Science
  (LICS)}, 2016.

\bibitem[Valiant(1979)]{valiant1979complexity}
Leslie~G Valiant.
\newblock {The complexity of computing the permanent}.
\newblock \emph{Theoretical Computer Science}, 8:\penalty0 189--201, 1979.

\bibitem[Van~Gelder et~al.(1991)Van~Gelder, Ross, and Schlipf]{van1991well}
Allen Van~Gelder, Kenneth~A Ross, and John~S Schlipf.
\newblock The well-founded semantics for general logic programs.
\newblock \emph{Journal of the ACM}, 38\penalty0 (3):\penalty0 619--649, 1991.

\bibitem[Vennekens et~al.(2004)Vennekens, Verbaeten, and
  Bruynooghe]{vennekens2004logic}
Joost Vennekens, Sofie Verbaeten, and Maurice Bruynooghe.
\newblock Logic programs with annotated disjunctions.
\newblock In \emph{International Conference on Logic Programming}, 2004.

\bibitem[Vlasselaer et~al.(2014)Vlasselaer, Renkens, Van~den Broeck, and
  De~Raedt]{vlasselaer2014compiling}
Jonas Vlasselaer, Joris Renkens, Guy Van~den Broeck, and Luc De~Raedt.
\newblock Compiling probabilistic logic programs into sentential decision
  diagrams.
\newblock In \emph{Workshop on Probabilistic Logic Programming (PLP)}, 2014.

\bibitem[Wood et~al.(2014)Wood, van~de Meent, and Mansinghka]{wood2014approach}
Frank Wood, Jan~Willem van~de Meent, and Vikash Mansinghka.
\newblock A new approach to probabilistic programming inference.
\newblock In \emph{International conference on Artificial Intelligence and
  Statistics}, 2014.

\bibitem[Wu et~al.(2018)Wu, Srivastava, Hay, Du, and Russell]{wu2018discrete}
Yi~Wu, Siddharth Srivastava, Nicholas Hay, Simon Du, and Stuart Russell.
\newblock Discrete-continuous mixtures in probabilistic programming:
  Generalized semantics and inference algorithms.
\newblock In \emph{International Conference on Machine Learning}, 2018.

\bibitem[Zeng and Van~den Broeck(2020)]{zeng2019efficient}
Zhe Zeng and Guy Van~den Broeck.
\newblock Efficient search-based weighted model integration.
\newblock In \emph{Uncertainty in Artificial Intelligence}, 2020.

\bibitem[Zuidberg Dos~Martires(2020)]{zuidberg2020atoms}
Pedro Zuidberg Dos~Martires.
\newblock \emph{From Atoms to Possible Worlds: Probabilistic Inference in the
  Discrete-Continuous Domain}.
\newblock KU Leuven, 2020.

\bibitem[Zuidberg Dos~Martires et~al.(2019{\natexlab{a}})Zuidberg Dos~Martires,
  Derkinderen, Manhaeve, Meert, Kimmig, and
  De~Raedt]{zuidbergdosmartires2019transforming}
Pedro Zuidberg Dos~Martires, Vincent Derkinderen, Robin Manhaeve, Wannes Meert,
  Angelika Kimmig, and Luc De~Raedt.
\newblock Transforming probabilistic programs into algebraic circuits for
  inference and learning.
\newblock In \emph{Transformations for Machine Learning Workshop},
  2019{\natexlab{a}}.

\bibitem[Zuidberg Dos~Martires et~al.(2019{\natexlab{b}})Zuidberg Dos~Martires,
  Dries, and De~Raedt]{zuidbergdosmartires2019exact}
Pedro Zuidberg Dos~Martires, Anton Dries, and Luc De~Raedt.
\newblock Exact and approximate weighted model integration with probability
  density functions using knowledge compilation.
\newblock In \emph{AAAI Conference on Artificial Intelligence},
  2019{\natexlab{b}}.

\end{thebibliography}

\newpage
\clearpage

\appendix

\setcounter{section}{0}
\renewcommand{\thesection}{\Alph{section}}
\renewcommand{\theequation}{\Alph{section}.\arabic{equation}}
\renewcommand{\thetheorem}{\Alph{section}.\arabic{theorem}}
\renewcommand{\thedefinition}{\Alph{section}.\arabic{definition}}

\section{Logic Programming}
\label{app:lp_new}

We briefly summarize key concepts of the syntax and semantics of logic programming; for a full introduction, we refer to \citep{lloyd2012foundations}.

\subsection{Building Blocks}
The basic building blocks of logic programs are \emph{variables} (denoted by strings starting with upper case letters), \emph{constants}, \emph{functors} and
\emph{predicates} (all denoted by strings starting with lower case letters).  A \emph{term} is a variable, a constant, or a functor
$f$ of \emph{arity} $n$ followed by $n$ terms $t_i$, \ie,
$f(t_1,\dots,t_n)$. 
An \emph{atom} is a predicate $p$ of arity $n$ followed by $n$ terms $t_i$, \ie,
$p(t_1,\dots,t_n)$. A predicate $p$ of arity $n$ is also written as \mathpredicate{p}{n}. A \emph{literal} is an
atom or a negated atom $ not(p(t_1,\dots,t_n))$.

\subsection{Logic Programs}

A \emph{definite clause} is a universally quantified expression of the form $h
\lpif b_1, \dots, b_n$ where $h$ and the $b_i$ are atoms.
$h$ is called the \emph{head} of the clause, and $b_1, \dots, b_n$ its
\emph{body}. Informally, the meaning of such a clause is that if all
the $b_i$ are true, $h$ has to be true as well. 
 A \emph{normal clause}  is a universally quantified expression of the form $h
\lpif l_1, \dots, l_n$ where $h$ is an atom and the $l_i$ are
literals.
If $n=0$, a clause is called \emph{fact} and simply written
as $h$.
A \emph{definite clause program} or \emph{logic program} for
short is a finite set of definite clauses. A \emph{normal logic
  program} is a finite set of normal clauses. 

\subsection{Substitutions}

A \emph{substitution} $\theta$ is an expression of the form
$\{V_1/t_1,\dots,V_m/t_m\}$ where the $V_i$ are different variables and
the $t_i$ are terms. Applying a substitution $\theta$ to an expression
$e$ (term or clause) yields the \emph{instantiated} expression $e\theta$
where all variables $V_i$ in $e$ have been simultaneously replaced by
their corresponding terms $t_i$ in $\theta$. If an expression does not
contain variables it is \emph{ground}. Two expressions $e_1$ and $e_2$ can be \emph{unified} if and only if there are substitutions $\theta_1$ and $\theta_2$ such that $e_1\theta_1 = e_2\theta_2$.

\subsection{Herbrand Universe}

The \emph{Herbrand universe} of a logic program is the set of ground terms that can be constructed using the functors and constants occurring in the program.
The \emph{Herbrand base} of a logic program is the set of ground atoms that can be constructed from the predicates in the program and the terms in its Herbrand universe. 
A truth value assignment to all atoms in the 
Herbrand base is called \emph{Herbrand interpretation}, and is also represented as the set of atoms that are true according to the assignment. 
A Herbrand interpretation is a \emph{model} of a clause $h \lpif b_1,\ldots ,b_n\ldotp$ if for every substitution $\theta$ such that all $b_i\theta$ are in the interpretation, $h\theta$ is in the interpretation as well. It is a model of a logic program if it is a model of all clauses in the program. The model-theoretic semantics of a definite clause program is given by its smallest Herbrand model with respect to set inclusion, the so-called \emph{least Herbrand model} (which is unique). We say that a logic program $\dcpprogram$ \emph{entails} an atom $a$, denoted $\dcpprogram\models a$, if and only if $a$ is true in the least Herbrand model of $\dcpprogram$.



\section{Table of Notations}\label{app:table}
\begin{center}
    \rowcolors{2}{gray!15}{white}

\begin{tabular}{c|p{0.5\textwidth}|c}
symbol & meaning & for details, see \\ \hline
 
  $\distributionfunctors$  & set of distribution functors  & Definition~\ref{def:reserved_vocabulary}\\
  $\arithmeticfunctors$   &  set of arithmetic functors & Definition~\ref{def:reserved_vocabulary}\\
  $\comparisonpredicates$   &  set of comparison  predicates & Definition~\ref{def:reserved_vocabulary}\\
  \hline
  $\samplespace_\nu$ & sample space of random variable $\nu$ & Definition~\ref{def:samplespace}\\
  $\samplefunction(\cdot)$ & value assignment function & Definition~\ref{def:samplespace}\\
  $\distdb$ & distributional database & Definition~\ref{def:distDB}\\
  $\randomvariableset$ & set of random variables & Definition~\ref{def:distDB}\\
  $(\samplespace_\distdb, \sigmaalgebra_\distdb, \probabilitymeasure_\distdb)$ & probability space induced by $\distdb$ & Definition~\ref{def:well-distdb}\\
  \hline
  $\comparisonfacts$ & set of Boolean comparison atoms & Definition~\ref{def:comparison-atoms-set}\\
  $\samplespace_\comparisonfacts$ & sample space induced by $\comparisonfacts$ & Proposition~\ref{prop:omegaf}\\
  $\sigmaalgebra_\comparisonfacts$ & sigma algebra induced by $\comparisonfacts$ & Proposition~\ref{prop:pfsigma}\\
  $\probabilitymeasure_\comparisonfacts$ & probability measure induced by $\comparisonfacts$ & Proposition~\ref{prop:pf}\\
  \hline
  $\dfprogram=\distdb  \cup \logicprogram$ & \dfplpsty program & Definition~\ref{def:core-prog}\\
  $\comparisonfacts_{\samplefunction(\randomvariableset)}$ & consistent comparison database induced by $\samplefunction$ on the random variables in $\randomvariableset$ & Definition~\ref{def:consistent-fact-db}\\
  $\probabilitymeasure_\dfprogram$ & probability measure over Herbrand interpretations defined by $\dfprogram$ & Proposition~\ref{prop:pp}\\
  \hline
  $\program$  & \dcproblogsty program & Definition~\ref{def:fullprog}\\
  \adfreeprogram & AD-free \dcproblogsty program & Definition~\ref{def:ad_free_program}\\
  $\headsdc_\adfreeprogram$ & set of heads of distributional clauses in \adfreeprogram & Definition~\ref{def:ad_free_program}\\
  $\randomtermset_\adfreeprogram$ & random terms in $\headsdc _\adfreeprogram$ & Definition~\ref{def:ad_free_program}\\
  $\dclauses_{\adfreeprogram}$ & set of distributional clauses in $\adfreeprogram$ & Definition~\ref{def:dc-df-well-def}\\
  $\contextfunc(\cdot)$ &  contextualization function & Definition~\ref{def:context_function}\\
  $\dfadfreeprogram$ &  \dfplpsty program providing the semantics of $\adfreeprogram$ & Definition~\ref{def:adfree-to-core}\\
  \hline
  $MOD(\dcpprogram)$ & models of a program $\dcpprogram$ & Theorem~\ref{theo:model_equivalence} \\
  $ENUM(\phi)$ & models of a propositional formula $\phi$ & Theorem~\ref{theo:model_equivalence} \\
  $\alpha(\cdot)$ & labeling function of a propositional literal & Definition~\ref{def:labeling_function} \\
  $\ive{\cdot}$ & Iverson bracket denoting an indicator function & Definition~\ref{def:labeling_function} \\
  $\E[\cdot]$ & expected value & Theorem~\ref{theo:label_equivalence} \\
  \hline
  $\mathbb{I}$ & set of infinitesimal numbers & Equation~\ref{def:inf_number} \\
  $\mathcal{S}$ & set of ancestral samples & Equation~\ref{eq:ancestral_samples}
\end{tabular}
\end{center}

\section{Proofs of Propositions in Section~\ref{sec:semantics}}
\label{app:semantics_proofs}





\subsection{Proof of Proposition~\ref{prop:omegaf}}
\label{app:proof:omegaf}

\new{

\propomegaf*

\begin{proof}
    Consider the set of comparison atoms $\comparisonfacts$=   $\{ \kappa_1,  \kappa_2, \dots\}$. 
    Each $\kappa_i$ depends on a finite subset $\randomvariableset_i$ of random variables, namely  those mentioned in $\kappa_i$.
    We write $\randomvariableset_{\leq n}=\bigcup_{1\leq j\leq n} \randomvariableset_j $ for the union of random variables that the first $n$ atoms in the enumeration depend on.
    We obtain the set of all random variables from the following limit:
    \begin{align}
        \mathcal{V}_\comparisonfacts = \lim_{n\rightarrow \infty} \mathcal{V}_{\leq n}.
    \end{align}
    We construct the sample space of $\randomvariableset_{\comparisonfacts}$  with a (countable) Cartesian product
    \begin{align}
        \samplespace_{\comparisonfacts}=\prod_{\nu \in \mathcal{V}_{\comparisonfacts}} \samplespace_\nu.
    \end{align}
\end{proof}
}

\subsection{Proof of Proposition~\ref{prop:pfsigma}}
\label{app:proof:pfsigma}

\new{

\proppfsigma*

\begin{proof}
    We construct the following cylinder set for each comparison atom in the set $\comparisonfacts=\{ \kappa_1,  \kappa_2, \dots \}$:
    \begin{align}
        K_j &= \{  \omega \in \samplespace_{\comparisonfacts} \mid \kappa_j(\omega)=\top \}.
    \end{align}
    Here we use $\kappa_j(\omega)$ to explicitly denote the evaluation of the comparison atom at $\omega$. We denote the set of all such cylinder sets by $\mathcal{K}_{\comparisonfacts}= \lim_{n\rightarrow \infty} \bigcup_{j=1}^n K_j$.
    
    Finally, we form the sigma-algebra $\Sigma_{\comparisonfacts}$ as the sigma-algebra generated by the collection of cylinder sets $\mathcal{K}_{\comparisonfacts}$:
    \begin{align}
        \Sigma_{\comparisonfacts} = \sigma(\mathcal{K}_{\comparisonfacts}).
    \end{align}
    where $\sigma(\mathcal{K}_{\comparisonfacts})$ denotes the intersection of all sigma-algebras containing $\mathcal{K}_{\comparisonfacts}$.
    Given that $\mathcal{K}_{\comparisonfacts} \subseteq \sigmaalgebra_\distdb$ we also have that $\sigmaalgebra_\comparisonfacts \subseteq \sigmaalgebra_\distdb$.
\end{proof}

}

\subsection{Proof of Proposition~\ref{prop:pf}}
\label{app:proof:pf}

\new{

\proppf*

\begin{proof}
    To show existence of the  measure $\measurecomparisonfacts$, we need to show that 
    \begin{enumerate}
        \item non-negativity: $\probabilitymeasure_\comparisonfacts(A)\geq0 , \quad \forall A \in \sigmaalgebra_\comparisonfacts$
        \item normality: $\probabilitymeasure_\comparisonfacts(\samplespace_\comparisonfacts)=1$
        \item countably additivity: for any collection $\{A_i\}_{i=1}^\infty$ of disjoint sets in $\sigmaalgebra_\comparisonfacts$ we have
        \begin{align}
            \probabilitymeasure_\comparisonfacts
            \left(
                \bigcup_{i=1}^\infty A_i
            \right)
            =
            \sum_{i=1}^\infty \probabilitymeasure_\comparisonfacts(A_i)
        \end{align} 
    \end{enumerate}
    Using the fact that $\sigmaalgebra_\comparisonfacts\subseteq \sigmaalgebra_\distdb$ it is straightforward to show these three properties hold. Uniqueness of $\probabilitymeasure_\comparisonfacts$ is also inherited from the uniqueness of $\probabilitymeasure_\distdb$.
\end{proof}

}

\subsection{Proof of Proposition~\ref{prop:pp}}
\label{app:proof:pp}

\proppp*

\new{
\begin{proof}

To show this, we follow Sato's construction to obtain the probability measure $\probabilitymeasure_\dfprogram$ over Herbrand interpretations from $\measurecomparisonfacts$. 
To this end we denote the set of atoms in the Herbrand base by $\mu_1, \mu_2, \dots$, which also includes those in $\comparisonfacts$.
As \dfprogram is valid, for every consistent comparison database $\comparisonfacts_{\samplefunction(\randomvariableset)}$ (\cf Definition~\ref{def:consistent-fact-db}), the logic program
$\comparisonfacts_{\samplefunction(\randomvariableset)}\cup \logicprogram$
has a total well-founded model $M_{\samplefunction(\randomvariableset)}$, and we can define   
\begin{align}
\probabilitymeasure_\dfprogram(\mu_1=\boolval_1,\mu_2=\boolval_2, \ldots)
:=
\measurecomparisonfacts
    \left(
    \left\{
    \samplefunction(\randomvariableset)
        ~|~
        M_{\samplefunction(\randomvariableset)}
    \right\}
    \right)
\end{align}
What remains, is to show that the set
$
    \left\{
    \samplefunction(\randomvariableset)
        ~|~
        M_{\samplefunction(\randomvariableset)}
    \right\}
$
is an element of the sigma-algebra $\sigmaalgebra_\comparisonfacts$. To this end, we rewrite the set as:

\begin{align}
    \Bigl\{
    \samplefunction(\randomvariableset)
        ~|~
        \mu_1(\samplefunction(\randomvariableset))=\boolval_1\land \mu_2(\samplefunction(\randomvariableset))=\boolval_2\land \ldots
    \Bigr\}
    \label{eq:proof:prop:semantics}
\end{align}
where we have that, 
\begin{align}
    M_{\samplefunction(\randomvariableset)} \models
    \mu_1(\samplefunction(\randomvariableset))=\boolval_1\land \mu_2(\samplefunction(\randomvariableset))=\boolval_2\land \ldots
\end{align}
We rewrite the set in Equation~\ref{eq:proof:prop:semantics} as

\begin{align}
    \bigcap_{j=1}^n
    \left\{
    \samplefunction(\randomvariableset)
        ~|~
        \mu_j(\samplefunction(\randomvariableset))={\boolval_j}       
    \right\}.
\end{align}
We now retain only those $ \mu_j$'s that depend on (a subset of) $\randomvariableset$, which we denote by $ \kappa_j$:
\begin{align}
    \bigcap_{j=1}
    \left\{
        \samplefunction(\randomvariableset)
            ~|~
            \kappa_j(\samplefunction(\randomvariableset))={\boolval_j}      
        \right\}.
\end{align}
The last line is an intersection of elements from $\comparisonfacts$ (or their complements) and thereby trivially part of $\sigmaalgebra_\comparisonfacts$, which concludes the proof.
\end{proof}

}

\section{Detailed Discussion on \dcproblogsty}
\label{sec:detaileddcproblog}

\subsection{Syntactic Sugar Semantics}
\label{sec:semantics_syntactic_sugar}

We now formalize the declarative semantics of \dcproblogsty, \ie \dfplpsty extended with probabilistic facts, annotated disjunctions and distributional clauses,
The idea is to define program transformations that eliminate these three modelling constructs from a \dcproblogsty program, resulting in a \dfplpsty program for which we have defined the semantics in Section~\ref{sec:semantics}.

Throughout this section, we will treat distributional facts as distributional clauses with empty bodies, and we will only consider ground programs for ease of notation. As usual, a non-ground program is shorthand for its Herbrand grounding.

\begin{definition}[Statement]
    A \emph{\dcproblogsty statement} is either a probabilistic fact, an annotated disjunction, a distributional clause, or a normal clause.
\end{definition}

\begin{definition}[\dcproblogsty program] \label{def:fullprog}
    A \dcproblogsty program $\program$ is a countable set of ground \dcproblogsty statements.
\end{definition}

\subsubsection{Eliminating Probabilistic Facts and Annotated Disjunctions}

\begin{example}\label{ex:running-sugar-full}
	We use the following \dcproblogsty program as running example.
	\begin{problog*}{linenos}
p ~ beta(1,1).
p::a.
b ~ normal(3,1) :- a.
b ~ normal(10,1) :- not a.
c ~ normal(b,5).
0.2::d; 0.5::e; 0.3::f :- not b<5, b < 10.
g :- a, not f, b+c<15.
	\end{problog*}
\end{example}

\begin{definition}[Eliminaton Rules for Probabilistic Facts and ADs]\label{def:elim-ad}
	Let $\program$ be  a \dcproblogsty program. We define the following elimination rules (ER) to eliminate probabilistic facts and annotated disjunctions.
	\begin{itemize}
		\item[ER1:] replace each probabilistic fact $p\prob \mu$ in $\program$ by
		      \begin{align*}
			       & \nu \sim flip(p).  \\
			       & \mu \lpif \nu=:=1.
		      \end{align*}
		      with a fresh random variable $\nu$ for each probabilstic fact.
		\item[ER2:] replace each AD $p_1\prob\mu_1;\dots;p_n \prob \mu_n \lpif \beta$ in $\program$ by
		      \begin{align*}
			       & \nu \sim finite([p_1 : 1, \dots, p_n : n]) \\
			       & \mu_1 \lpif \nu=:=1, \beta.                \\
			       & \dots                                      \\
			       & \mu_n \lpif \nu=:=n, \beta.
		      \end{align*}
		      with a fresh random variable $\nu$ for each AD.
	\end{itemize}
\end{definition}
Note that if the probability label(s) of a fact or AD include random terms, as in the case of \probloginline{p::a} in the Example~\ref{ex:running-sugar-full},  then these are  parents of the newly introduced random variable. However, the new random variable will not be a parent of other random variables, as they are only used locally within the new fragments. They thus introduce neither cycles nor infinite ancestor sets into the program.

\begin{definition}[AD-Free Program] \label{def:ad_free_program}
	An \emph{AD-free} \dcproblogsty program \adfreeprogram is a \dcproblogsty program that contains neither probabilistic facts nor annotated disjunctions. We denote by $\headsdc_\adfreeprogram$ the set of atoms $\tau \sim \delta$ that appear as head of a distributional clause in $\adfreeprogram$, and by $\randomtermset_\adfreeprogram$ the set of random terms in $\headsdc_\adfreeprogram$.
\end{definition}

\begin{example}\label{ex:running-sugar-adfree}
	Applying Definition~\ref{def:elim-ad} to Example~\ref{ex:running-sugar-full} results in
	\begin{problog*}{linenos}
p ~ beta(1,1).
x ~ flip(p).
a :- x =:= 1.
b ~ normal(3,1) :- a.
b ~ normal(10,1) :- not a.
c ~ normal(b,5).
y ~ finite([0.2:1,0.5:2,0.3:3]).
d :- y =:= 1, not b<5, b < 10.
e :- y =:= 2, not b<5, b < 10.
f :- y =:= 3, not b<5, b < 10.
g :- a, not f, b+c<15.
	\end{problog*}
	We have $\headsdc_\adfreeprogram=\{ \, \text{\probloginline{p~beta(1,1)}},\,\allowbreak \text{\probloginline{x~flip(p)}},\,\allowbreak  \text{\probloginline{b~normal(3,1)}},\,\allowbreak \text{\probloginline{b~normal(10,1)}},\allowbreak \text{\probloginline{c~normal(b,5)}},\,\allowbreak \text{\probloginline{y~finite[0.2:1,0.5:2.0.3:3])}} \, \}$. Furthermore, we also have $\randomtermset_\adfreeprogram = \{ \mathprobloginline{p}, \mathprobloginline{x}, \mathprobloginline{b}, \mathprobloginline{c}, \mathprobloginline{y} \}$.
\end{example}

\subsubsection{Eliminating Distributional Clauses}
\label{sec:eliminate_dc}

While eliminating probabilistic facts and annotated disjunctions is a rather straightforward local transformation, eliminating distributional clauses is more involved. The reason is that a distributional clause has a global effect in the program, as it defines a condition under which a random term has to be \emph{interpreted} as a specific random variable when mentioned in a distributional clause or comparison atom. Therefore, eliminating a distributional clause involves both introducing the relevant random variable explicitly to the program and pushing the condition from the body of the distributional clause to all the places in the logic program that interpret the original random term.

Before delving into the mapping from an AD-free \dcproblogsty to a \dfplpsty program, we introduce some relevant terminology.

\begin{definition}[Parents and Ancestors for Random Terms]
	\label{def:parentancestor2}
	Given an AD-free program \adfreeprogram with $\tau_p$ and $\tau_c$ in $\randomtermset_\adfreeprogram$. We call $\tau_p$   a \emph{parent} of  $\tau_c$ if and only if  $\tau_p$ appears in the distribution term $\delta_c$ associated with $\tau_c$  in $\headsdc_\adfreeprogram$ ($\tau_c\sim \delta_c \in \headsdc_\adfreeprogram$).
	We define \emph{ancestor} to be the transitive closure of \emph{parent}.
\end{definition}

\begin{figure}[h]
	\centering
	\tikzstyle{node}=[circle, text centered, draw,thick]
	\begin{tikzpicture}[remember picture]

		\node[node] (p) {\probloginline{p}};
		\node[node] (x) [below=of p] {\probloginline{x}};

		\node[node] (b) [right=of p] {\probloginline{b}};
		\node[node] (c) [below=of b] {\probloginline{c}};

		\node[node] (y) [right=of b] {\probloginline{y}};

		\draw[->,thick] (p) to  (x);
		\draw[->,thick] (b) to  (c);

	\end{tikzpicture}
	\caption{Directed acyclic graph representing the ancestor relationship between the random variables in Example~\ref{def:ad_free_program}.
		The random terms \probloginline{p}, \probloginline{b} and \probloginline{y} have the empty set as their ancestor set. The ancestor set of \probloginline{x} is $\{ \mathprobloginline{p} \}$ and \probloginline{c} is $\{ \mathprobloginline{b} \}$.}
\end{figure}

For random terms, we distinguish  \emph{interpreted occurrences} of the term that need to be resolved to the correct random variable from other occurrences where the  random term is treated as any other term in a logic program, \eg, as an argument of a logical atom.
\begin{definition}[Interpreted Occurrence]
	\label{def:interpreted_occ}
	An \emph{interpreted occurrence} of a random term $\tau$ in an AD-free program \adfreeprogram is one of the following:
	\begin{itemize}
		\item the use of $\tau$ as parameter of a distribution term in the head of a distributional clause in \adfreeprogram
		\item the use of $\tau$  in a comparison literal in the body of a (distributional or normal) clause in \adfreeprogram
	\end{itemize}
	We say that a clause \emph{interprets} $\tau$ if there is at least one interpreted occurrence of $\tau$ in the clause.
\end{definition}

\begin{definition}[Well-Defined AD-free Program]\label{def:dc-df-well-def}
	Given an AD-free program \adfreeprogram with $\dclauses_\adfreeprogram$ the set of distributional clauses in \adfreeprogram, we call $\dclauses_\adfreeprogram$ \emph{well-defined} if the following conditions hold:
	\begin{description}
		\item[DC1] For each random term $\tau \in \randomtermset_\adfreeprogram$,  the number of  distributional clauses $\tau \sim \delta \lpif \beta$ in $\adfreeprogram$ is finite, and these clauses all have mutually exclusive bodies. This means that only a single rule can be true at once.
		\item[DC2] All distribution terms in $\dclauses_\adfreeprogram$ are well-defined for all possible values of the random terms they interpret.
		\item[DC3] Each random term has a finite set of ancestors.
		\item[DC4] The ancestor relation is acyclic.
	\end{description}
\end{definition}


We now discuss how to reduce a (valid) \dcproblogsty program to a \dfplpsty program. This happens in two steps. First, we eliminate distributional clauses and introduce appropriate distributional facts instead (see Definition~\ref{def:elim-dc}). Second, we {\em contextualize} interpreted occurrences of random terms in clause bodies (see Definition~\ref{def:core_encoding}).

The first step introduces a new built-in predicate \predicate{rv}{2} that associates random terms in a well-defined AD-free program with explicit random variables in the \dfplpsty program it is transformed into. This predicate is used in the bodies of clauses that interpret random terms (\cf Definition~\ref{def:interpreted_occ}) to appropriately contextualize those.

The idea behind the built-in \predicate{rv}{2} predicate is to restrict the applicability of a clause to contexts where all the random terms can be interpreted, \ie to contexts where the random terms are random variables. This implies that in contexts where such a random term cannot be interpreted, the \emph{entire} body evaluates to false.

\begin{definition}[Eliminating Distributional Clauses]\label{def:elim-dc}
	Let $\dclauses_{\adfreeprogram}$ be a well-defined set of distributional clauses.
	We denote by $\delta_{\rho_1, \dots, \rho_k}$ a distribution term that involves exactly $k$ different random terms $\rho_1,\ldots,\rho_k$.
	For each ground random term $\tau\in \randomtermset_{\adfreeprogram}$ we simultaneously define the following sets:
	\begin{itemize}
		\item the set of distributional facts for $\tau$
		      \begin{align*}
			      \distdb(\tau) =
			      \{
			      \tau^\beta_{\nu_1,\ldots,\nu_k}
			       & \sim \delta^\beta_{\nu_1,\ldots,\nu_k}
			      \\
			       & \mid
			      (\tau \sim \delta_{\rho_1,\ldots,\rho_k} \lpif \beta \in \dclauses_\adfreeprogram
			      ,
			      v_1\in \randomvariableset(\rho_1), \ldots, v_k\in \randomvariableset(\rho_k)
			      \}
		      \end{align*}
		\item    the set of (fresh) random variables for $\tau$
		      \begin{equation*}
			      \randomvariableset(\tau) = \{\nu \mid \nu\sim \delta \in \distdb(\tau)\}
		      \end{equation*}
		\item the set of context clauses for $\tau$
		      \begin{align*}
			       & \logicprogram^{c}(\tau)
			      =
			      \\
			       &
			      \quad \biggl\{\text{\probloginline{rv}}(\tau,\tau^\beta_{\nu_1,\ldots,\nu_k})
			      \lpif
			      \text{\probloginline{rv}}(\rho_1,\nu_1),\ldots,\text{\probloginline{rv}}(\rho_k,\nu_k), \beta
			      \\
			       &
			      \quad \bigm|
			      \tau \sim \delta_{\rho_1,\ldots,\rho_k} \lpif \beta \in \dclauses_{\adfreeprogram}, \nu_1\in \randomvariableset(\rho_1), \ldots, \nu_k\in \randomvariableset(\rho_k)
			      \biggr\}
		      \end{align*}
	\end{itemize}
\end{definition}

At first glance, Definition~\ref{def:elim-dc} seems to contain a mutual recursion involving $\distdb(\cdot)$ and $\randomvariableset(\cdot)$.
However, if we recall that for a well-defined set of distributional clauses $\dclauses_\adfreeprogram$ the ancestor relationship between random terms constitutes an acyclic directed graph, the apparent mutual recursion evaporates. We can now define the distributional facts encoding of the distributional clauses, which will give rise to a \dfplpsty program instead of \dcproblogsty program.

\begin{definition}[Distributional Facts Encoding]
	\label{def:core_encoding}
	Let $\adfreeprogram$ be an AD-free \dcproblogsty program and $\dclauses_{\adfreeprogram}$ its set of distributional clauses. We define the distributional facts encoding of $\dclauses_{\adfreeprogram}$  as
	$\dclauses_{\adfreeprogram}^{DF}\coloneqq \distdb\cup \logicprogram^{c}$, with
	\begin{align*}
		\distdb = \bigcup_{\tau\in \randomtermset_{\adfreeprogram}}\distdb(\tau)
		 &  &
		\logicprogram^{c} = \bigcup_{\tau\in \randomtermset_{\adfreeprogram}} \logicprogram^{c}(\tau)
	\end{align*}
	using $\distdb(\cdot)$ and $\logicprogram^{c}(\cdot)$ from Definition~\ref{def:elim-dc}.
\end{definition}

\begin{example}[Eliminating Distributional Clauses]
	\label{ex:running-sugar-core-encoding}
	We demonstrate the elimination of distributional clauses using the DCs in Example~\ref{ex:running-sugar-adfree}, \ie
	\begin{problog*}{linenos}
p ~ beta(1,1).
x ~ flip(p).
b ~ normal(3,1) :- a.
b ~ normal(10,1) :- not a.
c ~ normal(b,5).
y ~ finite([0.2:1,0.5:2,0.3:3]).
	\end{problog*}
	Here,  the distribution terms in Line 2 and Line 5 (\probloginline{flip(p)} and \probloginline{normal(b,5)}) contain one parent random term each (\probloginline{p} and \probloginline{b}, respectively), whereas all others have no parents. As \probloginline{b} is defined by two clauses, we get fresh random variables for each of them, which in turn  introduces different fresh random variables for the child \probloginline{c}.
	This gives us:
	\begin{problog*}{linenos}
v1 ~ beta(1,1).
rv(p,v1).
v2 ~ flip(v1).
rv(x,v2) :- rv(p,v1).
v3 ~ normal(3,1).
rv(b,v3) :- a.
v4 ~ normal(10,1).
rv(b,v4) :- not a.
v5 ~ normal(v3,5).
rv(c,v5) :- rv(b,v3).
v6 ~ normal(v4,5).
rv(c,v6) :- rv(b,v4).
v7 ~ finite([0.2:1,0.5:2,0.3:3]).
rv(y,v7).
	\end{problog*}
\end{example}

Eliminating distributional clauses (following Definition~\ref{def:elim-dc}) introduces the distributional facts and context rules necessary to encode the original distributional clauses. To complete the transformation to a \dfplpsty program, we further transform the logical rules. Prior to that, however, we need to define the {\em contextualization function}.

\begin{definition}[Contextualization Function] \label{def:context_function}
	Let $\beta$ be a conjunction of atoms and let its comparison literals interpret the random terms $\tau_1,\ldots,\tau_n$.
	Furthermore, let $\Lambda_i$ be a special logical variable associated to a random term $\tau_i \in \randomtermset_{\adfreeprogram}$ for each $\tau_i$.
	We define $\contextfunc(\beta)$ to be the  conjunction of literals obtained by replacing the interpreted occurrences of the $\tau_i$ in $\beta$ by their corresponding $\Lambda_i$
	and conjoining to this modified conjunction $\text{\probloginline{rv}}(\tau_i,\Lambda_i)$ for each $\tau_i$.
	We call $\contextfunc(\cdot)$ the contextualization function.
\end{definition}

\begin{definition}[Contextualized Rules]\label{def:adfree-to-core}
	Let \adfreeprogram be an AD-free program with logical rules $\logicprogram^\adfreeprogram$ and distributional clauses $\dclauses_\adfreeprogram$,
	and let $\dclauses_\adfreeprogram^{DF}=\distdb\cup \logicprogram^{c}$ be the distributional facts encoding of $\dclauses_\adfreeprogram$. We define the contextualization of the bodies of the rules $\logicprogram^\adfreeprogram \cup \logicprogram^{DF}$ as the sequential application of the following contextualization rules:
	\begin{enumerate}[label=\alph*.]
		\item[CR1:] apply the contextualization function $\contextfunc$ to all bodies in $\logicprogram^\adfreeprogram\cup \logicprogram^{c}$ and obtain:
		      \begin{align*}
			      \logicprogram^\Lambda = \{\eta \lpif \contextfunc(\beta) \mid \eta \lpif \beta \in \logicprogram^\adfreeprogram \cup \logicprogram^{c} \}
		      \end{align*}
		\item[CR2:] obtain the set of ground logical rules $\logicprogram$ by grounding each logical variable $\Lambda_i$ in $\logicprogram^\Lambda$  with random variables $\nu_i\in \randomvariableset(\tau_i)$ in all possible ways.
	\end{enumerate}
	We call $\logicprogram$ the contextualized logic program of $\adfreeprogram$, \fixed{Furthermore, we define $\dfadfreeprogram\coloneqq\distdb \cup \logicprogram $ to be the corresponding \dcplpsty program.}
\end{definition}

The contextualization function $\contextfunc(\cdot)$ creates non-ground comparison atoms, \eg $\Lambda_i>5$. Contrary to (ground) random terms, non-ground logical variables in such a comparison atom are not interpreted occurrences (\cf~Definition~\ref{def:interpreted_occ}) and the comparison itself only has a logical meaning. By grounding out the freshly introduced logical variables we obtain a purely logical program where the comparison atoms contain either arithmetic expressions or random variables (instead of random terms).

\begin{example}[Contextualizing Random Terms]
	\label{example:contextualization}
	Let us now study the effect of the second transformation step.
	Consider again the AD-free program in Example~\ref{ex:running-sugar-adfree} and the set of rules and distributional clauses obtained in Example~\ref{ex:running-sugar-core-encoding}.
	The contextualization rule CR1 (\cf Definition~\ref{def:adfree-to-core})  rewrites the logical rules in the AD-free input program to
	\begin{problog*}{linenos}
a :- rv(x,Lx), Lx =:= 1.
d :- rv(y,Ly), rv(b,Lb), Ly =:= 1, not Lb<5, Lb < 10.
e :- rv(y,Ly), rv(b,Lb), Ly =:= 2, not Lb<5, Lb < 10.
f :- rv(y,Ly), rv(b,Lb), Ly =:= 3, not Lb<5, Lb < 10.
g :- rv(b,Lb), rv(c,Lc), a, not f, Lb+Lc < 15.
	\end{problog*}
	These rules then get instantiated (rule CR2) to
	\begin{problog*}{linenos}
a :- rv(x, v2), v2 =:= 1.
d :- rv(y,v7), rv(b,v3), v7 =:= 1, not v3<5, v3 < 10.
e :- rv(y,v7), rv(b,v3), v7 =:= 2, not v3<5, v3 < 10.
f :- rv(y,v7), rv(b,v3), v7 =:= 3, not v3<5, v3 < 10.
d :- rv(y,v7), rv(b,v4), v7 =:= 1, not v4<5, v4 < 10.
e :- rv(y,v7), rv(b,v4), v7 =:= 2, not v4<5, v4 < 10.
f :- rv(y,v7), rv(b,v4), v7 =:= 3, not v4<5, v4 < 10.
g :- rv(b,v3), rv(c,v5), a, not f, v3+v5<15.
g :- rv(b,v3), rv(c,v6), a, not f, v3+v6<15.
g :- rv(b,v4), rv(c,v5), a, not f, v4+v5<15.
g :- rv(b,v4), rv(c,v6), a, not f, v4+v6<15.
	\end{problog*}
	Together with the distributional facts and rules obtained in Example~\ref{ex:running-sugar-core-encoding}, this last block of rules forms the \dcplpsty program that specifies the semantics of the AD-free \dcproblogsty program, and thus the semantics of the \dcproblogsty program in Example~\ref{ex:running-sugar-full}.
\end{example}

We note that the mapping from an AD-free program to a set of distributional facts and contextualized rules as defined here is purely syntactical, and written to avoid case distinctions.
Therefore, it usually produces overly verbose programs.
For instance, for random terms introduced by a distributional fact, the indirection via \probloginline{rv} is only needed if there is a parent term in the distribution that has context-specific interpretations.
The grounding step may introduce rule instances whose conjunction of \probloginline{rv}-atoms is inconsistent.
This is for example the case for the last three rules for \probloginline{g} in the Example~\ref{example:contextualization}, which we illustrate in the example below.

\begin{example}\label{ex:running-sugar-simplified}
	The following is a (manually) simplified version of the \dfplpsty program for the running example, where we propagated definitions of  \probloginline{rv}-atoms:
	\begin{problog*}{linenos}
v1 ~ beta(1,1).
v2 ~ flip(v1).
v3 ~ normal(3,1).
v4 ~ normal(10,1).
v5 ~ normal(v3,5).
v6 ~ normal(v4,5).
v7 ~ finite([0.2:1,0.5:2,0.3:3]).

a :- v2 =:= 1.
d :- a,     v7 =:= 1, not v3<5, v3 < 10.
e :- a,     v7 =:= 2, not v3<5, v3 < 10.
f :- a,     v7 =:= 3, not v3<5, v3 < 10.
d :- not a, v7 =:= 1, not v4<5, v4 < 10.
e :- not a, v7 =:= 2, not v4<5, v4 < 10.
f :- not a, v7 =:= 3, not v4<5, v4 < 10.
g :- a,     a,     a, not f, v3+v5<15.
g :- a,     not a, a, not f, v3+v6<15.  
g :- not a, a,     a, not f, v4+v5<15.  
g :- not a, not a, a, not f, v4+v6<15.  
	\end{problog*}
	In the bodies of the last three rules we have, inter alia, conjunctions of \probloginline{a} and \probloginline{not a}. This can never be satisfied and renders the bodies of these rules inconsistent.
\end{example}

\begin{definition}[Semantics of AD-free \dcproblogsty Programs]
	\label{def:semantics_adfree}
	The semantics of an AD-free \dcproblogsty program \adfreeprogram is the semantics of the \dfplpsty program $ \dfadfreeprogram = \distdb \cup \logicprogram$. We call $\adfreeprogram$ valid if and only if $\dfadfreeprogram$ is valid.
\end{definition}

\begin{definition}[Semantics of $\dcproblogsty$ Programs]
	The semantics of a $\dcproblogsty$ program $\program$ is the semantics of the AD-free \dcproblogsty program $\adfreeprogram$. We call $\program$ valid if and only if $\adfreeprogram$ is valid.
\end{definition}

Programs with distributional clauses can make programs with combinatorial structures more readable by grouping random variables with the same role  under the same random term. However, the programmer needs to be aware of the fact that distributional clauses have non-local effects on the program, as they affect the interpretation of their random terms also outside the distributional clause itself. This can be rather subtle, especially  if the bodies of the distributional clauses with the same random term are not exhaustive.
We discuss this issue in more detail in Appendix~\ref{sec:non-mixture-dc}.

\subsection{Syntactic Sugar: Validity}
\label{sec:dcvalidity}

As stated above, a DC-ProbLog program $\program$  is syntactic sugar for an AD-free program $\adfreeprogram$ (Definition~\ref{def:elim-ad}),
and is valid if $\dfadfreeprogram$ as specified in Definition~\ref{def:semantics_adfree} is a valid \dfplpsty program, \ie the distributional database is well-defined,
the comparison literals are measurable,
and each consistent fact database results in a two-valued  well-founded  model if added to the logic program (Definition~\ref{def:core-valid}).
For the distributional database to be well-defined (Definition~\ref{def:well-defd-facts}), it suffices to have $\dclauses_\adfreeprogram$ well-defined  (Definition~\ref{def:dc-df-well-def}), as can be verified by comparing the relevant definitions. Indeed, a well-defined $\dclauses_\adfreeprogram$ is a precondition for the transformation as stated in the definition.

The transformation changes neither distribution terms nor comparison literals, and thus maintains the measurability of the latter.
As far as the logic program structure is concerned, the transformation to a \dfplpsty adds rules for \probloginline{rv} based on the bodies of all distributional clauses, and uses positive \probloginline{rv} atoms in the bodies of all clauses  that interpret random terms to ensure that all interpretations of random variables are anchored in the appropriate parts of the distributional database. This level of indirection does not affect the logical reasoning for programs that only interpret random terms in appropriate contexts. It is the responsibility of the programmer to ensure that this is the case and indeed results in appropriately defined models.

\subsection{Syntactic Sugar: Additional Constructs}
\label{sec:Additionalsyntacticsugar}

\subsubsection{User-Defined Sample Spaces}
The semantics of \dcproblogsty as presented in the previous sections only alllows for random variables with numerical sample spaces, \eg normal distributions, or Poisson distributions. For categorical random variables, however, one might like to give a specific meaning to the elements in the sample space instead of a numerical value.
\begin{example}
	Consider the following program:
	\begin{problog*}{linenos}
color ~ uniform([r,g,b]).
red:- color=:=r.
	\end{problog*}
	Here we discribe a categorical random variable (uniformaly distributed) whose sample space is the set of expressions $\{\text{\probloginline{r}},\text{\probloginline{b}},\text{\probloginline{g}}\}$. By simply associating a natural number to each element of the sample space we can map the program back to a program whose semantics we already defined:
	\begin{problog*}{linenos}
color ~ uniform([1,2,3]).
r:- color=:=1,
red:- r.
	\end{problog*}
\end{example}

Swapping out the sample space of discrete random variables with natural numbers is always possible as the cardinality of such a sample space is either smaller (finite categorical) or equal (infinite) to the cardinality of the natural numbers.

\subsubsection{Multivariate Distributions}

Until now we have restricted the syntax and semantics of \dcproblogsty to univariate distributions, \eg the univariate normal distribution. At first this might seem to severely limit the expressivity of \dcproblogsty, as probabilistic modelling with multivariate random variables is a common task in modern statistics and probabilistic programming. However, this concern is voided by realizing that multivariate random variables can be decomposed into {\em combinations} of independent univariate random variables. We will illustrate this on the case of the bivariate normal distribution.

\begin{example}[Constructing the Bivariate Normal Distribution]
	Assume we would like to construct a random variable distributed according to a bivariate normal distribution:
	\begin{align*}
		\begin{pmatrix} \nu_1 \\ \nu_2 \end{pmatrix}
		\sim
		\mathcal{N}\left(
		\begin{pmatrix}
				\mu_1 \\  \mu_2
			\end{pmatrix}
		,
		\begin{pmatrix}
				\sigma_{11} & \sigma_{12} \\
				\sigma_{21} & \sigma_{22} \\
			\end{pmatrix}
		\right)
	\end{align*}
	The equation above can be rewritten as:
	\begin{align*}
		\begin{pmatrix} \nu_1 \\ \nu_2 \end{pmatrix}
		\sim
		\begin{pmatrix}
			\mu_1 \\  \mu_2
		\end{pmatrix}
		+
		\begin{pmatrix}
			\eta_{11} & \eta_{12} \\
			\eta_{21} & \eta_{22} \\
		\end{pmatrix}
		\begin{pmatrix}
			\mathcal{N}(0, \lambda_1) \\
			\mathcal{N}(0, \lambda_2)
		\end{pmatrix}
	\end{align*}
	where it holds that
	\begin{align*}
		\begin{pmatrix}
			\sigma_{11} & \sigma_{12} \\
			\sigma_{21} & \sigma_{22} \\
		\end{pmatrix}
		=
		\begin{pmatrix}
			\eta_{11} & \eta_{12} \\
			\eta_{21} & \eta_{22} \\
		\end{pmatrix}
		\begin{pmatrix}
			\lambda_1 & 0         \\
			0         & \lambda_2
		\end{pmatrix}
		\begin{pmatrix}
			\eta_{11} & \eta_{21} \\
			\eta_{12} & \eta_{22} \\
		\end{pmatrix}
	\end{align*}
	It can now be shown that the bivariate distributions can be expressed as:
	\begin{align*}
		\begin{pmatrix} \nu_1 \\ \nu_2 \end{pmatrix}
		\sim
		\begin{pmatrix}
			\mathcal{N}(\mu_{\nu_1}, \sigma_{\nu_1}) \\
			\mathcal{N}(\mu_{\nu_2}, \sigma_{\nu_2}) \\
		\end{pmatrix}
	\end{align*}
	where $\mu_{\nu_1}$, $\mu_{\nu_2}$, $\sigma_{\nu_1}$ and $\sigma_{\nu_2}$ can be expressed as:
	\begin{align*}
		\mu_{\nu_1} & = \mu_1 \qquad
		            & \sigma_{\nu_1} = \sqrt{ \eta_{11}\lambda_1^2 + \eta_{12}\lambda_2^2 } \\
		\mu_{\nu_2} & = \mu_2 \qquad
		            & \sigma_{\nu_2} = \sqrt{ \eta_{21}\lambda_1^2 + \eta_{22}\lambda_2^2 }
	\end{align*}
	We conclude from this that a bivariate normal distribution can be modeled using two univariate normal distributions that have a shared set of parameters and is thereby semantically defined in \dcproblogsty.

\end{example}

Expressing multivariate random variables in a user-friendly fashion in a probabilistic programming language is simply a matter of adding syntactic sugar for combinations of univariate random variables once the semantics are defined for the latter.

\begin{example}[Bivariate Normal Distribution]
	Possible syntactic sugar to declare a bivariate normal distribution in \dcproblogsty, where the mean of the distribution in the two dimensions is $0.5$ and $2$, and the covariance matrix is
	$
		\begin{bmatrix}
			2   & 0.5 \\
			0.5 & 1
		\end{bmatrix}
	$.
	\begin{problog*}{linenos}
(x1,x2) ~ normal2D([0.5,2], [[2, 0.5],[0.5,1]])
q:- x1<0.4, x2>1.9.
	\end{problog*}
\end{example}

On the inference side, the special syntax might then additionally be used to deploy dedicated inference algorithms. This is usually done in probabilistic programming languages that cater towards inference with multivariate (and often continuous) random variables~\citep{carpenter2017stan,bingham2019pyro}. Note that probability distributions are usually constructed by applying transformations to a set of independent uniform distribution. From this viewpoint the builtin-in \probloginline{normal/2}, denoting the univariate normal distribution, is syntactic sugar for such a transformation as well.

\subsection{Beyond Mixtures}\label{sec:non-mixture-dc}
\label{sec:beyondmixtures}

By definition, we impose on distributional clauses mutual exclusivity of their bodies when they share a random term (\cf Definition~\ref{def:dc-df-well-def}). That is, if we have a set of distributional clauses: $\{ \tau \sim \delta_i\lpif \beta_1 \dots, \tau \sim \delta_n\lpif \beta_n  \}$ we impose that the conjunction of two distinct bodies $\beta_i$ and $\beta_j$ ($i \neq j$) is false.

A further condition that we might impose, which is, however, not necessary to define a valid distributional clause, is exhaustiveness. Let us consider again the set of distributional clauses  $\{ \tau \sim \delta_i\lpif \beta_1 \dots, \tau \sim \delta_n\lpif \beta_n  \}$. We call this set exhaustive if the disjunction of all the $\beta_i$'s is equivalent to true.

A set of exhaustive distributional clauses can be interpreted as a mixture models as they assign a unique distribution to the random term in any possible context. When, the bodies of such distributional clauses are not exhaustive, however, they may
interact with the logic program in rather subtle ways, especially if negation is involved. We demonstrate this in the examples below.

\begin{example}
	Consider the  following program fragments
	\begin{problog}
q :- not (x=:=1).
	\end{problog}
	and
	\begin{problog}
aux :- x=:=1.
q :- not aux.
	\end{problog}
	and now assume \probloginline{x} follows a mixture distribution, \eg,
	\begin{problog}
0.2::b.
x~flip(0.5) :- b.
x~flip(0.9) :- not b.
	\end{problog}
	With such a mixture model, as in the case of a distributional fact, "\probloginline{x} has an associated distribution" is always true, and both fragments agree on the truth value of \probloginline{q}.
\end{example}
In general, however, only the first of these two conditions is necessary, and it is thus possible to  associate a distribution with a random term in \emph{some} contexts only.

\begin{example}
	Consider again the two program fragments above with the following non-exhaustive definition of \probloginline{x}:
	\begin{problog}
0.2::b.
x~flip(0.5) :- b.
	\end{problog}
	With this definition, "\probloginline{x} has an associated distribution" is true if and only if \probloginline{b} is true, and the two fragments therefore no longer agree on the truth values of \probloginline{q}, as we more easily see after eliminating the distributional clause. We omit the transformation of the probabilistic fact for brevity. The fragment defining \probloginline{x} transforms to
	\begin{problog}
v1~flip(0.5).
rv(x,v1) :- b.
	\end{problog}
	The first program fragment maps to
	\begin{problog}
q :- rv(x,v1), not (v1=:=1).
	\end{problog}
	and the second one to
	\begin{problog}
aux :- rv(x,v1), v1=:=1.
q :- not aux.
	\end{problog}
	which clearly exposes the difference in how the negation is interpreted.
\end{example}
As this example illustrates, if random variables are defined through non-exhaustive sets of DCs, we can no longer refactor the logic program independently of the definition of the random variables in general, as it interacts with the context structure. The reason is that \dcproblogsty's declarative semantics builds upon the principle that the distributional database is declared \emph{independently} of the logic program, and can thus be combined modularly and declaratively with \emph{any} logic program over its comparison atoms. This is no longer the case with such arbitrary sets of DCs, which intertwine the definition of the two parts of a \dfplpsty program. We note that this differs from the procedural view on the existence of random variables taken in the \dcsty language~\citep{nitti2016probabilistic}, as we
discuss in more detail in Appendix~\ref{sec:dcproblog-dc}.

\section{Relation to the DC language}\label{sec:dcproblog-dc}

Distributional clauses were first introduced in the language of the same name  by \cite{gutmann2011magic}, which at that point did not support negation. For negation-free programs, our interpretation of distributional clauses exactly corresponds to theirs, and \dcproblogsty thus generalizes both ProbLog (with negation) and the original (definite) distributional clause language.


In the following, we first discuss how the semantics of \dcproblogsty differs from \cite{nitti2016probabilistic}'s procedural view on negated comparison atoms, and then how \dcproblogsty's acyclicity conditions imposed on valid programs differ from those of \cite{gutmann2011magic}.

\subsection{Non-exhaustive sets of DCs}\label{sec:proc-nitti}
\cite{nitti2016probabilistic}  have extended the procedural view of the stochastic $T_P$ operator to locally stratified programs with negation under the perfect models semantics.\footnote{Local stratification is a necessary condition for perfect models semantics, and a sufficient one for well-founded semantics. On this class of programs, both semantics agree~\citep{van1991well}} In their view, a distributional clause \probloginline{x~d :- body} is informally interpreted  as "if \probloginline{body} is true, define a random variable \probloginline{x} with distribution \probloginline{d}". They then use the principle that "any comparison involving a non-defined variable will fail; therefore, its negation will succeed", \ie, they apply negation as failure to comparison atoms. 
In contrast, as already illustrated in Section~\ref{sec:non-mixture-dc}, we take a purely declarative view here, where all random variables are defined up front, independently of logical reasoning, and distributional clauses serve as syntactic sugar to compactly talk about a group of random variables. Then, truth values of comparison atoms are fully determined by their external interpretation, and do not involve reasoning about whether a random variable is defined or not. That is, we apply classical negation to comparison atoms, and restrict negation as failure to atoms defined by the logic program itself.

The following example adapted from \cite{nitti2016probabilistic} illustrates the difference.
\begin{example}\label{ex:dc-vs-dcproblog}
	Consider the following program about the color of certain objects, where the number of objects is given by the random variable \probloginline{n}:
	\begin{problog}
n ~ uniform([1,2,3]).
color(1) ~ uniform([red,green,blue]) :- 1=<n .
color(2) ~ uniform([red,green,blue]) :- 2=<n .
color(3) ~ uniform([red,green,blue]) :- 3=<n .
not_red :- not color(2)=:=red .
not_red_either :- color(2)=\=red.
	\end{problog}
	The \dcproblogsty semantics is given by the transformed program:
	\begin{problog}
v0 ~ uniform([1,2,3]).
v1 ~ uniform([red,green,blue]).
v2 ~ uniform([red,green,blue]).
v3 ~ uniform([red,green,blue]).

rv(n,v0).
rv(color(1),v1) :- rv(n,v0), 1=<v0.
rv(color(2),v2) :- rv(n,v0), 2=<v0.
rv(color(3),v3) :- rv(n,v0), 3=<v0.

not_red :- rv(color(2),v2), not v2=:=red.
not_red_either :- rv(color(2),v2), v2=\=red.
	\end{problog}
	If $\mathprobloginline{n}=1$ (i.e., $\mathprobloginline{v0}=1$), neither \probloginline{color(2)} nor \probloginline{color(3)} are associated with a distribution.  Thus, \probloginline{rv(color(2),v2)} fails, and both \probloginline{not_red} and \probloginline{not_red_either} therefore fail as well, independently of the values of the comparison literals.
	In contrast, under 
	the procedural semantics of \cite{nitti2016probabilistic}, \probloginline{color(2)=:=red} fails in this case, and \probloginline{not_red} thus succeeds. Similarly, \probloginline{color(2)=\=red} fails, and  \probloginline{not_red_either} thus fails.  
	Both views agree for $n>1$.
\end{example}

This example again illustrates that DC-ProbLog's semantics clearly follows the spirit of the distribution   semantics of defining a distribution over interpretations of basic facts (comparison atoms in this case) independently of the logic program rules. 
We note that the expressive power of logic programs allows the programmer to explicitly model the procedural view of "failure through undefined variable" in the program if desired, as illustrated in the following example. 

\begin{example}
	The following DC-ProbLog program is equivalent to the procedural interpretation of the program in  Example~\ref{ex:dc-vs-dcproblog}:
	\begin{problog*}{linenos}
n ~ uniform([1,2,3]).
color(1) ~ uniform([red,green,blue]) :- 1=<n .
color(2) ~ uniform([red,green,blue]) :- 2=<n .
color(3) ~ uniform([red,green,blue]) :- 3=<n .
not_red :- not color(2)=:=red.     
not_red :- not 2=<n.               
not_red_either :- 2=<n, color(2)=\=red.
	\end{problog*}
	We explicitly model that \probloginline{not_red} is true if either \probloginline{color(2)} can be interpreted and \probloginline{color(2)=:=red} is false (line 5, which is how \dcproblogsty interprets the first clause), or \probloginline{color(2)} cannot be interpreted (line 6, negating the body of the DC in line 3). Similarly, \probloginline{not_red_either} is true if and only if \probloginline{color(2)} can be resolved and \probloginline{color(2)=\=red} is true (line 7, repeating the body of the DC in line 3).
\end{example}

\subsection{Program validity}
To define valid programs, \citet{gutmann2011magic} impose acyclicity criteria based on the structure of the clauses in the program, whereas we use the ancestor relation between random variables in \dcproblogsty. This means that \dcproblogsty accepts certain cycles in the logic program structure that are rejected by \dcsty, as illustrated in the following example. 

\begin{example}
	We model a scenario where a property of a node in a network is either initiated locally with probability $0.1$, or propagated from a neighboring node that has the property with probability $0.3$. We consider a two node network with directed edges from each of the nodes to the other one, and directly ground the program for this situation.
	\begin{problog}
local(n1) ~ flip(0.1).
local(n2) ~ flip(0.1).
transmit(n1,n2) ~ flip(0.3) :- active(n1).
transmit(n2,n1) ~ flip(0.3) :- active(n2).
active(n1) :- local(n1)=:=1.
active(n2) :- local(n2)=:=1.
active(n1) :- transmit(n2,n1)=:=1.
active(n2) :- transmit(n1,n2)=:=1.
	\end{problog}
	This program is not distribution-stratified based on \citep{gutmann2011magic}, where in order to avoid cyclic probabilistic dependencies 1) DC heads have to be of strictly higher rank than any of their body atoms, 2) heads of regular clauses have to have at least the same rank as each body atom, and 3) atoms involving random terms have to have at least the same rank as the head of the DC introducing the random term. This is impossible with our program due to the cyclic dependency between \probloginline{active}-atoms and \probloginline{transmit}-random terms. The \dcproblogsty semantics, in contrast,  is clearly specified through the mapping:
	\begin{problog}
x1 ~ flip(0.1).
x2 ~ flip(0.1).
x3 ~ flip(0.3).
x4 ~ flip(0.3).
rv(local(n1),x1).
rv(local(n2),x2).
rv(transmit(n1,n2),x3) :- active(n1).
rv(transmit(n2,n1),x4) :- active(n2).
active(n1) :- rv(local(n1),x1), x1=:=1.
active(n2) :- rv(local(n2),x2), x2=:=1.
active(n1) :- rv(transmit(n2,n1),x4), x4=:=1.
active(n2) :- rv(transmit(n1,n2),x3), x3=:=1.
	\end{problog}
	We have four independent random variables, and a definite clause program whose meaning is well-defined despite of the cyclic dependencies between derived atoms. We can equivalently rewrite the logic program part to avoid deterministic auxiliaries:
	\begin{problog}
x1 ~ flip(0.1).
x2 ~ flip(0.1).
x3 ~ flip(0.3).
x4 ~ flip(0.3).
active(n1) :- x1=:=1.
active(n2) :- x2=:=1.
active(n1) :- active(n2), x4=:=1.
active(n2) :- active(n1), x3=:=1.
	\end{problog}
	Furthermore, \dcproblogsty agrees with the ProbLog formulation of the original program, \ie,
	\begin{problog}
0.1::local_cause(n1).
0.1::local_cause(n2).
0.3::transmit_cause(n1,n2) :- active(n1).
0.3::transmit_cause(n2,n1) :- active(n2).
active(n1) :- local_cause(n1).
active(n2) :- local_cause(n2).
active(n1) :- transmit_cause(n2,n1).
active(n2) :- transmit_cause(n1,n2).
	\end{problog}
	The AD-free program is
	\begin{problog}
v1 ~ flip(0.1).
local_cause(n1) :- v1=:=1.
v2 ~ flip(0.1).
local_cause(n2) :- v2=:=1.
v3 ~ flip(0.3).
transmit_cause(n1,n2) :- v3=:=1, active(n1).
v4 ~ finite(0.3).
transmit_cause(n2,n1) :- v4=:=1, active(n2).
active(n1) :- local_cause(n1).
active(n2) :- local_cause(n2).
active(n1) :- transmit_cause(n2,n1).
active(n2) :- transmit_cause(n1,n2).
	\end{problog}
	As this already is a \dfplpsty program, we can skip the further rewrites. While the definite clauses are factored differently compared to the earlier variants, their meaning is the same. 
\end{example}

\section{Proofs of Theorems and Propositions in Section~\ref{sec:dc2smt} and Section~\ref{sec:alw}}
\label{app:transformation_proofs}

\subsection{Proof of Theorem~\ref{theo:rgp}} \label{app:proof:rgp}

\theorgp*

\begin{proof}
    The semantics of $\dcpprogram$ is given by the ground program that is obtained by first grounding $\dcpprogram$ with respect to its Herbrand base and reducing it to a \dfplpsty program as specified in Section~\ref{sec:dcproblog}. The resulting program consists of distributional facts and ground normal clauses only, and includes clauses defining \probloginline{rv}-atoms as well as calls to those atoms in clause bodies. However, as the definitions of these atoms are acyclic, and each ground instance is defined by a single rule (with the body of the DC that introduced the new random variable), we can eliminate all references to such atoms by recursively applying the well-known \emph{unfolding} transformation, which replaces atoms in clause bodies by their definition. The result is an equivalent ground program using only predicates from \dcpprogram, but where rule bodies have been expanded with the contexts of the random variables they interpret. We know from Theorem~1 in \citep{fierens2015inference} that for given query and evidence, it is sufficient to use the part of this logic program that is encountered during backward chaining from those atoms. We note that in our case, this also includes the distributional facts providing the distributions for relevant random variables, \ie, random variables in relevant comparison atoms as well as their ancestors.
\end{proof}

\subsection{Proof of Theorem~\ref{theo:label_equivalence}}
\label{app:proof:label_equivalence}

\Labelequivalence*

\begin{proof} The probability of a model $\varphi$ of the relevant ground program $\dcpprogram_g$ is, according to the distribution semantics (cf. Appendix~\ref{app:proof:pf}), given by:
    \begin{align}
        P_{\dcpprogram_g}(\varphi) = \int \mathbf{1}_{[\mu_1=\boolval_1\wedge\ldots\wedge \mu_n=\boolval_n]}(\samplefunction(\randomvariableset)) \differential{\probabilitymeasure_{\randomvariableset}}
    \end{align}
    where the $\mu_i$ are the comparison atoms that appear (positively or negatively) in $\dcpprogram_g$ and the $b_i$ the truth values these atoms take in $\varphi$.
    We can manipulate the probability into:
    \begin{align}
        P_{\dcpprogram_g}
         & =
        \int \left( \prod_{i =1}^n \mathbf{1}_{[\mu_i=b_i]}(\samplefunction(\randomvariableset)) \right) \differential{\probabilitymeasure_{\randomvariableset}}                                                                                                         \\
         & =
        \int \left( \prod_{i: b_i=\bot} \mathbf{1}_{[\mu_i=b_i]}(\samplefunction(\randomvariableset)) \right)  \left( \prod_{i: b_i=\top} \mathbf{1}_{[\mu_i=b_i]}(\samplefunction(\randomvariableset)) \right)  \differential{\probabilitymeasure_{\randomvariableset}} \\
         & =
        \int \left( \prod_{i: b_i=\bot} \ive{\neg c_i(vars(\mu_i))} \right)  \left( \prod_{i: b_i=\top}  \ive{ c_i(vars(\mu_i))}  \right)  \differential{\probabilitymeasure_{\randomvariableset}}
        \label{eq:program_prob_label_equi}
    \end{align}
    Turning our attention now to the expected value of $\alpha(\varphi)$ we have:
    \begin{align}
        \E_{\randomvariableset \sim  \dcpprogram_g} [\alpha( \varphi )]
        =
        \int \alpha \left( \bigwedge_{\ell_i \in \varphi} \ell_i \right) \differential{P_\randomvariableset}
        =
        \int  \left( \prod_{\ell_i \in \varphi} \alpha \left(  \ell_i \right) \right)  \differential{P_\randomvariableset}
    \end{align}
    The literals $\ell_i \in \varphi$ fall into four groups: atoms whose predicate is a comparison and that are true in $\varphi$ (denoted by $CA^+(\varphi)$), non-comparison atoms that are true in $\varphi$ (denoted $NA^+(\varphi)$), and similarly the atoms that are false in $\varphi$ (denoted by $CA^-(\varphi)$ and $NA^-(\varphi)$). This yields:
    \begin{align}
         & \E_{\randomvariableset \sim  \dcpprogram_g} [\alpha( \varphi )] \\
         & =
        \int
        \left( \prod_{\ell_i \in CA^+(\varphi)} \alpha (  \ell_i ) \right)
        \left( \prod_{\ell_i \in CA^-(\varphi)} \alpha (  \neg \ell_i ) \right)
        \left( \prod_{\ell_i \in NA^+(\varphi)} \alpha (  \ell_i ) \right)
        \left( \prod_{\ell_i \in NA^-(\varphi)} \alpha (  \neg \ell_i ) \right)
        \nonumber
        \differential{P_\randomvariableset}
    \end{align}
    Plugging in the definition of the labeling function the last two products reduce to $1$ and we obtain for the remaining expression:
    \begin{align}
        \E_{\randomvariableset \sim  \dcpprogram_g} [\alpha( \varphi )]
        =
        \int
        \left( \prod_{i: \ell_i \in CA^+(\varphi)} \ive{  c_i(vars(\ell_i)) } \right)
        \left( \prod_{i: \ell_i \in CA^-(\varphi)} \ive{  \neg c_i(vars(\ell_i)) } \right)
        \differential{P_\randomvariableset}
        \label{eq:formula_prob_lable_equi}
    \end{align}
    Identifying now the set $\{ i: \ell_i \in CA^+(\varphi) \}$ with the set $\{i : \mu_i=\top   \} $ and the set  $\{ i: \ell_i \in CA^-(\varphi) \}$ with the set $\{i : \mu_i=\bot   \} $ proves the theorem, as this equates Equation~\ref{eq:program_prob_label_equi} and Equation~\ref{eq:formula_prob_lable_equi}
\end{proof}

\subsection{Proof of Proposition~\ref{prop:mcapproxconditional}}
\label{app:proof:mcapproxconditional}

\mcapproxconditional*

\begin{proof}
    First we write the conditional probability as a ratio of expected values invoking Theorem~\ref{thm:inference-by-expectation}, on which we then use Defintion~\ref{def:label_bool_formula}:
    \begin{align}
        P_\dcpprogram(\mu=q\mid\evidenceset = e)
         & = \frac{\E_{\randomvariableset \sim  \dcpprogram_g} [\alpha( \phi \wedge \phi_q)] }{\E_{\randomvariableset \sim  \dcpprogram_g} [\alpha( \phi )] }
        \\
         & =
        \frac{
            \E_{\randomvariableset \sim  \dcpprogram_g} \left[ \sum_{\varphi\in ENUM(\phi \land \phi_{q}) } \prod_{\ell \in \varphi} \alpha(\ell) \right]
        }
        {
            \E_{\randomvariableset \sim  \dcpprogram_g} \left[ \sum_{\varphi\in ENUM(\phi) } \prod_{\ell \in \varphi} \alpha(\ell)  \right]
        }
        \\
         & =
        \frac{
            \E_{\randomvariableset \sim  \dcpprogram_g} \left[ \sum_{\varphi\in ENUM(\phi \land \phi_{q}) } \alpha(\varphi) \right]
        }
        {
            \E_{\randomvariableset \sim  \dcpprogram_g} \left[ \sum_{\varphi\in ENUM(\phi) } \alpha(\varphi)  \right]
        }
    \end{align}

    We can now express the conditional probability in terms of the sampled values $\mathcal{S}$:
    \begin{align}
        P_\dcpprogram(\mu=q\mid\evidenceset=e)
           & =
        \frac{
        \lim_{ \lvert \mathcal{S} \rvert \rightarrow \infty} \nicefrac{1}{\lvert \mathcal{S} \rvert} \sum_{i=1}^{\lvert \mathcal{S} \rvert}  \sum_{\varphi\in ENUM(\phi \land \phi{q}) } \alpha^{(i)}(\varphi)
        }
        {
        \lim_{\lvert \mathcal{S} \rvert \rightarrow \infty} \nicefrac{1}{\lvert \mathcal{S} \rvert} \sum_{i=1}^{\lvert \mathcal{S} \rvert} \sum_{\varphi\in ENUM(\phi) } \alpha^{(i)}(\varphi)
        }                                                                   \\
           & \approx
        \frac{
        \sum_{i=1}^{\lvert \mathcal{S} \rvert}  \sum_{\varphi\in ENUM(\phi \land \phi_{q}) } \alpha^{(i)}(\varphi)
        }
        {
        \sum_{i=1}^{\lvert \mathcal{S} \rvert} \sum_{\varphi\in ENUM(\phi) } \alpha^{(i)}(\varphi)
        }, & \quad \lvert \mathcal{S} \rvert<\infty \label{eq:mc_prob_cond}
    \end{align}
\end{proof}

\subsection{Proof of Proposition~\ref{prop:alw_consistency}}
\label{app:proof:alw_consistency}

\alwconsistency*
\begin{proof}
    First we manipulate the expected value on the left hand side of Equation~\ref{eq:ALW}:
    \begin{align}
         & \E \left[ \sum_{\varphi \in ENUM(\phi)} \prod_{\ell \in \varphi}  \alpha \left(\ell \right) \bigg| \mathcal{S} \right]                                                           \\
         & =
        \lim_{ \lvert \mathcal{S} \rvert\rightarrow\infty} \sum_{i=1}^{ \lvert \mathcal{S} \rvert}  \sum_{\varphi \in ENUM(\phi)} \prod_{\ell \in \varphi}  \alpha^{(i)} \left(\ell \right) \\
         & =
        \lim_{ \lvert \mathcal{S} \rvert \rightarrow\infty} \sum_{i=1}^{ \lvert \mathcal{S} \rvert}  \sum_{\varphi \in ENUM(\phi)}
        \left(\prod_{\ell \in \varphi \setminus DI(\varphi)}  \alpha^{(i)} \left(\ell \right) \prod_{\ell \in DI(\varphi)}  \alpha^{(i)} \left(\ell \right)   \right)
    \end{align}
    As the samples are ancestral samples, they satisfy by construction the delta invervals appearing in the second product. This means that $\prod_{\ell \in DI(\varphi)}  \alpha^{(i)} \left(\ell \right) =1$ and that we can write the expected value in function of non delta interval atoms only:
    \begin{align}
        \E \left[ \sum_{\varphi\in ENUM(\phi)} \prod_{\ell \in \varphi}  \alpha \left(\ell \right) \bigg| \mathcal{S} \right]
        =
        \E \left[ \underbrace{\sum_{\varphi\in ENUM(\phi)} \prod_{\ell \in \varphi \setminus DI(\varphi)}  \alpha \left(\ell \right)}_{\coloneqq f(\phi)} \bigg| \mathcal{S} \right]  \label{eq:ALW_exp_simplified}
    \end{align}
    Let us now manipulate the expression in the numerator on the right hand side of Equation~\ref{eq:ALW}:
    \begin{align}
         & \bigoplus_{i=1}^{ \lvert \mathcal{S} \rvert}  \bigoplus_{\varphi \in ENUM(\phi)} \bigotimes_{\ell \in \varphi}  \alpha_{IALW}^{(i)} \left(\ell \right) \\
         & =
        \bigoplus_{i=1}^{ \lvert \mathcal{S} \rvert}  \bigoplus_{\varphi \in ENUM(\phi)}
        \underbrace{\left(  \bigotimes_{\ell \in \varphi \setminus DI(\varphi)}  \alpha_{IALW}^{(i)} \left(\ell \right) \right) }_{\left( r_\varphi^{(i)} , 0  \right)}  \otimes
        \underbrace{\left( \bigotimes_{\ell \in DI(\varphi)}  \alpha_{IALW}^{(i)} \left(\ell \right) \right) }_{\left( t_\varphi^{(i)} , m_\varphi^{(i)}  \right)} \label{eq:alw_proof_intermediate_1}
    \end{align}
    The expressions $\left( r_\varphi^{(i)} , 0  \right)$ and $\left( t_\varphi^{(i)} , m_\varphi^{(i)}  \right)$ denote infinitesimal numbers. Note how only the latter of the two picks up a non-zero second part.

    From the definition of the addition of two infinitesimal numbers we can see that only those infinitesimal numbers with the smallest integer in the second part {\em survive} the addition. This also means that in Equation~\ref{eq:alw_proof_intermediate_1} only those terms that have the smallest integer in their second part among all terms will contribute. We denote this smallest integer by:
    \begin{align}
        m^* =  \min_{ \substack{i\in \{ 1, \dots,  \lvert \mathcal{S} \rvert \} \\ \varphi\in ENUM(\phi) }} m^{(i)}_\varphi
    \end{align}
    We rewrite Equation~\ref{eq:alw_proof_intermediate_1} in function of $m^*$:
    \begin{align}
         & \bigoplus_{i=1}^{ \lvert \mathcal{S} \rvert}  \bigoplus_{\varphi \in ENUM(\phi)}
        \left(  \ive{m_{\varphi}^{(i)}{=} m^* } , 0  \right) \otimes
        \left( r_\varphi^{(i)} , 0  \right) \otimes
        \left( t_\varphi^{(i)} , m^*  \right)                                                                              \\
         & =
        \bigoplus_{i=1}^{ \lvert \mathcal{S} \rvert}  \bigoplus_{\varphi \in ENUM(\phi)}
        \left( \ive{m_{\varphi}^{(i)}{=} m^* } r_\varphi^{(i)}  t_\varphi^{(i)} , 0  \right) \otimes \left( 1, m^* \right) \\
         & =
        \left( \sum_{i=1}^{ \lvert \mathcal{S} \rvert}  \sum_{\varphi \in ENUM(\phi)}  \ive{m_{\varphi}^{(i)}{=} m^* } r_\varphi^{(i)}  t_\varphi^{(i)} , 0   \right)  \otimes \left( 1, m^* \right) \label{eq:ALW_num_simplified}
    \end{align}
    Similarly we get for the denominator in Equation~\ref{eq:ALW}:
    \begin{align}
         & \bigoplus_{i=1}^{ \lvert \mathcal{S} \rvert}  \bigoplus_{\varphi \in ENUM(\phi)} \bigotimes_{\ell \in  DI(\varphi)}  \alpha_{IALW}^{(i)} \left(\ell \right) \\
         & =
        \left( \sum_{i=1}^{ \lvert \mathcal{S} \rvert}  \sum_{\varphi\in ENUM(\phi)}  \ive{m_{\varphi}^{(i)}{=} m^* }  t_\varphi^{(i)} , 0   \right)  \otimes \left( 1, m^* \right)  \label{eq:ALW_den_simplified}
    \end{align}
    We can now plug Equation~\ref{eq:ALW_exp_simplified}, Equation~\ref{eq:ALW_num_simplified} and Equation~\ref{eq:ALW_den_simplified} back into Equation~\ref{eq:ALW} and obtain:

    \begin{align}
         & \phantom{{}\Leftrightarrow{}}  \left( \E \left[  f(\phi) | \mathcal{S} \right],  0 \right)
        =
        \left(
        \frac
        { \sum_{i=1}^{ \lvert \mathcal{S} \rvert} \sum_{\varphi \in ENUM(\phi)}  \ive{m_{\varphi}^{(i)}{=} m^* } r_\varphi^{(i)}  t_\varphi^{(i)} }
        { \sum_{i=1}^{ \lvert \mathcal{S} \rvert}  \sum_{\varphi\in ENUM(\phi)}  \ive{m_{\varphi}^{(i)}{=} m^* }  t_\varphi^{(i)}   }
        , 0
        \right) \quad ( \lvert \mathcal{S} \rvert\rightarrow \infty)                                  \\
         & \Leftrightarrow \E \left[  f(\phi) | \mathcal{S} \right]
        =
        \frac
        { \sum_{i=1}^{ \lvert \mathcal{S} \rvert}  \sum_{\varphi \in ENUM(\phi)}  \ive{m_{\varphi}^{(i)}{=} m^* } r_\varphi^{(i)}  t_\varphi^{(i)} }
        { \sum_{i=1}^{ \lvert \mathcal{S} \rvert}  \sum_{\varphi\in ENUM(\phi)}  \ive{m_{\varphi}^{(i)}{=} m^* }  t_\varphi^{(i)}   } \label{eq:llw}
    \end{align}
    We realize that $r_\varphi^{(i)}$ is actually $f(\phi)$ evaluated at the $i$-th sample at the instantiation $\varphi$ and evoke~\citep[Theorem 4.1]{wu2018discrete} to prove Equation~\ref{eq:llw}, which also finishes this proof.
\end{proof}

\subsection{Proof of Proposition~\ref{prop:alwapproximation}}
\label{app:proof:alwapproximation}

\alwapproximation*

\begin{proof}
    We start the proof by invoking Theorem~\ref{thm:inference-by-expectation}, which expresses the conditional probability as a ratio of expectations. In the numerator and the denominator we then write the label of the propositional logic formulas as the sum of the labels of the respective possible worlds.
    \begin{align}
        P_\dcpprogram(\mu=q|\evidenceset=e)  \label{eq:cond_amc_1}
         & =\frac{\E_{\randomvariableset \sim  \dcpprogram_g} [\alpha( \phi \wedge \phi_q)] }{\E_{\randomvariableset \sim  \dcpprogram_g} [\alpha( \phi )] } \\
         & =
        \frac
        {
            \E_{\randomvariableset \sim  \dcpprogram_g} \left[ \sum_{\varphi\in ENUM(\phi \land \phi_{q e}) } \alpha(\varphi) \right]
        }
        {
            \E_{\randomvariableset\sim  \dcpprogram_g} \left[ \sum_{\varphi\in ENUM(\phi) } \alpha(\varphi)  \right]
        }  \label{eq:cond_amc_2}
    \end{align}
    Next, we approximate the expectation using a set of ancestral samples $\mathcal{S}$, followed by pulling out the query from the summation index in the numerator:
    \begin{align}
         & \frac
        {
            \E_{\randomvariableset \sim  \dcpprogram_g} \left[ \sum_{\varphi\in ENUM(\phi \land \phi_{q}) } \alpha(\varphi) \right]
        }
        {
            \E_{\randomvariableset \sim  \dcpprogram_g} \left[ \sum_{\varphi\in ENUM(\phi) } \alpha(\varphi)  \right]
        }
        \\
         & \approx
        \frac
        {
            \E \left[ \sum_{\varphi\in ENUM(\phi \land \phi_{q})   } \alpha(\varphi) |  \mathcal{S}  \right]
        }
        {
            \E \left[ \sum_{\varphi\in ENUM(\phi) } \alpha(\varphi) |  \mathcal{S}   \right]
        }  \label{eq:cond_amc_approx}
        \\
         & \approx
        \frac{
            \E \left[ \sum_{\varphi\in ENUM(\phi)   } \ive{\varphi \models \phi_q} \alpha(\varphi) |  \mathcal{S}  \right]
        }
        {
            \E \left[ \sum_{\varphi\in ENUM(\phi) } \alpha(\varphi) |  \mathcal{S}   \right]
        }
        \label{eq:cond_amc_expanded}
    \end{align}
    We now rewrite the fraction of two real numbers in Equation~\ref{eq:cond_amc_expanded} as the fraction of two infinitesimal numbers and plug in the definition of the infinitesimal algebraic likelihood weight (cf. Definition~\ref{def:alw}):
    \begin{align}
         & \frac{
            \E \left[ \sum_{\varphi\in ENUM(\phi)   } \ive{\varphi \models \phi_q} \alpha(\varphi) |  \mathcal{S}  \right]
        }
        {
            \E \left[ \sum_{\varphi\in ENUM(\phi) } \alpha(\varphi) |  \mathcal{S}   \right]
        }
        \\
         & =
        \frac{
            \left(\E \left[ \sum_{\varphi\in ENUM(\phi)   } \ive{\varphi \models \phi_q} \alpha(\varphi) |  \mathcal{S}  \right], 0 \right)
        }
        {
            \left( \E \left[ \sum_{\varphi\in ENUM(\phi) } \alpha(\varphi) |  \mathcal{S}   \right],0 \right)
        }          \\
         & \approx
        \frac
        {
        \displaystyle \bigoplus_{i=1}^{\lvert \mathcal{S} \rvert}  \bigoplus_{\varphi\in ENUM(\phi)} \ive{\varphi \models \phi_q} \bigotimes_{\ell \in \varphi}   \alpha_{IALW}^{(i)} \left(\ell \right)}
        {
        \displaystyle  \cancel{ \bigoplus_{i=1}^{\lvert \mathcal{S} \rvert}  \bigoplus_{\varphi\in ENUM(\phi)} \bigotimes_{\ell \in  DI(\varphi)}  \alpha_{IALW}^{(i)} \left(\ell \right) }}
        \otimes
        \frac
        {
        \displaystyle \cancel{\bigoplus_{i=1}^{\lvert \mathcal{S} \rvert}  \bigoplus_{\varphi\in ENUM(\phi)} \bigotimes_{\ell \in  DI(\varphi)}  \alpha_{IALW}^{(i)} \left(\ell \right) } }
        {
        \displaystyle \bigoplus_{i=1}^{\lvert \mathcal{S} \rvert}  \bigoplus_{\varphi\in ENUM(\phi)} \bigotimes_{\ell \in \varphi}  \alpha_{IALW}^{(i)} \left(\ell \right)
        } \label{eq:cond_amc_canel}
    \end{align}

    In the last line the first factor corresponds to the numerator of the previous equation and the second factor corresponds to the reciprocal of the denominator.
    Note that the consistency of the infinitesimal algebraic likelihood weight of the numerator (first factor) is guaranteed by defining a new labeling function $\alpha^q(\varphi)\coloneqq \ive{\varphi \models \phi_q} \alpha(\varphi)$ and evoking Proposition~\ref{prop:alw_consistency} with $\alpha^q$.

    Finally, we push the expression $\ive{\varphi \models \phi_q}$ in the numerator back into the index of the summation ($\oplus$), which yields the following expression:
    \begin{align}
        \frac
        { \bigoplus_{i=1}^{\lvert \mathcal{S} \rvert}  \bigoplus_{\varphi\in ENUM(\phi \land \phi_{q})} \bigotimes_{\ell \in \varphi}  \alpha_{ALWI}^{(i)} \left(\ell \right)}
        { \bigoplus_{i=1}^{\lvert \mathcal{S} \rvert}  \bigoplus_{\varphi\in ENUM(\phi)} \bigotimes_{\ell \in \varphi}  \alpha_{IALW}^{(i)} \left(\ell \right)}
        \label{eq:cond_alw_last}
    \end{align}
    which proves the proposition. \end{proof}

\subsection{Proof of Proposition~\ref{prop:alwonddnnf}}
\label{app:proof:alwonddnnf}

\alwonddnnf*
\begin{proof}
    Algorithm~\ref{alg:prob_via_alw_kc} first compiles both propositional formulas into equivalent d-DNNF representations, cf. Lines~\ref{alg:prob_via_alw_kc:kc_qe} and~\ref{alg:prob_via_alw_kc:kc_e}.
    In Lines~\ref{alg:prob_via_alw_kc:alw_qe} and~\ref{alg:prob_via_alw_kc:alw_e} it then computes the (unnormalized) infinitesimal algebraic likelihood weight for both formulas by calling Algorithm~\ref{alg:unormalize_alw}. In other words, we compute the numerator and denominator in Equation~\ref{eq:cond_alw_last}. We observe that Algorithm~\ref{alg:unormalize_alw} evaluates a given d-DNNF formula for each conditioned topological sample using the \texttt{Eval} function, which evaluates a d-DDNF formula given a labeling function, cf. Algorithm~\ref{alg:eval}~\citep{kimmig2017algebraic}. The correctness of Algorithm~\ref{alg:prob_via_alw_kc} now hinges on the correctness of the \texttt{Eval} function, which was proven by~\citet{kimmig2017algebraic} for the evaluation of a d-DNNF formula using a semiring and labeling function pair that adheres to the properties described in Lemmas~\ref{lem:non_idem} to~\ref{lem:non_cons}. Effectively, Algorithm~\ref{alg:unormalize_alw} correctly computes the algebraic model count for each ancestral sample, adds up the results, and returns the unnormalized algebraic model count to Algorithm~\ref{alg:prob_via_alw_kc}. Line~\ref{alg:prob_via_alw_kc:conditional} finally return the ratio of the two unnormalized algebraic likelihood weights, which corresponds to the conditional probability $P_\dcpprogram(\mu=q|\evidenceset=e)$, as proven in Equations~\ref{eq:cond_amc_1} to~\ref{eq:cond_alw_last}.
\end{proof}

\

\end{document}